\documentclass{article}

\usepackage{microtype}
\usepackage{graphicx}
\usepackage{subcaption}
\usepackage{booktabs} 
\usepackage{tabularx}
\usepackage{titletoc}

\usepackage{amsthm}
\usepackage{thmtools}
\usepackage{hyperref}

\usepackage[accepted]{icml2026}

\usepackage{amsmath}
\usepackage{amssymb}
\usepackage{mathtools}
\usepackage{dsfont}

\usepackage{makecell}
\usepackage{multirow}

\newcommand{\QA}{\mathbf{a}}
\newcommand{\Pnull}{{\Pcal_0^n}}
\newcommand{\aug}{\mathrm{aug}}
\newcommand{\tran}{\mathtt{tran}}
\newcommand{\Pdelay}{{P_{\mathrm{delay}}}}
\newcommand{\Ddelay}{{\Delta_{\max}}}
\newcommand{\Dmax}{D_{\mathrm{max}}}
\newcommand{\clip}{\mathrm{clip}}
\newcommand{\cat}[2]{{#1}\!+\!{#2}}
\newcommand{\len}{\texttt{len}}
\newcommand{\Zmap}{{\phi_{\Zcal}}}
\newcommand{\effspace}{{\Zcal_{\mathrm{eff}}}}
\newcommand{\MVP}{\texttt{MVP-Est}}

\newcommand{\Qest}{\texttt{Q-Estimate}}
\newcommand{\Peff}{{P_{\mathrm{eff}}}}
\newcommand{\hatPeff}{{\hat{P}_{\mathrm{eff}}}}
\newcommand{\simplex}{\triangle}
\newcommand{\Policy}{{\Pi_{\mathrm{delay}}}}
\newcommand{\KL}[2]{\mathrm{KL}({#1} \parallel {#2})}

\newcommand{\Sleaf}{\Lcal}
\newcommand{\tDelta}{\widetilde{\Delta}}
\newcommand{\ellfinal}{\ell^*}

\usepackage{tikz}
\usetikzlibrary{matrix,arrows.meta,positioning,calc,fit,backgrounds,decorations,automata}

\usepackage[capitalize,noabbrev]{cleveref}

\theoremstyle{definition}
\newtheorem{remark}{Remark}
\theoremstyle{plain}

\usepackage[textsize=tiny]{todonotes}

\usepackage{abbreviations}
\usepackage{placeins}
\icmltitlerunning{Minimax Optimal Strategy for Delayed Observations in Online Reinforcement Learning}

\begin{document}

\twocolumn[
  \icmltitle{Minimax Optimal Strategy for Delayed Observations in\\Online Reinforcement Learning}

  \icmlsetsymbol{equal}{*}

  \begin{icmlauthorlist}
    \icmlauthor{Harin Lee}{uw}
    \icmlauthor{Kevin Jamieson}{uw}
  \end{icmlauthorlist}

  \icmlaffiliation{uw}{Paul G. Allen School of Computer Science \& Engineering, University of Washington, Seattle, Washington}

  \icmlcorrespondingauthor{Harin Lee}{leeharin@cs.washington.edu}

  \icmlkeywords{Reinforcement learning, Delayed observation, Regret}

  \vskip 0.3in
]

\printAffiliationsAndNotice{}  

\begin{abstract}
We study reinforcement learning with delayed state observation, where the agent observes the current state after some random number of time steps.
We propose an algorithm that combines the augmentation method and the upper confidence bound approach.
For tabular Markov decision processes (MDPs), we derive a regret bound of $\tilde{\mathcal{O}}(H \sqrt{\Dmax SAK})$, where $S$ and $A$ are the cardinalities of the state and action spaces, $H$ is the time horizon, $K$ is the number of episodes, and $\Dmax$ is the maximum length of the delay.
We also provide a matching lower bound up to logarithmic factors, showing the optimality of our approach.
Our analytical framework formulates this problem as a special case of a broader class of MDPs, where their transition dynamics decompose into a known component and an unknown but structured component.
We establish general results for this abstract setting, which may be of independent interest.
\end{abstract}

\section{Introduction}
Reinforcement learning (RL) is a framework where machines learn to make optimal decisions under the guidance of rewards~\citep{sutton1998}.
The learning process involves interaction between the agent and the environment.
The agent observes the state of the environment and makes an action, which affects the reward the agent receives and the transition probability of the next state.
While there have been extensive theoretical advances and empirical successes in RL, one barrier to applying RL to real-world tasks is the delayed accessibility of the current state.
For instance, domains like robotics and autonomous driving encounter inevitable delays in observations due to sensor processing, data transmission, or computational overhead.
In online advertising, a sequence of ads must be planned without observing the users' internal orientation toward the products.
This is problematic because standard methods in RL assume and heavily rely on observing the current state of the environment.
With delays in state observation, the agent is forced to plan a sequence of actions in advance.
As the number of possible action sequences increases exponentially with the length of the delay, even a minor delay may substantially complicate the problem.

While various empirical methods have been proposed to overcome this challenge~\citep{agarwal2021blind,kim2023belief,wu2024boosting,wu2025directly}, there is currently a large gap in the theoretical understanding of this setting.
\citet{walsh2009learning,chen2023efficient} show that obtaining polynomial sample complexity is possible despite the exponential growth of action sequences, but their bounds appear quite loose, and it is not clear what the optimal dependency is.
Moreover, it has not yet been established whether the length of the delay should affect the sample complexity.
In this work, we close these gaps by providing matching upper and lower regret bounds for this setup, providing a better understanding of learning MDPs with delayed observations.

We summarize our main contributions as follows:

\begin{itemize}
    \item We propose an algorithm for learning delayed MDPs that constructs an equivalent augmented MDP without delays and then applies an upper confidence bound (UCB) method.
    Several key distinctions in the augmentation process from previous works enable a better algorithmic design and a clear explanation of the theoretical guarantees.
    
    \item Under the tabular MDP setting, we derive a regret bound of $\tilde{\Ocal}(H \sqrt{\Dmax  SAK})$. This result improves upon the previously best known bound by \citet{chen2023efficient} by a factor of $H^{1/2}\Dmax^2$.
    
    \item We provide a matching lower bound of $\tilde{\Omega}(H \sqrt{\Dmax SAK})$, highlighting the optimality of our algorithm and analysis.
    To the best of our knowledge, this is the first result that rigorously shows that shorter delays (relative to the horizon $H$) decrease the statistical complexity of the problem.

    \item We identify the core properties of delayed MDPs and abstract them to define a more general setting of MDPs whose transition dynamics decompose into a fully known part and an unknown but structured part.
    We provide theoretical results for this setting, which naturally extends to the delayed observation setting as a special case.
    We emphasize that our results apply to any problem domain that fits this abstraction.
\end{itemize}

\subsection{Related Work}

\paragraph{RL under Tabular MDPs.} A variety of theoretically guaranteed methods for learning tabular MDPs have been developed and analyzed~\citep{azar2017minimax,jin2018q,zanette2019tighter,dann2019policy,zhang2021reinforcement,tiapkin2022dirichlet,zhang2024settling,lee2025minimax}.
These works achieve the minimax optimal regret bound of $\tilde{\Ocal}(H \sqrt{SAK})$ for the time-homogeneous setting or $\tilde{\Ocal}(H^{3/2} \sqrt{SAK})$ for the time-inhomogeneous setting, where the lower bound is provided by~\citet{jaksch2010near,domingues2021episodic}.

\paragraph{MDPs with Delayed Observations.}
Attention on MDPs with delayed observations dates back more than twenty years~\citep{bander1999markov,katsikopoulos2003markov,walsh2009learning}, with a heavier focus on characterizing the properties of delayed MDPs, such as defining the equivalent augmented MDP and computational hardness, rather than learning optimal policies.
\citet{walsh2009learning} describe a polynomial $(\varepsilon, \delta)$-PAC bound of $\tilde{\Ocal}( \frac{S^2 A}{\varepsilon^3 (1 - \gamma)^6})$ for $\gamma$-discounted MDPs with constant delay.
\\
A recent work by \citet{chen2023efficient} study the learning problem of delayed MDPs using modern techniques.
They assume stochastic delays with unknown distributions, imposing that the delay distribution must be learned along with the transition distribution.
They achieve a regret bound of $\tilde{\Ocal}(H^{3/2}\Dmax^{5/2}\sqrt{SAK})$ for the time-homogeneous finite-horizon setting.
They also provide a regret lower bound of $\Omega(\sqrt{H\Dmax SAK})$, leaving a gap of $\tilde{\Ocal}(H\Dmax^2)$ between the two bounds.

\paragraph{Delayed feedback in bandits and RL.}
A related yet distinct line of work studies efficient learning under delayed feedback in bandits~\citep{zhou2019learning,vernade2020linear,gael2020stochastic,lancewicki2021stochastic,masoudian2022best,howson2023delayed} and RL~\citep{howson2023optimism,lancewicki2023delay,mondal2023reinforcement,kuang2023posterior,yang2023reduction}.
These works assume that the data generated by a policy reach the learner with a delay, but the executor of the policy has access to the current state.
This fact constitutes a clear distinction from our setting, where the delay occurs in the execution process.

\section{Preliminaries}

\subsection{Notations}

For a measurable set $\Zcal$, we denote the set of all probability distributions over $\Zcal$ by $\simplex(\Zcal)$.
For a function of the form $P : \Xcal \rightarrow \simplex(\Zcal)$, we frequently write $P_{x}$ as shorthand notation for $P(x)$.
For a distribution $P \in \simplex(\Zcal)$ and a function $V : \Zcal \rightarrow \RR$ over the same space, we denote the expectation of $V$ under $P$ by $PV:= \sum_{z \in \Zcal}P(z) V(z)$, and the variance of $V$ under $P$ by $\VV(P, V) := \sum_{z \in \Zcal} P(z) (V(z) - PV)^2$.
\\
For two integers $m \le n$, let $[m:n] := \{m, \ldots, n\}$, and if $n \ge 1$, we define $[n] := [1:n]$.
For real numbers $x, a, b$, let $a \land b := \min\{a, b\}$ and $\clip(x;a , b) := \min\{\max\{x, a\}, b\}$.
\\
We frequently deal with a queue of actions, which we denote by $\QA = (a_{i}, a_{i+1}, \ldots, a_{j}) \in \cup_{n} \Acal^n$.
For an action queue $\QA = (a_1, a_2, \ldots, a_n) \in \Acal^n$, we denote the first element by $\QA_1 := a_1$, the queue starting from the second element by $\QA_{2:} = (a_2, \ldots, a_n)$, its length by $\len(\QA) := n$, and concatenation by $\cat{\QA}{a} := (a_1, \ldots, a_n, a)$ for $a \in \Acal$.

\subsection{Markov Decision Process}

We consider a finite-horizon episodic MDP $\Mcal = (\Scal, \Acal, P, r, H)$ with a time-homogeneous transition kernel.
$\Scal$ is the state space, $\Acal$ is the action space, $P : \Scal \times \Acal \rightarrow \simplex(\Scal)$ is the transition kernel, $r : \Scal \times \Acal \rightarrow \RR$ is the reward function, and $H$ is the time horizon of an episode.
We assume that the state and action spaces have finite cardinalities $S$ and $A$, respectively.
We also assume that the reward function satisfies $r(s, a) \in [0, 1]$ and is known to the agent\footnote{The analysis in this paper remains valid when the assumption $r(s, a) \in [0, 1]$ is relaxed to $0 \le V_h^{\pi}(s) \le H$ for all $h, s$, and $\pi$, which is a generalization noted in~\citet{lee2025minimax}. Also, an unknown reward function can be learned by standard methods~\citep{zanette2019tighter,zhang2024settling}, which are applicable to the delayed observation setting.}.
An agent interacts with the MDP for $K$ episodes.
At the $k$-th episode, through time steps $h = 1, \ldots, H$, the agent observes a state $s_h^k \in \Scal$, takes an action $a_h^k \in \Acal$, receives a reward $r(s_h^k, a_h^k)$, and the next state is sampled as $s_{h+1}^k \sim P(s_h^k, a_h^k)$.
A policy $\pi = \{ \pi_h\}_{h=1}^H$ is a sequence of functions $\pi_h : \Scal \rightarrow \Acal$ that determines the action given the current state.
The value function of a policy is defined as $V_{h}^{\pi}(s) := \EE_{\pi(\cdot \mid s_h = s)} [ \sum_{j=h}^H r(s_j, a_j)]$.
The optimal policy is defined as the policy $\pi^*$ that satisfies $V_h^{\pi^*}(s) = V_h^*(s) := \max_{\pi} V_h^{\pi}(s)$ for all $h$ and $s$.
The goal of the agent is to minimize the cumulative regret over $K$ episodes, which is defined as $\sum_{k=1}^K(V_1^*(s_1^k) - V_1^{\pi^k}(s_1^k))$.

We define the \emph{branching factor} $B$ as an upper bound on the support sizes of $P(s, a)$, that is, the number of candidate states one can reach by taking the state-action pair.
Formally, we have $| \{s' \in \Scal \mid P_{s, a}(s')  > 0\} | \le B \le S$ for all $(s, a)$.
We assume $B$ is known to the agent, which is always possible since the agent may set $B = S$.

\section{Problem Setting}

We consider stochastic delayed MDPs (SDMDPs)~\citep{katsikopoulos2003markov}, which are MDPs with delayed state observation.
An SDMDP consists of an MDP $\Mcal$, a delay distribution $\Pdelay : \Scal \times \Acal \rightarrow \simplex([-1:\Ddelay])$, and an upper bound on the delay length $\Dmax$.
The main difference in this setting is that the current state is revealed to the agent after some time steps.
As a result, the agent must take an action while the current state is still hidden, then observes the effect of the action after the delay.
Specifically, after taking the $h$-th action $a_h$, the following next state $s_{h+1}$ is revealed to the agent after $D_h$ time steps, that is, at the beginning of the $h + 1 + D_h$-th time step.
If $h+1 + D_h \ge H+1$, then the state is revealed after the end of the episode.
We assume that the delay $D_h$ is determined in the following way.

\begin{assumption}
\label{assm:delay}
    For each time step $h$, an inter-arrival time $\Delta_h$ is sampled from the distribution $\Pdelay(s_h, a_h)$ independently of all other randomness including the sampling of $s_{h+1}$.
    Starting from $D_0 = 0$, which implies that the initial state $s_1$ is always immediately observed, the delay $D_h$ is determined incrementally as $D_h = \clip(D_{h-1} + \Delta_h; 0, \Dmax)$, where $\Dmax$ is a known constant that upper-bounds $D_h$.
\end{assumption}

\begin{remark}
    In the previous work~\citep{katsikopoulos2003markov,chen2023efficient}, the inter-arrival time $\Delta_h$ is assumed to be distributed over $[0 : \Ddelay]$, implying that at most one state is revealed per time step.
    Generalizing their assumption, we allow an inter-arrival time of $-1$, in which case multiple consequent states are revealed at the same time step.
\end{remark}

\paragraph{Constant Delayed MDP (CDMDP).} 
The case where the length of the delay $D_h$ is equal to $\Dmax$ for all $h \in [H]$ is called the \emph{constant delayed MDP (CDMDP)}~\citep{walsh2009learning}.
\cref{assm:delay} recovers this setting by adding an auxiliary initial state $s_{\mathrm{start}}$ with a fixed inter-arrival time $\Dmax$ and setting all the other inter-arrival times to $0$, making the total length of the delay $D_h = \sum_{j=1}^h \Delta_j$ equal to $\Dmax$.

\newcolumntype{Y}{>{\centering\arraybackslash}X}

\begin{table}[tb]
    \caption{Example trajectory of a CDMDP with $H = 6$ and constant delay $2$.
    Each column denotes the time step $h$, the last observed state $s_{t_h}$, the action queue $\QA$, the inter-arrival time counter $\tDelta_h$, the inter-arrival time $\Delta_{t_h}$, and the total delay $D_{t_h}$. Note that a new state is observed when $\tDelta_h$ is equal to $\Delta_{t_h}$, or equivalently, when the length of the queue $\len(\QA)$ is equal to $D_{t_h}$.}
    \centering
    \begin{tabularx}{\linewidth}{Y| Y c Y Y Y}
        \toprule
        $h$
        & $s_{t_h}$
        & $\QA$
        & $\tDelta_h$      
        & $\Delta_{t_h}$ 
        & $D_{t_h}$ \\
        \midrule
        1   & $s_1$ & $\emptyset$   & 0 & 2 & 2 \\
        2   & $s_1$ & $(a_1)$       & 1 & - & - \\
        3   & $s_1$ & $(a_1, a_2)$  & 2 & - & - \\
        4   & $s_2$ & $(a_2, a_3)$  & 0 & 0 & 2 \\
        5   & $s_3$ & $(a_3, a_4)$  & 0 & 0 & 2 \\
        6   & $s_4$ & $(a_4, a_5)$  & 0 & 0 & 2 \\
        \midrule
        End & $s_7$ & $\emptyset$   &  - & - & - \\
        \bottomrule
    \end{tabularx}
    \label{tab:constant delay}
\end{table}

\begin{table}[tb]
    \caption{Example trajectory of an SDMDP with $H = 6$ with one possible realization of delays.
    Refer to \cref{tab:constant delay} for the description of each column.
    States $s_3$ and $s_4$ are revealed at the same time step $h = 6$ because $\Delta_3 = -1$.}
    \centering
    \begin{tabularx}{\linewidth}{Y| Y c Y Y Y}
        \toprule
        $h$
        & $s_{t_h}$
        & $\QA$
        & $\tDelta_h$ 
        & $\Delta_{t_h}$ 
        & $D_{t_h}$ \\
        \midrule
        1 & $s_1$ & $\emptyset$       & 0 & 1 & 1 \\
        2 & $s_1$ & $(a_1)$           & 1 & - & - \\
        3 & $s_2$ & $(a_2)$           & 0 & 2 & 3 \\
        4 & $s_2$ & $(a_2, a_3)$      & 1 & - & - \\
        5 & $s_2$ & $(a_2, a_3, a_4)$ & 2 & - & - \\
        \multirow{2}{*}{6} & $s_3$ & $(a_3, a_4, a_5)$ & (-1) & -1  & 2 \\
         & $s_4$ & $(a_4, a_5)$     & 0 & 1 & 3\\
        \midrule
        End & $s_7$ & $\emptyset$ & - &- & - \\
        \bottomrule
    \end{tabularx}
    \label{tab:random_delay}
\end{table}

We consider both cases of SDMDPs where $\Pdelay$ is known and $\Pdelay$ is unknown.
When unknown, following \citet{chen2023efficient}, we allow the agent to learn the distribution by assuming that the original values of $\Delta_1, \ldots, \Delta_H$ are revealed at the end of the episode, which is necessary since their values may be lost when the delay is clipped or the episode terminates.
We emphasize that access to $\Delta_h$ during the episode is not allowed.

With observational delays, the policy can no longer be a mapping from a state to an action since the current state is not observable.
Instead, the policy must consider three elements.
First, we let $t_h$ be the time index of the last state observable at time step $h$.
If a state is revealed at time $h$, it is state $s_{t_h}$, otherwise we have $t_h = t_{h-1}$.
Next, let $\QA = (a_{t_h}, a_{t_h+1}, \ldots, a_{h-1})$ be the queue of unresolved actions.
It is straightforward that the policy must consider $s_{t_h}$ and $\QA$.
In addition, the policy must take the time step that revealed $s_{t_h}$ into account, since it affects when the next state $s_{t_h+1}$ is revealed under \cref{assm:delay}.
We let $\tDelta_h = h - (t_h + D_{t_h-1})$ be the number of time steps elapsed since a state was revealed; recall that $t_h + D_{t_h-1}$ is the time step at which $s_{t_h}$ is revealed.
For example, in the constant $\Dmax$ delay setting, we have $\tDelta_1 = 0, \tDelta_2 = 1, \tDelta_3 = 2,\dots, \tDelta_{\Dmax+1} = \Dmax$, and $\tDelta_{\Dmax+i} = 0$ for all $i > 1$ since we observe a new state at each time step, just delayed by $\Dmax$.
It can be shown that the information of these three elements is sufficient to make optimal actions in an SDMDP~\citep{katsikopoulos2003markov}.
Consider the following policy class:

\begin{align*}
    \Policy = \left\{ \{\pi_h\}_{h=1}^H: \Scal \times \cup_{D=0}^{\Dmax} \Acal^D \times [0:\Ddelay] \rightarrow \Acal  \right\}
    \, .
\end{align*}
A policy $\pi \in \Policy$ chooses an action $a_h = \pi_h(s_{t_h}, \QA, \tDelta)$ at time step $h$.
For a given SDMDP $(\Mcal, \Pdelay, \Dmax)$, we can also define the value function of a policy as $V_1^{\pi}(s) = \EE_{\pi(\cdot \mid s_1 = s)}[\sum_{h=1}^H r(s_h, a_h)]$, and the optimal policy $\pi^*_{\mathrm{delay}}$ that satisfies $V_1^{\pi^*_{\mathrm{delay}}}(s) = \max_{\pi \in \Policy} V_1^{\pi}(s)$ for all $s \in \Scal$.
The cumulative regret that the agent must minimize is redefined as $\sum_{k=1}^K (V_1^{\pi^*_{\mathrm{delay}}}(s_1^k) - V_1^{\pi^k}(s_1^k) )$.

\paragraph{Connection to Partially Observable MDPs (POMDPs).}
POMDPs are a framework in which latent states emit random observations rather than being directly observed~\citep{papadimitriou1987complexity,jin2020sample}. 
Having limited observability, the delayed observation model may be viewed as a special case of this framework.
Specifically, an SDMDP can be modeled as a POMDP by treating a sequence of the last observed state followed by the unobserved states as a single latent state, with only the first state in the sequence being observable.
One benign condition for POMDPs that may have a connection to the delayed MDPs is \emph{multi-step revealing}, which posits that the latent state can be statistically inferred from a number of following observations~\citep{liu2022partially,liu2023optimistic,zhang2025statistical}.
While the delayed MDPs fall into this subclass of POMDPs, techniques for this setting incur regret exponential in the delay length.
This fact indicates that reduction to generic POMDPs fails to exploit the structure induced by the delays.
See \citet{walsh2009learning,chen2023efficient} for further discussion on why methods for POMDPs are not applicable to delayed MDPs.

\section{Algorithm}

In this section, we describe our algorithm for learning SDMDPs.
There are two main steps.
First, we construct an augmented MDP that is equivalent to the given SDMDP.
Then, we learn the augmented MDP by applying standard techniques with certain modifications that exploit the properties of the augmented MDP.
Each step is explained in the following subsections.

\subsection{Augmented Markov Decision Process}
\label{sec:augmented MDP}
Given an SDMDP $(\Mcal, \Pdelay, \Dmax)$, we construct the following equivalent augmented MDP (without delay) $\Mcal_{\aug} = (\Scal_{\aug}, \Acal, P_{\aug}, r_{\aug}, H)$.
The intuition is to treat the tuple of the last observed state $s_{t_h}$, the action queue $\QA_h = (a_{t_h}, \ldots, a_{h-1})$, and the number of time steps without new observation $\tDelta_h$ as an augmented state.
We note that this construction is, in principle, equivalent to the ones in~\citet{katsikopoulos2003markov,chen2023efficient}, but we introduce two types of intermediate states that allow us to better explain the algorithm and the analysis.
\\
We define the augmented state space as
$\Scal_{\aug} := \Scal \times \cup_{D = 0}^{\Dmax} \Acal^D \times \Dcal  \times [H + 1]$, where $\Dcal := [-1: \Ddelay] \cup \{\tran\}$.
We note that $\Scal_{\aug}$ is of size exponential in $\Dmax$, and that any computational implications of this blow-up are apparently unavoidable (see Section~\ref{sec:comp_hardness} for a discussion).
Consider an augmented state $s_{\aug} = (s_{t_h}, \QA, \tDelta_h, h) \in \Scal_{\aug}$.
This state corresponds to the case where the last observed state is $s_{t_h}$, the unresolved action queue is $\QA$, the last action in the queue is $a_{h-1}$ (or possibly empty if $t_h = h$), and $\tDelta_h$ time steps have elapsed since $s_{t_h}$ was revealed, but the cases $\tDelta_h = -1$ and $\tDelta_h = \tran$ convey different meanings.
If the initial state of the SDMDP is $s_1$, then the initial state of the augmented MDP is $(s_1, \emptyset, 0, 1)$.
Depending on $\tDelta$, the properties and the transitions of the state are determined.
We explain them using three categories.

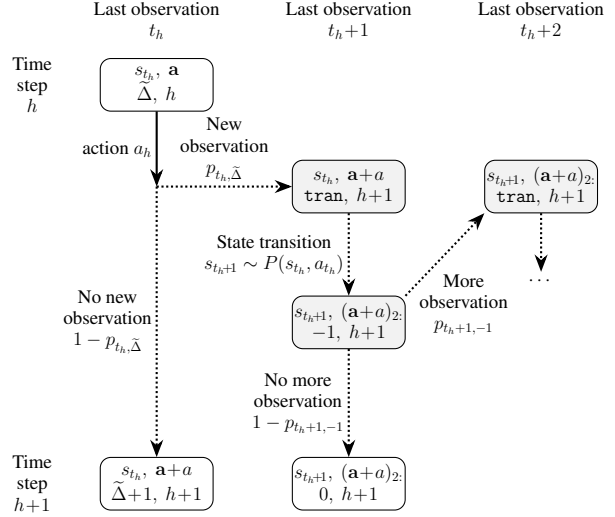
\begin{figure}[tb]
\centering
\begin{tikzpicture}[
  scale=0.5, transform shape,
  >=Stealth,
  state/.style={draw, rounded corners,
    minimum width=30mm, minimum height=14mm,inner sep=0pt},
  interstate/.style={draw, fill=gray!10, rounded corners,
    minimum width=30mm, minimum height=14mm,inner sep=0pt},
  statelabel/.style={align=center,font=\Large,inner sep=0pt},
  edge/.style={->,thick,font=\Large,align=center},
  dottededge/.style={->,thick,densely dotted,font=\Large,align=center}
]

\node[state] (A) {};

\coordinate (J) at ($(A.south)+(0,-20mm)$);
\node[state, below=72mm of J] (B) {};

\node[interstate, right=36mm of J] (C) {};
\node[interstate, below=22mm of C] (D) {};
\node[interstate, right=21mm of C] (E) {};
\node[state] (F) at (D |- B) {};

\node[statelabel] at (A) {$s_{t_h},\ \QA$\\$\tDelta,\ h$};
\node[statelabel] at (B) {$s_{t_h},\ \cat{\QA}{a_h}$\\$\tDelta\!+\!1,\ h\!+\!1$};

\node[statelabel] at (C) {$s_{t_h},\ \cat{\QA}{a_h}$\\$\tran,\ h\!+\!1$};
\node[statelabel] at (D) {$s_{t_h\!+\!1},\ (\cat{\QA}{a_h})_{2:}$\\$-1,\ h\!+\!1$};

\node[statelabel] at (E) {$s_{t_h\!+\!1},\ (\cat{\QA}{a_h})_{2:}$\\$\tran,\ h\!+\!1$};
\node[statelabel] at (F) {$s_{t_h\!+\!1},\ (\cat{\QA}{a_h})_{2:}$\\$0,\ h\!+\!1$};

\coordinate (Tline) at ($(A.north)+(0,10mm)$);
\node[font=\Large,align=center] at (A.center |- Tline) {Last observation\\$t_h$};
\node[font=\Large,align=center] at (C.center |- Tline) {Last observation\\$t_h\!+\!1$};
\node[font=\Large,align=center] at (E.center |- Tline) {Last observation\\$t_h\!+\!2$};

\node[font=\Large,align=center] at ($(A.west)+(-18mm,0)$) {Time\\step\\$h$};
\node[font=\Large,align=center] at ($(B.west)+(-18mm,0)$) {Time\\step\\$h\!+\!1$};

\draw[edge] (A.south) -- node[left] {action $a_h$} (J);

\draw[dottededge] (J) -- node[midway, above] {New\\observation\\$p_{t_h,\tDelta}$} (C.west);
\draw[dottededge] (J) -- node[midway, left] {No new\\observation\\$1-p_{t_h,\tDelta}$} (B.north);

\draw[dottededge] (C.south) -- node[midway, left] {State transition\\$s_{t_h\!+\!1} \sim P(s_{t_h}, a_{t_h})$} (D.north);

\draw[dottededge] (D.north east) -- node[midway, below, yshift=-4mm, xshift=5mm]
  {More\\observation\\$p_{t_h+1, -1}$} (E.south west);

\draw[dottededge] (D.south) -- node[midway, left]
  {No more\\observation\\$1-p_{t_h+1, -1}$} (F.north);

\node[font=\Large] (Cont) at ($(E.south)+(0,-18mm)$) {$\cdots$};
\draw[dottededge] (E.south) -- (Cont);

\end{tikzpicture}
\caption{Illustration of the transition of the augmented MDP after taking one action. Gray states indicates intermediate states that have no actions. 
Straight lines represents the agent's action and dotted lines represents augmented state transitions.
$p_{t_h,\tDelta}$ is shorthand for $P_{\tran}(s_{t_h}, a_{t_h}, \tDelta)$ and $p_{t_h+1, -1}$ is shorthand for $P_{\tran}(s_{t_h+1}, a_{t_h+1}, -1)$, where they show the probability of the transition.}
\label{fig:augmented transition}
\end{figure}

\textbf{Category 1: $\tDelta_h \in [0:\Ddelay]$.}
These augmented states are standard ones where $\tDelta_h$ corresponds to the number of time steps elapsed since $s_{t_h}$ was revealed.
The augmented-state transition at this state determines whether there will be a new observation or not.
When the agent takes an action, the next augmented state is either one of the two:
$(s_{t_h}, \cat{\QA}{a}, \tran, h+1)$, which corresponds to the case where the next state will be revealed, and $(s_{t_h}, \cat{\QA}{a}, \tDelta_h + 1, h+1)$, which corresponds to not having a new observation. 
We note that even if a new observation is determined to be revealed, it is not yet revealed at this stage.
We define $P_{\tran} : \Scal \times \Acal \times [-1 : \Dmax] \rightarrow [0, 1]$ as $P_{\tran}(s, a, \tDelta) := P_{\Delta \sim \Pdelay(s, a)} ( \Delta = \tDelta \mid \Delta \ge \tDelta) = \frac{\Pdelay_{s, a}(\tDelta)}{\sum_{\Delta \ge \tDelta} \Pdelay_{s, a}(\Delta)}$, which represents the probability of transitioning to the first case.
Although $P_{\tran}$ requires additional information of whether the delay is clipped or the episode has terminated, we omit it here for simplicity.
\\
\textbf{Category 2: $\tDelta_h = \tran$.}
An augmented state with $\tDelta_h = \tran$ is an intermediate state that has no actions, but only state transition.
The agent enters this state when the next state is determined to be revealed, and the transition of this augmented state samples the next state.
Denoting $a_{t_h} = \QA_1$, the next augmented state is $(s', \QA_{2:}, -1, h)$ with $s' \sim P_{s_{t_h}, a_{t_h}}$.
A reward of $r(s_{t_h}, a_{t_h})$ is received together with this transition.
\\
\textbf{Category 3: $\tDelta_h = -1$.}
An augmented state with $\tDelta_h = -1$ is another type of intermediate state.
The agent enters this state immediately after $s_{t_h}$ is revealed, and whether there will be further observations within the same time step is determined at this augmented state.
The next augmented state is either $(s_{t_h}, \QA, \tran, h)$ with probability $P_{\tran}(s_{t_h}, a_{t_h}, -1)$, which corresponds to the case where another state will be revealed, or $(s_{t_h}, \QA, 0, h)$ with the remaining probability.

\subsection{Algorithm}
\label{sec:algorithms}

\begin{algorithm*}[tb]
    \caption{MVP-Delayed (Informal)}
    \begin{algorithmic}[1]
        \STATE Construct augmented MDP $\Mcal_{\mathrm{aug}}$ as in \cref{sec:augmented MDP}
        \STATE Define $\ellfinal(D, b) := ( D \log A + \log \frac{64 H (D+1)^2 \Ddelay SA K}{\delta}) \land ( b \log \frac{32 H b \Ddelay SAK}{\delta})$
        \FOR{$k =1, 2, \ldots, K$}
        \STATE Compute value estimates as follows for all $s \in \Scal$, $\QA \in \cup_{D=0}^{\Dmax} \Acal^D$, $\tDelta \in [-1:\Ddelay]$, $h = H, \ldots, 1$:
        \STATE $V^k(s, \QA, \tran, h) \gets \MVP\left(r(s, \QA_1), \hat{P}_{s, \QA_1}, V^k(\cdot, \QA_{2:}, -1, h), N^k(s, \QA_1), \ellfinal(\len(\QA), B) \right)$
        \STATE $Q^k((s, \QA, \tDelta, h), a) \gets \begin{cases}
                (P_{\tran})_{s, \QA'_1, \tDelta} V_\tran + (1 - (P_{\tran})_{s, \QA'_1, \tDelta}) V_{\mathrm{delay}} & (\text{known } \, \Pdelay)
                \\
                \MVP(0, (\hat{P}_\tran^k)_{s, \QA'_1, \tDelta}, (V_\tran, V_{\mathrm{delay}}), N^k(s, \QA'_1, \tDelta), \ellfinal(\len(\QA), 2)) & (\text{unknown } \, \Pdelay)
            \end{cases}$
        \STATE \quad \text{where } $\QA' = \cat{\QA}{a}, (V_\tran, V_\mathrm{delay}) = (V^k(s, \QA', \tran, h+1), V^k(s, \QA', \tDelta + 1, h+1))$
        \STATE $V^k(s, \QA, \tDelta, h) \gets \max_{a \in \Acal} Q^k((s, \QA, \tDelta, h), a)$
        \STATE $\pi_h^k(s, \QA, \tDelta, h) \gets \argmax_{a \in \Acal} Q^k((s, \QA, \tDelta, h), a)$
        \STATE Execute $\pi^k$ and collect data $\{ s_h^k, a_h^k, \Delta_h^k\}_h$
        \ENDFOR
    \end{algorithmic}
    \label{alg:MVP delay informal}
\end{algorithm*}

\begin{algorithm}[tb]
    \caption{\MVP}
    \begin{algorithmic}[1]
        \STATE \textbf{Input:} $r \in \RR$, $\hat{P} \in \simplex(\Scal)$, $V \in \RR^\Scal$, $N \in \NN$, $\ell \in \RR$
        \STATE Set constants $c_1 \gets \frac{20}{3}$, $c_2 \gets \frac{400}{9}$
        \IF{$N \le 1$}
            \STATE \textbf{Return} $H$
        \ELSE
            \STATE \textbf{Return} $\left(r + \hat{P}V + c_1 \sqrt{\frac{\VV(\hat{P}, V) \ell}{N}} + c_2 \frac{H \ell}{N} \right) \land H$
        \ENDIF
    \end{algorithmic}
\label{alg:MVP}
\end{algorithm}

After constructing the augmented MDP, we apply standard techniques for common MDPs with an extension using certain structure of the augmented MDP.
For the tabular case, we choose $\texttt{MVP}$~\citep{zhang2021reinforcement} as a base algorithm, which is a standard UCBVI-based algorithm with Bernstein-type bonuses.
See \cref{alg:MVP} for its optimistic estimation rule.
We note that any other UCBVI-based algorithm also works.

As the augmented state space has exponential size in $\Dmax$, directly applying $\texttt{MVP}$ would lead to exponential regret.
The augmented MDP can be learned much faster by noting that many of the transition probabilities are shared based on $P$ or $\Pdelay$.
For example, an observed transition of $(s, \QA, \tran, h)$ to $(s', \QA_{2:}, -1, h)$ gives information about $P_{s, a}(s')$ that can be used to estimate the transition of $(s, \QA', \tran, h)$ for different $\QA'$ with $\QA_1' = \QA_1$.
The core idea is to estimate $P$ and $\Pdelay$ instead of $P_{\aug}$.
Hence, instead of storing visit counts of augmented state-action pairs, it is sufficient to store the visit counts of the original state-action pairs as $N^k(s, a) = \sum_{i=1}^{k-1} \sum_{h=1}^H \ind\{ (s_h^i, a_h^i) = (s, a)\}$ and store an estimate $\hat{P}_{s, a}^k$ of $P_{s, a}$.
In the case where $\Pdelay$ is unknown, we additionally store $N^k(s, a, \tDelta) = \sum_{i=1}^{k-1} \sum_{h=1}^H \ind\{ (s_h^i, a_h^i) = (s, a), \Delta_h^i \ge \tDelta \}$, which is the number of times that any augmented state of the form $(s, \cat{a}{\QA}, \tDelta, h)$ is visited.
Note that this number is different from the number of $(s, a, \tDelta)$ in the dataset.
We also estimate $P_\tran$ by $\hat{P}_{\tran}^k$, defined as 
\begin{align*}
    \hat{P}_{\tran}^k(s, a, \tDelta) := \frac{N^k(s, a, \tDelta) - N^k(s, a, \tDelta + 1)}{ N^k(s, a, \tDelta)}
    \, ,
\end{align*}
where the numerator is the number of times the $\tDelta$ is sampled from $\Pdelay(s, a)$, and the denominator is the number of samples from $\Pdelay(s, a)$ that are greater than or equal to $\tDelta$.
Using these estimates, we run $\texttt{MVP}$ under a correct update order.
Another difference from the standard $\texttt{MVP}$ is that the log factor is increased to a roughly $D \land b$ factor, where $D$ is the length of the current action queue, and $b$ is an upper bound on the branching factor of the current augmented state-action pair.
An informal description of the algorithm is shown in \cref{alg:MVP delay informal}.
We present the full algorithm that includes the update order and exception handling in \cref{appx:full algo}.

\section{Theoretical Guarantees}
 In this section, we provide regret upper bound results for \cref{alg:MVP delay informal}.
 We define $\iota := \log \frac{H SAK}{\delta}$ for logarithmic factors.

\label{sec:upper bound}
\begin{theorem}
\label{thm:known delay}
    Suppose the delay distribution $\Pdelay$ is known. With probability at least $1 - \delta$, \cref{alg:MVP delay informal} achieves the regret bound of $\Ocal(H\sqrt{(\Dmax \land B) SAK} \iota + H B S A \iota^2)$.
\end{theorem}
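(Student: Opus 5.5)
The plan is to follow the standard MVP analysis of \citet{zhang2021reinforcement} on the augmented MDP $\Mcal_{\aug}$, while tracking visit counts only at the level of original pairs $(s,a)$ rather than augmented states. Since $\Pdelay$ is known, only the transitions out of $\tran$ states are learned, and each uses the shared estimate $\hat{P}_{s,\QA_1}$. In particular, the Bellman backups at Category 1 and Category 3 states are exact.

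\textbf{Step 1 (concentration via union bound over queues).}
The value vector $V^k(\cdot,\QA_{2:},-1,h)$ plugged into $\hat P_{s,a}$ depends on the data. To control $(\hat P_{s,a}-P_{s,a})V$, I would use two union bounds and take whichever is smaller, matching $\ellfinal(D,B)$.
\begin{itemize}
\item Union over the at most $A^{D}$ queues of length $D$. For a fixed queue the relevant function is deterministic given past-independent structure, so one can apply an empirical Bernstein inequality. Alternatively, apply it to the optimal value $V^*(\cdot,\QA_{2:},-1,h)$, which is fixed. This gives log factor $D\log A+\log(\cdot)$.
\item A Bernstein bound on $\|\hat P_{s,a}-P_{s,a}\|$ restricted to the support of size $B$. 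This uses the per-coordinate bound $\sqrt{P(s')(1-P(s'))\iota/N}$ summed against centered $V$, giving log factor $B\log(\cdot)$.
\end{itemize}
Under this event, monotonicity of the MVP operator (Lemma of \citet{zhang2021reinforcement}) gives optimism $V^k\ge V^*$ at every augmented state, by backward induction in the update order. At $\tran$ states this uses Bernstein. At Category 1 and 3 states the weights $P_\tran$ are exact, so the step is trivial.

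\textbf{Step 2 (regret decomposition).}
Write $V^k_1-V^{\pi^k}_1$ and unroll along the realized augmented trajectory. Only $\tran$ transitions contribute bonus and estimation error terms. Each episode contains exactly one $\tran$ transition per original time step $t$ whose state is revealed within the horizon. So the sum is over original visits $(s^k_t,a^k_t)$, at most $H$ per episode, with counts $N^k(s,a)$ and log factor $\ell\le(\Dmax\land B)\iota$. Standard manipulations then bound the regret by
\[
\textstyle \sum_{k,t}\Big(\sqrt{\VV(P_{s_t,a_t},V^{\pi^k}_{\cdot})\,\ell/N^k}+H B\ell/N^k\Big)
\]
plus martingale terms. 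The $B$ in the lower-order term comes from the usual cross term $(\hat P-P)(V^k-V^*)$, which is handled coordinatewise via Cauchy–Schwarz over the support.

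\textbf{Step 3 (variance summation).}
Apply Cauchy–Schwarz to get $\sqrt{\sum 1/N^k\cdot SA}\cdot\sqrt{\sum_{k,t}\VV\,\ell}$. Then use the law of total variance on $\Mcal_{\aug}$: the sum of conditional variances of the return along an episode is at most $O(H^2)$. Since rewards lie in $[0,1]$ over $H$ steps and intermediate states add no reward variance beyond what is accounted for, the total is $O(H^2K)$ up to a martingale correction. Combined with $\sum 1/N\lesssim SA\iota$, this yields $H\sqrt{SAK\ell}\sqrt{\iota}=H\sqrt{(\Dmax\land B)SAK}\,\iota$, and the lower-order terms give $HBSA\iota^2$.

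\textbf{Main obstacle.}
I expect the hardest step to be Step 1 in its $D\log A$ form, with the difference $(\hat P-P)(V^k-V^*)$ that appears in Step 2. Here $V^k(\cdot,\QA_{2:},-1,h)$ is data-dependent, so a queue-wise union bound on a fixed function does not directly apply. I would first try the approach of \citet{zhang2021reinforcement}, which handles this without a $\sqrt{S}$ loss through the recursive variance argument on $(V^k-V^*)^{2^i}$. The union would be taken over queues $\QA$ in place of next states, giving the $\ellfinal$ factor; the $\land B$ alternative uses the covering/support argument.

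A second subtle point is that the variance decomposition must treat a $\tran$ transition and the subsequent delay-resolution transitions as one step. Otherwise the $H^2$ total-variance bound could pick up an extra $\Dmax$ factor.
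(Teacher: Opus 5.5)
Your plan is correct and follows the paper's proof in substance. It runs MVP on the augmented MDP with counts shared over $(s,\QA_1)$, and gets the $\ellfinal$ factor as the minimum of a union bound over queues (applied only to the fixed function $V^*$) and a support-size-$B$ bound. It proves optimism via monotonicity, sends the data-dependent cross term $(\hat{P}-P)(V^k-V^*)$ to the $B$-support bound, and closes with a total-variance bound of $H^2$ and pigeonholing over the $SA$ original pairs. The paper merely packages this as its general partially-known-dynamics theorem with $\Xcal=\Scal$ and $\effspace=\Scal\times(\Acal\cup\{\emptyset\})$.

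Neither obstacle you flag is real. The cross term never needs the $D\log A$ form: it costs $HB\iota/N$ per visit, not $HB\ell/N$, and this is exactly the $HBSA\iota^2$ term. The total variance over all augmented transitions telescopes to at most $H^2$ however many intermediate states occur, since values lie in $[0,H]$ and the total reward is at most $H$.
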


\begin{theorem}
\label{thm:unknown delay}
    Suppose the delay distribution $\Pdelay$ is unknown. 
    With probability at least $1 - \delta$, \cref{alg:MVP delay informal} achieves the regret bound of 
    \begin{align*}
        \Ocal\big( & H \sqrt{ ( \Dmax \land B)SAK} \iota + H \sqrt{\Ddelay SAK} \iota 
        \\
        & \quad +  H (B + \Ddelay) SA \iota^2 \big)
        \, .
    \end{align*}
\end{theorem}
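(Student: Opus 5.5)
We prove \cref{thm:unknown delay} by reducing it to the analysis behind \cref{thm:known delay}. The regret decomposes into two parts: the error from estimating the state kernel $P$, which appears only at $\tran$-states, and the error from estimating $P_\tran$, which appears only at Category 1 and Category 3 states.

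\textbf{Step 1: optimism.} Fix an augmented state of Category 1 or 3 with queue length $D$. Its transition is a two-point distribution over $(V_\tran, V_{\mathrm{delay}})$. Both values are determined by data-dependent value functions that range over at most $A^{D}$ relevant continuations. We therefore apply an empirical Bernstein inequality to the Bernoulli estimate $\hat P^k_\tran(s,a,\tDelta)$, which is built from $N^k(s,a,\tDelta)$ i.i.d. indicators $\ind\{\Delta=\tDelta\}$ conditioned on $\Delta\ge\tDelta$. We take a union bound over the at most $A^D$ value vectors, or alternatively over an $\epsilon$-net of $[0,H]^2$, since the support size is $b=2$. This gives exactly the log factor $\ellfinal(\len(\QA),2)=O(\iota)$.

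For $\tran$-states we reuse the argument of \cref{thm:known delay} with $\ellfinal(\len(\QA),B)$. The monotonicity of the \MVP{} update then gives $V^k\ge V^*$ by backward induction in the update order of \cref{appx:full algo}, which handles intermediate states within the same time step $h$.

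\textbf{Step 2: regret decomposition.} We telescope $V^k-V^{\pi^k}$ along the realized augmented trajectory and collect three kinds of terms:
\begin{itemize}
\item bonus and error terms at $\tran$-transitions, indexed by original pairs $(s_h,a_h)$;
\item bonus and error terms at $P_\tran$-transitions, indexed by $(s,a,\tDelta)$;
\item martingale differences.
\end{itemize}
The first group is bounded verbatim as in \cref{thm:known delay}, giving $H\sqrt{(\Dmax\land B)SAK}\iota+HBSA\iota^2$.

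For the second group, the bonus at a Category 1 or 3 state is roughly
\[
\sqrt{\VV(P_\tran,(V_\tran,V_{\mathrm{delay}}))\iota/N^k(s,a,\tDelta)}+H\iota/N^k(s,a,\tDelta).
\]
Each time step $h$ of each episode visits at most two such states, and each visit increments exactly one counter $N(s_j,a_j,\tDelta)$ for the appropriate $j$. Indeed, $N^k(s,a,\tDelta)$ counts samples with $\Delta\ge\tDelta$, and each sample $\Delta_j$ contributes to counters $\tDelta=-1,\dots,\Delta_j$ at distinct visits. Summing $1/\sqrt{N}$ over the $SA(\Ddelay+2)$ counters, with each counter totaling at most $HK$, gives $\sqrt{\Ddelay SA\cdot n}$-type pigeonhole bounds. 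The lower-order terms give $H\Ddelay SA\iota^2$.

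\textbf{Step 3: the obstacle.} The main obstacle is avoiding an extra $\sqrt H$ in the leading term. The naive Cauchy--Schwarz step pairs the sum of variances with $HK$ visits. Instead we apply Cauchy--Schwarz against the total variance summed over all augmented transitions in an episode. By the law of total variance, this sum is at most $O(H^2)$ per episode in expectation, plus a martingale correction, since rewards are in $[0,1]$ and the total return is bounded by $H$. This variance budget is shared between $\tran$-transitions and $P_\tran$-transitions, because both are steps of the same augmented Markov chain.

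The result is a bound of $\sqrt{(\Ddelay SA\iota)\cdot H^2K}$ for the $P_\tran$ part. Together with the replacement of $V^{\pi^k}$ variances by $V^k$ variances, which is handled by the standard recursive-variance argument of \texttt{MVP}, the error terms are absorbed. Adding the three contributions yields the stated bound with probability $1-\delta$ after a union bound over the failure events.
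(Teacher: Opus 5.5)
Your outline is correct, and its skeleton matches the paper's. Statistics are pooled over $(s,a,\tran)$, with support at most $B$ and log factor $\ellfinal(D,B)\lesssim(\Dmax\land B)\iota$, and over $(s,a,\tDelta)$, which are two-point kernels with log factor $O(\iota)$. Variance is pooled over the whole augmented chain to give an $O(H^2)$ budget per episode. Cauchy--Schwarz plus a pigeonhole over the $SA$ and $O(\Ddelay SA)$ counters then yields the three terms.

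The differences are in packaging and in the technical engine. The paper never analyzes the delayed MDP directly. It proves the bound $\Ocal\bigl(H\sqrt{K\log(KH)\sum_{z}\ellfinal(z)}+H\log(KH)\sum_{z}B(z)\ell_2(z)\bigr)$ for general MDPs with partially known dynamics. It then instantiates this with $\Xcal=\Scal\times\Dcal$, $\Ycal=\cup_{D}\Acal^D\times[H+1]$ and $\effspace=\Scal\times\Acal\times\Dcal\times\{\emptyset,0,1\}$. The last label makes the clipped and terminal reveal tests deterministic; you never mention clipping or termination, and you should.

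The paper also does not use realized-trajectory telescoping with MVP's recursive-variance argument. It works along expected trajectories, with three ingredients:
\begin{itemize}
\item a surplus bound whose extra term $\frac{1}{2H}\bigl(P(V^k_{h+1}-V^*_{h+1})^2-(V^k_h-V^*_h)^2\bigr)$ telescopes away;
\item variances of $V^*$ rather than $V^{\pi^k}$, which sum to at most $H^2$ because $V^*_h-PV^*_{h+1}\ge r\ge 0$;
\item capped recursions $U^k_h,\tilde U^k_h$, which Lemma 15 of Lee et al.\ (2025) converts into realized sums of $\ellfinal(z)/N^k(z)$.
\end{itemize}
This avoids the Freedman-type control of martingale differences (plain Azuma would leave an $H^{3/2}\sqrt{K}$ term) and the higher-moment recursion. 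Your route has to carry both out in full.

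Three details in your write-up need repair, though none is fatal.
\begin{enumerate}
\item A single time step can visit more than two Category 1/3 states. With $\Delta=-1$, several states are revealed at once, and each is followed by its own Category 3 state. What holds, and suffices, is $H$ Category 1 visits plus at most one Category 3 visit per revealed state, i.e.\ $O(H)$ per episode. The pigeonhole also needs only that visits to $z$ never outnumber increments of $N(z)$; the counters are updated at episode end from the revealed $\Delta_j$, for every $\tDelta\le\Delta_j$.
\item In Step 1 you cannot take a union bound over vectors built from the data-dependent $V^k$. The $A^D$ union bound is over the fixed vectors $V^*(\cdot,y)$ indexed by queue and time. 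For the two-point kernels no union bound or net is needed at all. The empirical and true laws differ only through $\hat p-p$, so $(\hat P-P)V=(\hat p-p)(V_{\tran}-V_{\mathrm{delay}})$ for every $V$. This is the paper's all-functions concentration with $B(z)=2$.
\item Counts are frozen within an episode, while the same $z=(s,a,\tDelta)$ can recur $\Theta(H)$ times. So the step $1/N^k(z)\le 2/N^k_h(z)$ fails in episodes where a counter more than doubles. There are at most $O(|\effspace|\log(HK))$ such episodes, each costing $O(H)$, and you must treat them separately. The paper absorbs them through the indicator $\ind\{2N^k(z)>N^k_h(z)\}$ and the cap at $1$ in $U^k_h$.
\end{enumerate}
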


\paragraph{Discussion of \cref{thm:known delay,thm:unknown delay}.}
In both known and unknown delay distribution cases, we derive the regret upper bound of $\tilde{\Ocal}(H \sqrt{\Dmax SAK})$.
Compared to the previous bound of $\tilde{\Ocal}(H^{3/2}\Dmax^{5/2}\sqrt{SAK})$ by \citet{chen2023efficient}, we improve the bound by a factor of $H^{1/2}\Dmax^2$, significantly improving the $\Dmax$-dependency.
While the $H^{1/2}$ factor improvement comes from using variance-dependent bonus terms, the $\Dmax^2$ factor improvement comes from our novel analysis.

\cref{thm:known delay,thm:unknown delay} demonstrate that $\Dmax$-dependency can be replaced by a known branching factor $B$ when it is smaller.
This fact implies that the performance degradation caused by lengthening the delay is not indefinite.
This is because the $\Dmax$-dependency arises from taking the union bound over exponentially many states in $\Dmax$, but it can be replaced by taking the union bound over all bounded functions with domain size $B$.

In the unknown $\Pdelay$ case, we note that the second term may be larger than the first term only when $B < \Ddelay \le \Dmax$ holds since $\Ddelay \le \Dmax$.
In this case, we incur a slightly larger regret as there are $\Ddelay SA$ many values of $P_{\tran}(s, a, \tDelta)$ to learn.

\begin{remark}
\label{rmk:first order}
    The $\Dmax$ dependency in \cref{thm:known delay,thm:unknown delay} can be improved when the actual lengths of the delays are shorter than $\Dmax$.
    Defining $\Dmax(s, a)$ as the maximum possible length of delay $D_h$ starting from the state-action pair $(s_h, a_h) = (s, a)$, the $(\Dmax \land B) SA$ factor can be reduced to $\sum_{(s, a) \in \Scal \times \Acal} (\Dmax(s, a) \land B)$, even if the agent is not aware of individual $\Dmax(s, a)$ values.
    In the same way, the $\Ddelay SA$ factor in \cref{thm:unknown delay} may be reduced to $\sum_{(s, a)} \Ddelay(s, a)$, where $\Ddelay(s, a)$ is the unknown maximum inter-arrival time of $(s, a)$.
    As a result, even if the agent only knows crude upper bounds of $\Dmax$ and $\Ddelay$, the regret bound remains the same.
\end{remark}

\section{Regret Lower Bound}
\label{sec:lower bound}

In this section, we provide a regret lower bound result for SDMDPs, showing that our approach is minimax optimal up to logarithmic factors.

\begin{theorem}[Lower bound result]
\label{thm:regret lower bound}
    Let $\tilde{D} := \min\{ \Dmax, \frac{H}{4}, \frac{B}{2}, \frac{S}{4} - 1\} = \Ocal(\Dmax \land B) $.
    Suppose $\tilde{D} \ge c$ for some absolute constant $c$, $H \ge 8 + 4 \log_A S$, and $K \ge \frac{1}{24} \tilde{D}SA \log \tilde{D}$.
    Then, for any algorithm, there exists a CDMDP instance with $S$ states, $A$ actions, time horizon $H$, and delay length $\Dmax$ with branching factor at most $B$ such that the expected regret of the algorithm for $K$ episodes is at least $\Omega \bigl( H\sqrt{\tilde{D} SAK (\log \tilde{D})^{-1}} \bigr)$.
\end{theorem}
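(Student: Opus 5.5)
We reduce to a family of hard multi-armed-bandit-like instances, following the standard construction of \citet{domingues2021episodic} for time-homogeneous episodic MDPs, but with the delay used to inflate the effective number of ``arms'' by a factor of $\tilde{D}$. The target is $\Omega(H\sqrt{\tilde{D}SAK/\log\tilde{D}})$, i.e.\ the usual $H\sqrt{SAK}$ multiplied by $\sqrt{\tilde{D}/\log \tilde{D}}$.

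\textbf{Construction.} We use $\Theta(S)$ states split into three parts. A navigation part is an $A$-ary tree of depth $\log_A(S/4)$ that routes the agent deterministically to one of roughly $S/(4\tilde{D})$ ``leaf'' states; this uses the budget $H\ge 8+4\log_A S$. From each leaf, we add a hidden chain gadget of $\tilde{D}$ states on which the constant delay $\Dmax\ge\tilde{D}$ acts. When the agent reaches a leaf, it must commit to $\tilde{D}$ actions before seeing any outcome. Each action moves the agent along the chain to one of at most $2\le B$ successor states according to a uniformly random ``bit''. The agent's committed sequence of length $\tilde{D}$ therefore induces $\tilde{D}$ independent binary-like experiments. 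This gives $\tilde{D}\cdot S/\tilde{D}\cdot A \approx SA$ leaf-action pairs per chain position, and effectively $\tilde{D}SA$ distinct (state, action-queue-prefix) contexts whose transitions must each be learned. We pick the gadget so that exactly one context among them has an $\varepsilon$-biased transition into a rewarding absorbing ``good'' state versus a ``bad'' state. The good state yields reward $1$ for the remaining $\ge H/2$ steps, which is ensured by $\tilde{D}\le H/4$. The branching bound $\tilde{D}\le B/2$ is used so that the delayed branching within the gadget spreads over $2\tilde{D}\le B$ states, and $\tilde{D}\le S/4-1$ ensures the state budget fits.

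\textbf{Reduction to bandits.} We argue that any delayed policy induces, per episode, a choice of one ``arm'' among $M=\Theta(\tilde{D}SA)$ arms, namely a leaf together with the position/action at which the informative transition is tested. The regret is then at least $\Omega(H)$ times the bandit regret with Bernoulli gaps $\varepsilon$. The standard KL/change-of-measure argument of \citet{domingues2021episodic} with $\varepsilon\asymp\sqrt{M/K}$ then gives $\Omega(H\sqrt{MK})$ when $K\gtrsim M$, which is the role of the condition $K \ge \frac{1}{24}\tilde{D}SA\log\tilde{D}$.

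\textbf{Main obstacle.} The hard step is making the $\tilde{D}$ factor genuine, meaning the agent cannot test many contexts in parallel within one episode from the same committed queue. A single action queue of length $\tilde{D}$ does probe $\tilde{D}$ hidden transitions simultaneously, so a naive count would lose the factor. I would handle this by letting the reward depend on only one hidden transition, chosen by the environment's random path. Each episode then effectively probes one random context, but the agent controls only $\log$-many bits of which one through its queue. This is where the $\log\tilde{D}$ loss enters: the information per episode about the identity of the good context is bounded by $O(\varepsilon^2\log\tilde{D})$ rather than $O(\varepsilon^2)$. I would first bound the KL divergence between the null instance and each perturbed instance, summed over contexts, by $O(\varepsilon^2 K\log\tilde{D})$. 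Combining this with Pinsker's inequality and averaging over the $M$ perturbed instances yields the stated bound.
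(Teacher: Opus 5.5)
There is a genuine gap: the family you describe cannot certify the $\sqrt{\tilde{D}}$ factor. First, in a time-homogeneous MDP the only unknowns are the $SA$ kernels $P(s,a)$. The next hidden state depends only on the current hidden state and action, not on the action-queue prefix. So there are no $\tilde{D}SA$ independent ``contexts'' in which a single $\varepsilon$-bias can be planted. Your own count of chain state-action pairs, $(S/\tilde{D})\cdot\tilde{D}\cdot A$, is $\Theta(SA)$, and the paper's upper bound relies on exactly this sharing. Second, as you describe it, each step of your gadget has at most two successors, so every instance has branching factor $2$. (The branching factor is a per-step support bound, not the number of states reachable over $\tilde{D}$ hidden steps.) The paper's algorithm run with $B=2$ then has regret $\tilde{O}(H\sqrt{SAK})$ on your whole family, which is below the target once $\tilde{D}$ exceeds polylogarithmic factors. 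This is why $\tilde{D}\le B/2$ is needed: the extra dimension must live \emph{inside} each leaf-action distribution. The paper spreads that distribution over $2\tilde{D}$ CodeMDP states $(i,b)$ with probabilities $(1\pm\theta_i)/(2\tilde{D})$. Third, making the reward depend on a single hidden transition does not limit information. All hidden states are eventually revealed, so every visited pair yields a full sample regardless of how rewards are wired. Your claimed $O(\varepsilon^2\log\tilde{D})$ per-episode information bound therefore has no mechanism behind it.

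The missing idea is that the delay turns a leaf-action value into a \emph{nonsmooth functional} of its hidden-state distribution. The agent commits to $a_1,\dots,a_{\tilde{D}}$ before learning which $(i,b)$ it landed in, so the value is $\sum_i(\tilde{H}-i)\max\{P(i,0),P(i,1)\}$, a weighted $\|\theta\|_1$. Simple perturbations of $\theta$ do not give the full factor. Changing one coordinate moves the value by only $O(\varepsilon H/\tilde{D})$. Symmetric priors on, say, $\{\pm\varepsilon\}$ versus $\{\pm2\varepsilon\}$ are told apart through their second moments after about $\sqrt{\tilde{D}}/\varepsilon^2$ samples, which gives only a $\tilde{D}^{1/4}$ factor. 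The paper instead uses two fuzzy hypotheses whose coordinate priors match the first $k\asymp\log\tilde{D}$ moments while their means of $|\theta_i|$ differ by at least order $\varepsilon/k$. A $\chi^2$ computation then shows that $\tilde{D}\log\tilde{D}/(2\varepsilon^2)$ visits keep the KL below $1/8$, while the values are separated by $\varepsilon H/(192\log\tilde{D})$. This polynomial approximation of $|x|$, not ``log-many bits of control'', is the source of the $\log\tilde{D}$ loss. The result is then embedded in a Domingues-style tree over $\Theta(S)$ leaves, via a mixture-KL bound for the stopped trajectory and Pinsker's inequality.
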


As CDMDPs (constant delay) are special cases of SDMDPs (stochastic delay), the theorem applies to SDMDPs as well.

A lower bound of $\Omega(H\sqrt{SAK})$ can be immediately derived using the hard instance and the analysis for the standard MDP setting by \citet{domingues2021episodic}, as they naturally apply to SDMDPs.
From this bound, we show that an additional factor of $\Dmax^{1/2}$ is necessary in the delayed setting, implying that as the delay becomes longer, the problem becomes statistically harder.
Together with \cref{thm:known delay}, these results establish a tight minimax regret bound up to logarithmic factors, characterizing that the optimal dependence on delay is $\Dmax^{1/2}$.
\cref{thm:unknown delay} also meets this lower bound when $\Ddelay \le B$.
We note that the regret lower bound of $\Omega( \sqrt{H\Dmax SAK})$ by \citet{chen2023efficient}\footnote{Proposition 4.3 in \citet{chen2023efficient} states a regret lower bound of $\Omega(H\sqrt{\Dmax SAK})$ for the time-\emph{inhomogeneous} setting, which is translated to $\Omega(\sqrt{H\Dmax SAK})$ for the time-homogeneous setting we consider.} is worse than the standard bound since $\Dmax \le H$.

For the proof of \cref{thm:regret lower bound}, we design a novel structure named \textit{CodeMDP}, which is a CDMDP with delay length $D$ whose learning complexity increases with $D$.
The agent is supposed to find a correct sequence of $D$ actions that maximizes the reward.
Using the structure of the CodeMDP, we reduce an $\ell_1$-norm estimation problem of a $d$-dimensional vector to solving a CodeMDP with unknown transitions. 
We then establish the following lower bound for estimating the $\ell_1$-norm of a vector, which may be of independent interest.

\begin{proposition}
\label{prop:l1 estimation}
    Let $\thetab \in [-1, 1]^d$.
    At each time step $t$, suppose an index $I_t \sim \Unif([d])$ and a Bernoulli random variable $X_t \sim B(\frac{1 + \theta_{I_t}}{2})$ are revealed.
    Then, the sample complexity for estimating $\frac{1}{d}\| \thetab \|_1$ up to some additive error $\varepsilon$ is at least $\Omega\left( \frac{ d}{\varepsilon^2 \log d} \right)$.
\end{proposition}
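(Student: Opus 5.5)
We prove the bound with the classical two-point (Le Cam) method applied to two \emph{mixtures} of parameters. The mixtures are built by moment matching (fuzzy hypotheses), as in the lower bounds for estimating non-smooth functionals such as the $\ell_1$ norm (Lepski--Nemirovski--Spokoiny; Cai--Low).

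\textbf{Step 1: the reduction.} Each observation $(I_t, X_t)$ is a sample from the distribution on $[d] \times \{0,1\}$ given by
\[
p_\thetab(i, x) = \frac{1}{d}\cdot\frac{1 + (2x-1)\theta_i}{2} \, .
\]
We use Poissonization: replace $n$ samples by $\mathrm{Poi}(n)$ samples. This costs only constant factors. After Poissonization the coordinates decouple. Coordinate $i$ yields independent counts $(N_{i,1}, N_{i,0}) \sim \mathrm{Poi}\bigl(\frac{n}{2d}(1+\theta_i)\bigr) \otimes \mathrm{Poi}\bigl(\frac{n}{2d}(1-\theta_i)\bigr)$, and the coordinates are independent of each other. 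The target is $\frac{1}{d}\sum_i |\theta_i|$.

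\textbf{Step 2: the fuzzy hypotheses.} Let $\lambda := n/(2d)$. Fix $L \asymp \log d$ and a scale $\eta \in (0,1]$. Choose two probability measures $\mu_0, \mu_1$ on $[-\eta, \eta]$ with the following properties:
\begin{itemize}
\item they are symmetric;
\item their first $L$ moments agree;
\item $\int |t|\, d\mu_1 - \int |t|\, d\mu_0 \ge c\,\eta/L$.
\end{itemize}
Such measures exist by duality with the best degree-$L$ polynomial approximation of $|t|$ on $[-1,1]$, whose error is $\asymp 1/L$ (Bernstein), after rescaling to $[-\eta,\eta]$. Under hypothesis $j$, draw $\theta_i \stackrel{\text{iid}}{\sim} \mu_j$. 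Then $\frac{1}{d}\|\thetab\|_1$ concentrates around $\int|t|\,d\mu_j$, with fluctuations of order $\eta/\sqrt d$. This is negligible relative to $\eta/L$ when $d$ is large. Restricted to high-probability events, the two hypotheses therefore separate the functional by $2\varepsilon \asymp \eta/L$.

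\textbf{Step 3: bounding the divergence.} For a single coordinate, the mixture distribution of $(N_1,N_0)$ under $\mu_j$ is the expectation over $\theta\sim\mu_j$ of a product of two Poisson laws. Its probability mass function is
\[
e^{-2\lambda}\frac{\lambda^{n_1+n_0}}{n_1!\,n_0!}\,\mathbb{E}_{\mu_j}\!\left[(1+\theta)^{n_1}(1-\theta)^{n_0}\right] \, .
\]
We expand $(1+\theta)^{n_1}(1-\theta)^{n_0}$ as a polynomial in $\theta$. Because the first $L$ moments match, all terms up to degree $L$ cancel. For the remaining terms we use the standard estimate for Poisson mixtures with matching moments, e.g.\ via the $\chi^2$ bound for a Poisson mixture versus a fixed Poisson, as in Wu--Yang:
\[
\mathrm{TV} \lesssim \left(\frac{e\,\lambda\,\eta^2}{L}\right)^{L/2} \quad\text{or}\quad \left(\frac{C\lambda\eta}{L}\right)^{L} \, .
\]
Set $\eta \asymp L/\lambda$ in the first form, i.e.\ $\eta^2 \asymp L/\lambda$. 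The per-coordinate total variation is then at most $d^{-2}$ provided $L \ge c\log d$. Since the coordinates are independent, tensorization bounds the total variation of the full product by $d\cdot d^{-2} = o(1)$. Le Cam's lemma then shows that no estimator can attain error $\varepsilon \asymp \eta/L$ with constant probability.

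\textbf{Step 4: solving for $n$.} From Step 3, $\eta^2 \asymp L/\lambda$, so
\[
\varepsilon^2 \asymp \frac{\eta^2}{L^2} \asymp \frac{1}{L\lambda} = \frac{2d}{nL} \, .
\]
Hence $n \gtrsim \frac{d}{\varepsilon^2 \log d}$. We must also ensure $\eta \le 1$, which requires $\lambda \gtrsim \log d$. This matches the regime in which the statement is meaningful ($\varepsilon$ small).

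\textbf{Main obstacle.} The main obstacle is Step 3. We must carry out the moment-matching divergence bound for the two-sided Bernoulli/Poisson observation model and verify the exact $\eta^2\lambda/L$ scaling, which is what produces the $\log d$ (rather than $\log^2 d$) loss. Our first approach is to rewrite the pair $(N_1,N_0)$ as total count $N_1+N_0 \sim \mathrm{Poi}(2\lambda)$, which is independent of $\theta$, together with a conditional $\mathrm{Bin}\bigl(N, \frac{1+\theta}{2}\bigr)$. This reduces the problem to the well-studied binomial mixture with matched moments, for which the $\chi^2$ bound $\sum_{k>L}\binom{N}{k}\eta^{2k}$-type estimates apply. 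Finally, we average over $N$ using Poisson concentration.
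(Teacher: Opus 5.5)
Your proposal is correct and takes essentially the paper's route. Both use Cai--Low moment-matching priors rescaled to $\eta \asymp \varepsilon\log d$, and both condition each coordinate on its total count $N$, so that the per-coordinate $\chi^2$ collapses, via the orthogonality of the coefficients of $(1-x)^{a}(1+x)^{N-a}$, to $\sum_{l>L}\binom{N}{l}\eta^{2l}$. Both then tensorize over the $d$ coordinates and use Hoeffding concentration of $\frac{1}{d}\|\thetab\|_1$. The calibration should read $\lambda\eta^2 = cL$ for a small constant $c$; your ``$\eta \asymp L/\lambda$'' is a slip. The remaining differences are technical. You Poissonize and bound TV through $\chi^2$ against the $\theta = 0$ null, which makes $\EE\binom{N}{l} = (2\lambda)^l/l!$ exact. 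The paper instead stays with the multinomial model and uses that the likelihood ratio still factorizes over coordinates, so the KL is $d$ times a per-coordinate KL bounded by $\log(1+\chi^2)$. It then lower-bounds the mixture denominator by symmetry and AM--GM, and averages over $N \sim B(n, 1/d)$ with a Taylor-remainder identity.
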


An important property of \cref{prop:l1 estimation} is that the sample complexity scales with $d$.
As a result, the learning complexity of a CodeMDP also scales with $D$, which plays a crucial role in obtaining the $D_{\max}^{1/2}$-factor in \cref{thm:regret lower bound}.
The detailed proof of \cref{thm:regret lower bound}, including the structure of the Code MDP and its relationship with the problem setting of \cref{prop:l1 estimation}, is discussed in \cref{appx:lower bound}.

\section{Computational Hardness}\label{sec:comp_hardness}

One obstacle in deploying \cref{alg:MVP delay informal} is its exponential time complexity in $\Dmax$.
In this section, we provide a negative result regarding the computational hardness of solving delayed MDPs that implies that a polynomial time algorithm is unlikely to exist.

An unobservable MDP (UMDP) is an MDP that the agent cannot observe intermediate states and has to plan $H$ actions in advance.
It coincides with CDMDPs with delay $D = H$.
More generally, CDMDPs with delay $D$ can be considered as embedding UMDPs with a time horizon $D$.
Assuming that the transition probabilities are given, one method of computing the optimal value of a given UMDP is to compute the values of all possible $A^H$ sequences of actions. 
There are several results regarding the computational hardness of UMDPs showing that the exponential time complexity is unlikely to be avoidable, and we introduce a result in \citet{burago1996complexity}.

\begin{theorem}[Restatement of Theorem 6 in \citet{burago1996complexity}]
\label{thm:np hard}
    Suppose a UMDP $\Mcal = (\Scal, \Acal, P, r, H)$ with $H = |\Scal| = n$ is given.
    The problem of distinguishing whether the optimal value of the given UMDP is $1$ or less than $\exp( - \sqrt{n})$ is NP-hard.    
\end{theorem}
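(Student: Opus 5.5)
We would prove \cref{thm:np hard} by a polynomial-time reduction from a gap version of 3-SAT. By the PCP theorem, there is a constant $\varepsilon>0$ such that it is NP-hard, given a 3-CNF formula $\varphi$ with $N$ variables and $m=\Ocal(N)$ clauses, to distinguish whether $\varphi$ is satisfiable or every assignment violates at least an $\varepsilon$ fraction of its clauses. The idea rests on the fact that a UMDP policy is an open-loop action sequence. We can therefore let the action sequence spell out Boolean assignments, one bit per step, and let the hidden state act as a randomized verifier that checks a uniformly random clause.

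\textbf{One verification pass.} The states are triples $(j, i, b)$: $j\in[m]$ is the clause being checked, $i\in[N]$ is the index of the variable currently being read, and $b\in\{0,1\}$ records whether clause $j$ has already been satisfied by the bits read so far. There is one additional absorbing state $s_{\mathrm{fail}}$. The actions are $\{0,1\}$, and the transitions are deterministic within a pass. Reading bit $a$ at $(j,i,b)$ moves to $(j,i+1,b')$, where $b'=1$ iff $b=1$ or the literal of variable $i$ in clause $j$ is satisfied by $a$. At the end of a pass ($i=N$), the process goes to $s_{\mathrm{fail}}$ if $b'=0$. Otherwise it moves to $(j',1,0)$ for a fresh uniformly random $j'\in[m]$. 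This is the only randomness, and the initial state is drawn uniformly in the same way, or from a single start state with one extra step. A reward of $1$ is paid only at the final step, and only if the process is not in $s_{\mathrm{fail}}$. This is a legitimate time-homogeneous MDP with $\Ocal(mN)$ states.

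\textbf{Gap analysis.} Fix $r$ passes, so the horizon is $rN$. If $\varphi$ is satisfiable, repeating a satisfying assignment in every pass never reaches $s_{\mathrm{fail}}$, so the optimal value is $1$. If $\varphi$ is $\varepsilon$-far from satisfiable, consider any action sequence. It induces a \emph{fixed} assignment $x^{(t)}$ in pass $t$, because it is open-loop and cannot depend on the clause index. The clause drawn in pass $t$ is independent of the past, so the pass fails with probability at least $\varepsilon$ conditionally on survival. Hence the value is at most $(1-\varepsilon)^r\le e^{-\varepsilon r}$. This independence step is the crux of the argument. It is exactly where unobservability is used: an observing agent could adapt its assignment to the clause and always win.

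\textbf{Parameter balancing.} This is the step requiring care: the theorem demands $H=|\Scal|=n$ together with gap $e^{-\sqrt n}$. Take $r$ polynomially large, say $r\ge \max\{2m,\,4N/\varepsilon^2\}$, and set $n:=rN$. Pad the state space with unreachable dummy states up to exactly $n$ states, which is possible since $\Ocal(mN)\le rN$. Then $\varepsilon r\ge\sqrt{rN}=\sqrt n$ because $r\ge N/\varepsilon^2$, so the unsatisfiable case has value below $e^{-\sqrt n}$. The construction is polynomial in the size of $\varphi$, which completes the NP-hardness reduction. If the dummy padding or the exact equality $H=|\Scal|$ causes trouble, we would adjust it by padding the horizon with no-op steps instead.
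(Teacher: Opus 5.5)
The paper gives no proof of its own here: it imports the result from Burago et al.\ and only remarks that it comes from a reduction from 3-SAT. Your core mechanism is sound. An open-loop action sequence fixes an assignment $x^{(t)}$ in each pass, and the clause for pass $t$ is drawn afresh. So each pass fails with conditional probability at least $\varepsilon$, and survival is at most $(1-\varepsilon)^r$.

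\textbf{The gap is in the model.} In this paper a UMDP is an MDP $(\Scal,\Acal,P,r,H)$ with a stationary reward $r:\Scal\times\Acal\to[0,1]$. ``A reward of $1$ paid only at the final step'' is not of that form, so your claim to have a legitimate time-homogeneous MDP with $\Ocal(mN)$ states is false. This is not cosmetic:
\begin{itemize}
\item The states $(j,i,b)$ do not record which pass they are in, so a stationary reward cannot pay for surviving all $r$ passes.
\item Paying $1/r$ per successful pass instead leaves a NO-value of order $1/r$, because an assignment satisfying half the clauses survives pass $1$ with probability at least $1/2$. That is only polynomially small.
\item With nonnegative stationary rewards you need a pass counter $t\in[r]$ in the state. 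Then $|\Scal|=\Theta(rmN)$, which exceeds $H=rN$.
\end{itemize}
So your balancing step (``$\Ocal(mN)\le rN$, pad with dummy states'') fails, exactly where you said care was needed.

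\textbf{The repair is straightforward.}
\begin{itemize}
\item Use states $(t,j,i,b)$.
\item At the end of a successful pass $t<r$, move to $(t+1,j',1,0)$ with a fresh uniform $j'$.
\item At the end of a successful pass $r$, move to a state $s_{\mathrm{succ}}$ that pays reward $1$ once and then moves to an absorbing zero-reward state. Keep $s_{\mathrm{fail}}$ absorbing with reward $0$.
\end{itemize}
The horizon can now be lengthened for free, because extra steps are spent in absorbing states. A chain of no-op states, as in your fallback, would instead cost one state per step. Set $H:=|\Scal|=2rmN+\Ocal(1)=:n$. The YES value is still $1$, and the NO value is at most $(1-\varepsilon)^r<e^{-\varepsilon r}$. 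The condition $\varepsilon r\ge\sqrt{n}$ holds once $r\gtrsim mN/\varepsilon^2$, which is polynomial.

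With this accounting the PCP theorem is unnecessary. Ordinary 3-SAT already gives per-pass failure probability at least $1/m$ for an unsatisfiable formula. Then $r=\Theta(m^3N)$ passes give $e^{-r/m}<e^{-\sqrt n}$. This polynomial amplification is what the $\sqrt{n}$ exponent allows, and it matches the paper's remark that the result is a plain 3-SAT reduction.
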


\cref{thm:np hard} states that approximating the optimal value of a UMDP is NP-hard, even if its transition probabilities are fully known.
The theorem is proved by reducing a 3-SAT problem, making the problem of approximating the optimal value of SDMDPs strictly harder than 3-SAT problems.
Therefore, the exponential time complexity of \cref{alg:MVP delay informal} is the best we can hope for.

\section{Proof Sketch: Generalization to MDPs with Partially Known Dynamics}

In this section, we provide a sketch of how the regret bounds in \cref{sec:upper bound} are obtained.
Instead of directly proving the results, we first define a more general model which we call MDPs with partially known dynamics.
This model captures and generalizes the core properties of the augmented MDPs we utilize.
We provide an algorithm and theoretical guarantees for this setting, and then the results for solving SDMDPs follow as specific cases of the theorem.

\subsection{MDPs with Partially Known Dynamics}

We first explain the motivation of defining MDPs with partially known dynamics.
There are two properties of the augmented MDP that lead to a polynomial regret bound despite its exponential state space.
For simpler exposition, assume the constant delay setting, and suppose the augmented state consists only of the last observed state $s_{t_h}$ and the action queue $\QA = (a_{t_h}, \ldots, a_{h-1})$.
The first property of the augmented MDP is that the agent has perfect knowledge about the transition of the action queue --- the first action is popped and the current action is pushed --- and hence there is nothing to learn.
The unknown part of the augmented state transition is the distribution of the next state $s_{t_h+1}$.
The second property of the augmented MDP is that the unknown part of the transition is determined by a relatively small part of the augmented state-action pair.
Specifically, the distribution of the next state $s_{t_h + 1}$ depends only on $(s_{t_h}, a_{t_h})$, and the other part of the augmented state-action pair has no effect on it.
In other words, the function class for the unknown part of the transition has a certain structure.

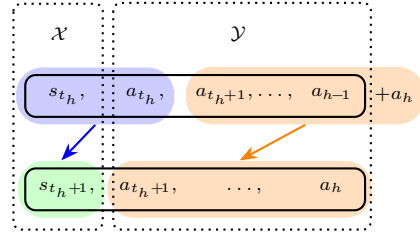
\begin{figure}[tb]
\centering
\begin{tikzpicture}[
  >=Stealth,
  token/.style={font=\scriptsize, inner sep=1.2mm},
  box/.style={draw, rounded corners, thick, inner sep=0.5mm},
  shadeX/.style={draw=none, fill=blue!20,  rounded corners=10pt},
  shadeS/.style={draw=none, fill=green!20, rounded corners=10pt},
  shadeQ/.style={draw=none, fill=orange!25, rounded corners=10pt},
  boundary/.style={dotted, thick, rounded corners=3pt},
  shiftarrow/.style={->, thick, orange},
  transarrow/.style={->, thick, blue}
]

\matrix (M) [matrix of nodes, nodes=token, row sep=8mm, column sep=0.0mm, nodes in empty cells] {
  {$s_{t_h},$} & {$a_{t_h},$}  & {$a_{t_h\!+1},\ldots,$} & {$a_{h\!-\!1}$} \\
  {$s_{t_h+\!1},$}  & {$\;a_{t_h+\!1},$} & {$\ldots,$} & {$a_h$} \\
};

\coordinate (XLeft)  at (M-2-1.west);
\coordinate (XRight) at (M-1-4.east);

\coordinate (TopL) at (XLeft  |- M-1-1.west);
\coordinate (TopR) at (XRight |- M-1-4.east);
\coordinate (BotL) at (XLeft  |- M-2-1.west);
\coordinate (BotR) at (XRight |- M-2-4.east);

\node[box, fit=(M-1-1)(M-1-4)(TopL)(TopR)] (Top) {};
\node[box, fit=(M-2-1)(M-2-4)(BotL)(BotR)] (Bot) {};

\node[token, anchor=west] (Plus) at ($(M-1-4.east)+(0.5mm,0)$) {$+a_h$};

\node[token, inner sep=0pt] (Xlab) at ($(M-1-1.north)+(-1mm,6mm)$) {$\Xcal$};
\coordinate (Yc) at ($(M-1-2.north)!0.5!(M-1-4.north)$);
\node[token, inner sep=0pt] (Ylab) at ($(Yc)+(0,6mm)$) {$\Ycal$};

\coordinate (BlueL)   at ($(M-1-1.west)+(-3mm,0)$);
\coordinate (GreenL)  at ($(M-2-1.west)+(-1.5mm,0)$);
\coordinate (OrangeL) at ($(M-2-2.west)+(0.5mm,0)$);
\coordinate (OrangeR) at ($(M-2-4.east)+(2.5mm,0)$);
\coordinate (OrangeB) at ($(M-2-2.south)$);

\begin{pgfonlayer}{background}
  \node[shadeX, fit=(M-1-1)(M-1-2)(BlueL), inner xsep=0mm, inner ysep=1.5mm] (GTopSA) {};
  \node[shadeQ, fit=(M-1-3)(M-1-4)(Plus), inner xsep=0mm, inner ysep=1.5mm] (GTopQ) {};
  \node[shadeS, fit=(GreenL)(M-2-1), inner xsep=0mm, inner ysep=1.5mm] (GBotS) {};
  \node[shadeQ, fit=(OrangeL)(OrangeB)(M-2-4)(OrangeR), inner xsep=0mm, inner ysep=1.5mm] (GBotQ) {};
\end{pgfonlayer}

\path (M-1-1.east) -- (M-1-2.west) coordinate[midway] (Sep);
\coordinate (SepL) at ($(Sep)+(-0.4mm,0)$);
\coordinate (SepR) at ($(Sep)+( 1mm,0)$);

\node[fit=(Xlab)(M-1-1)(M-2-1), inner sep=3mm] (BXfit) {};
\node[fit=(Ylab)(M-1-2)(M-1-4)(M-2-2)(M-2-4), inner sep=3mm] (BYfit) {};

\draw[boundary] ($(BXfit.north west)+(1mm,0)$) rectangle (SepL |- BXfit.south);
\draw[boundary] (SepR |- BYfit.north) rectangle ($(BYfit.south east)+(-1.8mm,0)$);

\draw[transarrow] (GTopSA.south) -- (GBotS.north) ;
\draw[shiftarrow] (GTopQ.south) -- (GBotQ.north);

\end{tikzpicture}
\caption{Illustration of core properties of the augmented MDP's state transition.
Consider a transition accompanied by an augmented state-action pair of $((s_{t_h}; a_{t_h}, \ldots, a_{h-1}), a_h)$.
The orange-shaded part indicates that the transition dynamics for the action queue are known, which is simply shifting from the previous action queue. The blue-shaded part indicates that the unknown part of the state transition is determined only by $(s_{t_h}, a_{t_h})$ and is irrelevant of the other part of the augmented state-action pair.}
\end{figure}

Motivated by these properties, we define an abstract model that has these properties by definition, which may be of independent interest.
We formalize the condition we impose on the MDP as follows.

\begin{definition}[MDPs with partially known dynamics]
\label{assm:partially known dynamics}
    We call an MDP $\Mcal = (\Scal , \Acal, P, r, H)$ an \emph{MDP with partially known dynamics} if the following conditions are satisfied.
    The state space decomposes as $\Scal = \Xcal \times \Ycal$.
    The marginal transition kernel for $\Ycal$ is known, that is, there exists a known function $P_{\Ycal} : \Scal \times \Acal \rightarrow \simplex(\Ycal)$ such that $\sum_{x \in \Xcal} P_{s, a}(x, y) = (P_{\Ycal})_{s, a}(y)$.
    We denote the conditional transition kernel of $\Xcal$ by $P_{\Xcal} : \Scal \times \Acal \times \Ycal \rightarrow \simplex(\Xcal)$, which satisfies $P_{s, a}(x, y) = (P_{\Xcal})_{s, a, y}(x) (P_{\Ycal})_{s, a}(y)$.
    The conditional transition kernel $P_{\Xcal}$ is unknown to the agent but lies in a known function class $\Pcal_{\Xcal} \subset \{ P : \Scal \times \Acal \times \Ycal \rightarrow \simplex(\Xcal ) \}$.
\end{definition}

We consider the case where the functions in the function class $\Pcal_{\Xcal}$ depend only on a finite number of features.

\begin{assumption}[Tabular structure]
\label{assm:partially known dynamics tabular}
    $\Pcal_{\Xcal}$ in \cref{assm:partially known dynamics} satisfies the following.
    There exist a finite set $\effspace$ and a known feature map $\Zmap : \Scal \times \Acal \times \Ycal \rightarrow \effspace$ such that $P_{\Xcal} \in \Pcal_{\Xcal}$ only if there exists a distribution $\Peff : \effspace \rightarrow \simplex(\Xcal)$ with $P_{\Xcal}(s, a, y) = \Peff(\Zmap(s, a, y))$.
\end{assumption}

\begin{remark}
    It is important to note that \cref{assm:partially known dynamics tabular} is \emph{not} equivalent to MDPs with $\effspace$ as a feature space.
    $\effspace$ only represents partial information about the state that governs the unknown part of the state transition.
    It is possible that multiple state-action pairs are mapped to the same element of $\effspace$, but their transition dynamics for the second argument $\Ycal$ may differ, possibly resulting in drastically different action values for those state-action pairs.
\end{remark}

\subsection{Algorithm and Theoretical Guarantees}

\begin{algorithm*}[tb]
  \caption{Optimistic Algorithm for Partially Known Dynamics}
  \label{alg:partially known dynamics}
  \begin{algorithmic}[1]
    \STATE Define $\ellfinal(z) := \log \frac{32 H | \Ycal | |\effspace | K }{\delta} \land B(z) \log \frac{32 H B(z) |\effspace | K}{\delta}$
    \FOR {$k = 1, 2, \ldots$}
        \STATE $V_{H+1}^k(s) \gets 0$ for all $s \in \Scal$
        \FOR{$h = H, H-1, \ldots, 1$}
            \FORALL{$s \in \Scal$}
                \STATE $\displaystyle Q_h^k(s, a) \gets \Expec_{\substack{y \sim (P_{\Ycal})_{(s, a)},\\z \gets \Zmap(s, a, y)}}[\MVP(r(s, a), (\hatPeff^k)_z, V_{h+1}^k(\cdot, y), N^k(z), \ellfinal(z))]$ for all $a \in \Acal$
                \STATE $V_h^k(s) \gets \max_{a \in \Acal} Q_h^k(s, a)$
                \STATE $\pi_h^k(s) \gets \argmax_{a \in \Acal} Q_h^k(s, a)$
            \ENDFOR
        \ENDFOR
        \STATE Execute $\pi^k$ and collect data $\{s_h^k = (x_h^k, y_h^k), a_h^k\}_h$
        \STATE Store $z_h^k \gets \Zmap(s_h^k, a_h^k, y_{h+1}^k)$ for $h \in [H]$, update $N^{k+1}(z)$ and $(\hatPeff^{k+1})_z$ accordingly
    \ENDFOR
  \end{algorithmic}
\end{algorithm*}

We propose a general algorithm that can run under \cref{assm:partially known dynamics tabular}, using $\texttt{MVP}$ as a base algorithm.
Let $B(z) \le |\Xcal|$ be a known upper bound on the support size of $\Peff(z)$.
\cref{alg:partially known dynamics} shows the specific procedure.
\cref{alg:partially known dynamics} has two main differences from learning standard tabular MDPs.
First, instead of directly learning the transition distribution $P$, the algorithm learns $\Peff$, storing the visit count of $z \in \effspace$ instead of the actual state-action pairs.
Secondly, the logarithmic term is increased by $\log |\Ycal| \land B(z) \log B(z)$, which results from the union bound.

We present the following theorem for \cref{alg:partially known dynamics}.

\begin{theorem}
\label{thm:partially known dynamics}
    Under \cref{assm:partially known dynamics tabular}, 
    \cref{alg:partially known dynamics} achieves the regret bound of
    \begin{align*}
        \tilde{\Ocal} \Bigg(& H \sqrt{ K \sum_{z \in \effspace} \left( \left( \log |\Ycal|   \right) \land B(z)  \right)} + H \sum_{z \in \effspace} B(z)  \Bigg)
        \, .
    \end{align*}
\end{theorem}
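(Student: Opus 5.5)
We follow the standard \texttt{MVP} analysis of \citet{zhang2021reinforcement}. The modification is that every concentration statement is made at the level of $z\in\effspace$ and is taken uniformly over the family of value functions that can actually appear.

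\textbf{Step 1: good event.} Fix $z$ and let $n=N^k(z)$. We need two concentration statements.
\begin{itemize}
\item \emph{Empirical Bernstein bound.} For every $V$ of the form $V_{h+1}^k(\cdot,y)$ with $y\in\Ycal$ such that $\Zmap(s,a,y)=z$ for some $(s,a)$, we need
\[
|(\hatPeff^k-\Peff)_z V| \lesssim \sqrt{\VV(\hatPeff_z,V)\ell/n}+H\ell/n .
\]
\item \emph{Bernstein bound on the optimal values.} The same inequality should hold for the fixed functions $V^*_{h+1}(\cdot,y)$.
\end{itemize}
The difficulty is that $V_{h+1}^k$ is data dependent. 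We handle it in two ways, which gives the $\ell^*(z)$ choice.
\begin{itemize}
\item \emph{Union over $y$.} The relevant $V^*_{h+1}(\cdot,y)$ are fixed functions indexed by $(h,y)$, so a union bound over $H|\Ycal||\effspace|K$ events costs $\log(H|\Ycal||\effspace|K/\delta)$.
\item \emph{Union over functions.} Since $\Peff(z)$ is supported on at most $B(z)$ points, we take a union bound over an $\epsilon$-net (with $\epsilon\sim 1/K$) of $[0,H]^{B(z)}$ restricted to the support. This costs $B(z)\log(HB(z)K/\delta)$ and covers all data-dependent $V$ simultaneously.
\end{itemize}
Taking the minimum of the two costs gives $\ellfinal(z)$. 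With the first (cheaper) choice, the data-dependent term $(\hat P-P)(V^k-V^*)$ must be handled as in \texttt{MVP}, via a per-entry Bernstein bound on the $B(z)$ support points. That produces the lower-order term $HB(z)\ell/n$ and explains the additive $H\sum_z B(z)$.

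\textbf{Step 2: optimism and regret decomposition.} On the good event, induct backward in $h$ to show $Q_h^k\ge Q_h^*$. This uses that \MVP{} is monotone in $V$ (the usual $c_1,c_2$ argument). The expectation over $y\sim(P_{\Ycal})_{s,a}$ is exact because $P_{\Ycal}$ is known. Next write
\[
V_1^k-V_1^{\pi^k}=\sum_h\bigl(\text{bonus}_h^k+(\hat P-P)_{z_h^k}V^k_{h+1}(\cdot,y^k_{h+1})\bigr)+\text{martingale terms}.
\]
The point is that, since $P_{\Ycal}$ is known, the only estimation error lives in the $\Xcal$-component, and it is indexed by the realized $z_h^k=\Zmap(s_h^k,a_h^k,y_{h+1}^k)$. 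We also replace the expectation over $y$ in $Q^k$ by the realized $y_{h+1}^k$; the difference is a bounded martingale difference.

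\textbf{Step 3: summation.} Bound
\[
\sum_{k,h}\sqrt{\VV(\Peff(z_h^k),V^{\pi^k}_{h+1})\,\ellfinal(z_h^k)/N^k(z_h^k)}
\]
by Cauchy--Schwarz, splitting as $\sqrt{\sum\VV}\cdot\sqrt{\sum_z \ellfinal(z)\sum_{n}1/n}$. The pigeonhole sum over visits of $z$ gives $\tilde{\Ocal}(\sum_z\ellfinal(z))$. The lower-order terms give $\tilde{\Ocal}(H\sum_z B(z))$, since $\ellfinal(z)\le B(z)\log(\cdot)$. The terms with $N\le1$ contribute $H|\effspace|$.

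\textbf{Main obstacle.} The hard part is the total-variance bound $\sum_{k,h}\VV\le \tilde{\Ocal}(H^2K)$ rather than $H^3K$. Here the variance is taken only over $\Xcal$ given $y$, whereas the law of total variance concerns the full next state. Conditional variance is at most the total variance, since $\EE_y\VV(P_{\Xcal},V)\le\VV(P,V)$, so the standard argument $\sum_h\VV(P,V_{h+1}^{\pi})\le\mathrm{Var}(\sum_h r)\le H^2$ in expectation should carry over. We still have to control the gap between $V^k$ and $V^{\pi^k}$ inside the variances and the $\Xcal$/$y$ martingale terms. For this we would try the recursive higher-moment argument of \citet{zhang2021reinforcement}, with the rewards bounded so that $V\le H$ as in the footnote. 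This yields $H\sqrt{K\sum_z\ellfinal(z)}$, which is the claimed bound.
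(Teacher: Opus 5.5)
Your proposal is correct in outline and uses the same key ingredients as the paper. These are: $\ellfinal(z)$ as the minimum of a union bound over $(h,y)$ for the fixed functions $V^*_{h+1}(\cdot,y)$ and a uniform bound over functions on the at most $B(z)$-point support of $\Peff(z)$; a separate per-entry treatment of $((\hatPeff^k)_z-(\Peff)_z)(V^k_{h+1}-V^*_{h+1})(\cdot,y)$, which produces the $H\sum_z B(z)$ term; optimism from monotonicity of the \texttt{MVP} bonus with exact averaging over $y$; and the inequality $\EE_y\VV((\Peff)_{\Zmap(s,a,y)},V(\cdot,y))\le\VV(P_{s,a},V)$, which is the paper's law-of-total-variance lemma.

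Your regret accounting takes a different route. You follow the original template of \citet{zhang2021reinforcement}: a realized-trajectory decomposition with martingale terms for both the $\Xcal$-part and the $y$-part, then the higher-moment recursion. The paper never leaves expectations under $\pi^k$, and proceeds as follows.
\begin{enumerate}
\item It keeps $V^*$, not $V^{\pi^k}$, inside the variance.
\item By AM--GM it bounds the surplus $V^k_h(s)-r(s,a)-P_{s,a}V^k_{h+1}$ by bonus-type terms plus $\frac{1}{4H}\VV((\Peff)_z,(V^k_{h+1}-V^*_{h+1})(\cdot,y))$.
\item Applying the variance decomposition to $V^k-V^*\in[0,H]$ (using optimism and $V^*_h\ge Q^*_h$), then self-bounding, turns that term into a telescoping $\frac{1}{2H}\bigl[P_{s,a}(V^k_{h+1}-V^*_{h+1})^2-(V^k_h(s)-V^*_h(s))^2\bigr]$.
\item A backward induction with the clipped potentials $U^k_h,\tilde U^k_h$ and with $W^k_h$ gives $V^k_1-V^{\pi^k}_1\le 28H\sqrt{U^k_1}+660H\tilde U^k_1$.
\item The $V^*$-variances sum to at most $2HV^*_1-(V^*_1)^2\le H^2$ in expectation.
\item Lemma 15 of \citet{lee2025minimax} converts $\sum_k U^k_1$ into harmonic sums.
\end{enumerate}
That route avoids within-episode martingales and higher moments entirely. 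The recursion you propose is also unnecessary in your own framework: since the target bound already carries a factor $H$, one round of self-bounding with $\VV(P,\tilde V^2)\le 4H^2\VV(P,\tilde V)$ suffices.

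Two steps of your sketch need repair.

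\textbf{Harmonic-sum step.} Counts are updated only at episode ends, and in the delay application a single $z$ such as $(s,a,\tran)$ can recur many times within one episode. So $\sum_{k,h}\ellfinal(z^k_h)/N^k(z^k_h)$ is not a harmonic sum. A $z$ with $N^k(z)=2$ that is visited $\Theta(H)$ times contributes $\Theta(H\,\ellfinal(z))$, which can inflate the leading term by $\sqrt{H}$. You must stop each episode at the first step where some $N^k_h(z)$ exceeds $2N^k(z)$. This is a stopping time, so the martingale arguments survive. There are $O(|\effspace|\log KH)$ such episodes, each costing at most $H$. The paper's clipping $U^k_h\le 1$ together with the indicator $\ind\{2N^k(z)>N^k_h(z)\}$ does exactly this.

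\textbf{The $\epsilon$-net.} The net costs about $B(z)\log(HK)$ on top of the $\log(KH)$ needed for the union over counts. For small $B(z)$, for example the $B(z)=2$ components estimating $P_{\tran}$, this is not dominated by the algorithm's fixed $B(z)\ell_2(z)$ once $KH$ is large relative to $|\effspace|/\delta$. Optimism of the stated algorithm is then not certified. Instead, use per-entry Bernstein on the support points followed by Cauchy--Schwarz, as you already do for $V^k-V^*$ and as the paper does. This gives exactly $\sqrt{2\VV((\Peff)_z,V)B(z)\ell_2(z)/N}+(b-a)B(z)\ell_2(z)/(3N)$.
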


When $B(z) = B$ is a constant, the leading term of \cref{thm:partially known dynamics} becomes $\tilde{\Ocal}(H\sqrt{|\effspace| K ((\log|\Ycal|) \land B)})$.
Compared to the $\tilde{\Ocal}(H\sqrt{SAK})$ regret bound of standard MDPs, the cost of learning $SA$ many transition distributions is reduced to $|\effspace|$, but the increased logarithmic factor is reflected in the $(\log | \Ycal|) \land B$ factor.
The proof of \cref{thm:partially known dynamics} is largely based on the standard optimism-based analysis adapted to this setting.
The complete proof is presented in \cref{appx:proof of partially}.

\subsection{Application to SDMDPs}

\label{sec:proof sketch}

In this section, we explain that \cref{thm:known delay,thm:unknown delay} are corollaries of \cref{thm:partially known dynamics}, where rigorous proofs are presented in \cref{appx:proof of delay tabular}.
We mainly focus on the unknown $\Pdelay$ case (\cref{thm:unknown delay}) in this section.

We first explain how the augmented MDP constructed in \cref{sec:augmented MDP} satisfies \cref{assm:partially known dynamics tabular}.
Recall that the augmented state space for a given SDMDP is constructed as $\Scal_{\aug} = \Scal \times \cup_{D=0}^{\Dmax}\Acal^D \times \Dcal \times [H+1]$.
We take $\Xcal = \Scal \times \Dcal$ and $\Ycal = \cup_{D=0}^{\Dmax} \Acal^D \times [H+1]$.
The space $\Ycal$ consists of the action queue and the time step, and the agent is fully aware of their dynamics.
We take $\effspace = \Scal \times \Acal \times \Dcal$ with $\Zmap((s, \QA, \tDelta, h), a, y) = (s, (\cat{\QA}{a})_1, \tDelta)$.
Strictly speaking, we need additional information in the elements of $\effspace$ to handle exceptional cases such as the termination of an episode or the truncation of the delay.
However, as it only increases the size of $\effspace$ by a constant factor, we omit this detail for brevity.
$\Peff$ is defined as $\Peff(s, a, \tran) = P(s, a) \times \ind_{\tDelta = -1}$ and $\Peff(s, a, \tDelta) = P_{\tran}(s, a, \tDelta) \ind_{(s, \tran)} + (1 - P_{\tran}(s, a, \tDelta) ) \ind_{(s, \tDelta +1)}$ for $\tDelta \in [-1:\Ddelay]$, where $\ind_x$ denotes the point mass distribution at $x$.
In this way, the augmented MDP satisfies \cref{assm:partially known dynamics tabular}.
\\
Now, we explain how \cref{thm:unknown delay} is derived from \cref{thm:partially known dynamics}.
There are $SA$ elements of the form $z = (s, a, \tran)$.
Each of them adds $\log |\Ycal| \land B(z) \lesssim \Dmax \land B$ to the regret bound inside the square root.
Therefore, they contribute to the regret bound by $\tilde{\Ocal}(H \sqrt{(\Dmax \land B)SAK} + HBSA)$.
There are $\Ocal(SA\Ddelay)$ remaining elements in $\effspace$ of the form $z = (s, a, \tDelta)$ with $\tDelta \in [-1:\Ddelay]$.
We note that they only have a branching factor of two, as they transition to either $(s, \tran)$ or $(s, \tDelta + 1)$ in $\Xcal$.
Consequently, these elements contribute to the $\tilde{\Ocal}( H \sqrt{\Ddelay SA K} + H \Ddelay SA)$ terms.
Taking the sum over the two bounds yields the bound of \cref{thm:unknown delay}.
\\
\cref{thm:known delay} is proved by decomposing the state space as $\Xcal = \Scal$ and $\Ycal = \cup_{D=0}^{\Dmax} \Acal^D \times \Dcal \times [H+1]$, where $\Dcal$ moved from $\Xcal$ to $\Ycal$ to reflect that the delay distribution is known, and then following the first half of the argument for \cref{thm:unknown delay}.

\section{Conclusion}

We study an efficient method of learning MDPs with delayed state observation.
Under the tabular setting, we propose an augmentation-based algorithm and provide a regret bound of $\tilde{\Ocal}(H\sqrt{\Dmax SAK})$, together with a matching regret lower bound, showing that our result is optimal up to logarithmic factors.
Our techniques for constructing the augmented MDP and decomposing the known and unknown parts of the transition naturally apply beyond the tabular case, and we believe they provide a general framework for addressing observational delays in RL.

\pagebreak
\section*{Impact Statement}

This paper presents work whose goal is to advance the field of Machine
Learning. There are many potential societal consequences of our work, none which we feel must be specifically highlighted here.

\section*{Acknowledgements}
KJ and HL were supported in part by NSF 2141511, 2023239, and a Singapore AI Visiting Professorship award.

\bibliography{references}
\bibliographystyle{icml2026}

\newpage
\appendix
\onecolumn

\crefalias{section}{appendix}
\crefalias{subsection}{appendix}
\crefalias{subsubsection}{appendix}

\newpage

\section{Full Algorithms}

\label{appx:full algo}

In this section, we present the full procedure of \cref{alg:MVP delay informal} introduced in \cref{sec:algorithms}.
\cref{alg:MVP delay formal} is the full version of \cref{alg:MVP delay informal} that specifies the update order and exception handling.
As explained in \cref{sec:algorithms}, instead of storing the visit counts of the augmented state-action pairs, it stores the visit counts of the original MDP's state-action pairs and the tuple of state, action, and inter-arrival time.
Specifically, it maintains the information of
\begin{equation}
\label{eq:update rule delay tabular}
    \begin{split}
    & N^k(s, a) := \sum_{i=1}^{k-1} \sum_{h=1}^H \ind\{ (s_h^i, a_h^i) = (s, a) \}
    \\
    & N^k(s, a, \tDelta) := \sum_{i=1}^{k-1} \sum_{h=1}^H \ind \{(s_h^i, a_h^i) = (s, a), \Delta_h^i \ge \tDelta  \}
    \\
    & (\hat{P}^k)_{s, a}(s') := \frac{1}{N^k(s, a)}\sum_{i=1}^{k-1} \sum_{h=1}^H \ind \{(s_h^i, a_h^i, s_{h+1}^i) = (s, a, s')  \}
    \\
    & (\hat{P}_\tran^k)_{s, a}(\tDelta) := \frac{N^k(s, a, \tDelta) - N^k(s, a, \tDelta + 1)}{N^k(s, a, \tDelta)}
    \, .
    \end{split}
\end{equation}
As a subroutine of \cref{alg:MVP delay formal}, we have \cref{alg:Q-estimate} ($\Qest$) that handles the computation of $Q$-values for augmented states with $\tDelta \in [-1 : \Ddelay]$.
Recall that these augmented states transition to one of the two augmented states depending on whether a new state is revealed or not.
When $\Pdelay$ is known, $\Qest$ takes the expectation of the next augmented states' values.
When $\Pdelay$ is unknown, $\MVP$ is called to compute the UCB value instead.
It also handles the case where the next state is trivially revealed due to clipping or the termination of the episode.

\begin{algorithm*}[htbp!]
  \caption{MVP-Delayed}
  \label{alg:MVP delay formal}
  \begin{algorithmic}[1]
    \STATE {\bfseries Input:} Per time step delay $\Ddelay$, Maximum delay $\Dmax$
    \STATE Define $\ellfinal(D, b) := ( D \log A + \log \frac{64 H (D+1)^2 \Ddelay SA K}{\delta}) \land ( b \log \frac{32 H b \Ddelay SAK}{\delta})$
    \FOR {$k = 1, 2, \ldots$}
        \STATE Initialize $N^k(s, a), N^k(s, a, \tDelta), \hat{P}^k(s, a), \hat{P}_\tran^k(s, a, \tDelta)$ as Eq.~\eqref{eq:update rule delay tabular}
        \STATE Initialize $V^k(s, \emptyset, \tran, H + 1) \gets 0$, $V^k(s, \emptyset, -1, H+1) \gets 0$ for all $s \in \Scal$
        \FORALL{$s \in \Scal, \QA \in \cup_{D=1}^{\Dmax} \Acal^D$}
            \STATE $V^k(s, \QA, \tran, H+1)\gets \MVP(r(s, \QA_1), \hat{P}_{s, \QA_1}^k, V^k( \cdot, \QA_{2:}, -1, H+1), N^k(s, \QA_1), \ellfinal(D, B))$
            \STATE $V^k(s, \QA, -1, H+1) \gets V^k(s, \QA, \tran, H+1)$
        \ENDFOR
        \FOR{$h = H , \ldots, 1$}
            \STATE // \textit{Update Category 1 states ($\tDelta \in [0:\Ddelay]$)}
            \FORALL{$s \in \Scal, \QA \in \cup_{D=0}^{\Dmax} \Acal^D, \tDelta \in [ 0 : \Ddelay]$}
                \STATE $Q^k((s, \QA, \tDelta, h), a) \gets \Qest(s, \cat{\QA}{a}, \tDelta, h+1)$ for all $ a\in \Acal$
                \STATE $V^k(s, \QA, \tDelta, h) \gets \max_{a \in \Acal} Q((s, \QA, \tDelta, h), a)$
                \STATE $\pi_h^k(s, \QA, \tDelta, h) \gets \argmax_{a \in \Acal} Q((s, \QA, \tDelta, h), a)$
            \ENDFOR
            \STATE // \textit{Update Category 2 and 3 states ($\tDelta = \tran$ or $-1$)}
            \STATE $V^k(s, \emptyset, -1, h) \gets V^k(s, \emptyset, 0, h)$ for all $s \in \Scal$
            \FOR{$D = 1, \ldots, \Dmax$}
                \FORALL{$s \in \Scal, \QA \in \Acal^D$}
                    \STATE $V^k(s, \QA, \tran, h)\gets \MVP(r(s, \QA_1), \hat{P}_{s, \QA_1}^k, V^k( \cdot, \QA_{2:}, -1, h), N^k(s, \QA_1), \ellfinal(D, B))$
                    \STATE $V^k(s, \QA, -1, h) \gets \Qest(s, \QA, -1, h)$
                \ENDFOR
            \ENDFOR
        \ENDFOR
        \STATE Execute $\pi^k$ and collect data $\{ s_h^k, a_h^k, \Delta_h^k\}_{h=1}^H \cup \{s_{H+1}^k\}$
    \ENDFOR
  \end{algorithmic}
\end{algorithm*}

\begin{algorithm*}[htbp!]
    \caption{\Qest}
    \label{alg:Q-estimate}
    \begin{algorithmic}[1]
        \STATE \textbf{Input: } $(s, \QA, \tDelta, h)$
        \STATE $V_\tran \gets V^k(s, \QA, \tran, h)$
        \STATE $V_{\mathrm{delay}} \gets V^k(s, \QA, \tDelta + 1, h)$
        \IF{$\len(\QA) = \Dmax + 1$ or $\tDelta = \Ddelay$ or $h = H+1$}
            \STATE \textbf{Return } $V_\tran$
        \ELSIF{$\Pdelay$ is known}
            \STATE Get $p \gets P_{\tran}(s, \QA_1, \tDelta)$
            \STATE \textbf{Return } $pV_\tran  + (1  - p) V_{\mathrm{delay}}$
        \ELSE
            \STATE \textbf{Return } $\MVP(0, \hat{P}^k_\tran(s, \QA_1, \tDelta), (V_\tran, V_{\mathrm{delay}}), N^k(s, \QA_1, \tDelta), \ellfinal(\len(\QA), 2)$
        \ENDIF
    \end{algorithmic}
\end{algorithm*}

\section{Proof of Theorem~\ref{thm:partially known dynamics}}
\label{appx:proof of partially}

In this section, we present the full proof of \cref{thm:partially known dynamics}.

We define several notations for the analysis.
Define $\ell_1 = \log \frac{32 H^2 |\Ycal| |\effspace| K }{\delta}$, $\ell_2(z) = \log \frac{32 H B(z) |\effspace| K }{\delta}$, and $\ellfinal(z) = \min\{ \ell_1, B(z) \ell_2(z) \}$.
For $z \in \effspace$, let $\Xcal(z) := \{ x \in \Xcal \mid (\Peff)_z(x) > 0 \}$ be the support of $(\Peff)_z$.
Let $N_h^k(z)$ be the number of time steps $z \in \effspace$ appeared in the trajectory up to the $h$-th time step of the $k$-th episode, that is, $N_h^k(z) := N^k(z) + \sum_{j=1}^h \ind\{ z_j^k = z\}$.
For $h \in [H+1]$, $s \in \Scal$, and $k \in [K]$, we define $U_h^k(s)$ and $\tilde{U}_h^k(s)$ iteratively starting from $U_{H+1}^k(s) = \tilde{U}_{H+1}^k(s) := 0$ and 
\begin{align*}
    &U_h^k(s) := \Expec_{\substack{s' = (x', y') \sim P_{s, a}\\z = \Zmap(s, a, y')}} \left[ \frac{\ellfinal(z)}{N^k(z)} + U_{h+1}^k(s') \right] \land 1
    \, ,
    \\
    & \tilde{U}_h^k(s) := \Expec_{\substack{s' = (x', y') \sim P_{s, a}\\z = \Zmap(s, a, y')}} \left[ \frac{B(z)\ell_2(z)}{N^k(z)} + \tilde{U}_{h+1}^k(s') \right] \land 1
    \, ,
\end{align*}
for $h \in [H]$, where $a = \pi_h^k(s)$.

\subsection{High-probability Events}

In this section, we define the high-probability events that constitute the event $\Ecal$ under which \cref{thm:partially known dynamics} holds.

\begin{lemma}
\label{lma:optimal value concentration}
    The following inequality holds for all $z \in \effspace$, $y \in \Ycal$, $h \in [H]$, and $k \in [K]$ that satisfies $N^k(z) \ge 1$ with probability at least $1 - \frac{\delta}{8}$:
    \begin{align*}
        \left| ( (\Peff)_z - (\hatPeff)_z)V_{h+1}^*(\cdot, y) \right| \le \sqrt{\frac{2 \VV((\Peff)_z, V_{h+1}^*(\cdot, y)) \ell_1}{N^k(z)}} + \frac{H \ell_1}{3 N^k(z)}
        \, .
    \end{align*}
\end{lemma}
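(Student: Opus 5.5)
Since $V_{h+1}^*(\cdot, y)$ is a deterministic function of $\Xcal$ (it depends only on the true MDP, not on the data), I would prove this lemma with an empirical Bernstein-free, i.e.\ plain, Bernstein inequality for martingales, followed by a union bound. The union bound will run over $z \in \effspace$, $y \in \Ycal$, $h \in [H]$, and the possible values of the count $N^k(z)$. The count ranges over at most $HK$ values, which is where the extra factor $H$ inside $\ell_1 = \log \frac{32 H^2 |\Ycal||\effspace| K}{\delta}$ comes from.

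Fix $z$, $y$, and $h$, and set $f := V_{h+1}^*(\cdot, y) \in [0,H]^{\Xcal}$. Order the visits to $z$ chronologically across episodes and time steps, and let $x^{(1)}, x^{(2)}, \ldots$ be the $\Xcal$-components of the next states observed at those visits. I need the following sampling property: conditional on the history up to the $i$-th visit (including $y_{h+1}$, which determined $z = \Zmap(s,a,y')$), the next $x$ is distributed as $(\Peff)_z$. This follows from \cref{assm:partially known dynamics,assm:partially known dynamics tabular}, because $P_{s,a}(x,y') = (P_{\Xcal})_{s,a,y'}(x)(P_{\Ycal})_{s,a}(y')$ and $(P_{\Xcal})_{s,a,y'} = (\Peff)_{\Zmap(s,a,y')}$.

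I expect this step to be the main subtlety. The algorithm stores $z_h^k = \Zmap(s_h^k, a_h^k, y_{h+1}^k)$, which depends on the realized $y_{h+1}^k$. So I would define the filtration so that $y_{h+1}^k$ is revealed before $x_{h+1}^k$. Given the filtration up to and including $y_{h+1}^k$, the law of $x_{h+1}^k$ is $(\Peff)_z$, and $\{ z_h^k = z\}$ is measurable. Hence the sequence $f(x^{(i)}) - (\Peff)_z f$ is a martingale difference sequence, with conditional variance $\VV((\Peff)_z, f)$ and range bounded by $H$.

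Next, for each fixed $n$, the empirical average of the first $n$ samples equals $(\hatPeff)_z f$ whenever $N^k(z) = n$. I would apply Freedman's or Bernstein's inequality to the first $n$ samples; equivalently, by a standard skeleton argument the first $n$ samples are i.i.d.\ from $(\Peff)_z$. With failure probability $\delta'$ this gives
\[
\left|\frac{1}{n}\sum_{i=1}^n f(x^{(i)}) - (\Peff)_z f\right| \le \sqrt{\frac{2\VV((\Peff)_z,f)\log(2/\delta')}{n}} + \frac{H\log(2/\delta')}{3n}.
\]
Finally, I would take a union bound over $n \in [HK]$, $z$, $y$, and $h \in [H]$ with $\delta' = \delta / (8 H^2 K |\Ycal||\effspace|)$. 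Then $\log(2/\delta') \le \ell_1$, and the total failure probability is at most $\delta/8$, which gives the stated inequality for all $k$ with $N^k(z) \ge 1$.
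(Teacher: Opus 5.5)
Your proposal is correct and follows the same route as the paper. The paper applies the Bennett-form bound of \cref{lma:bernstein}, rescaled by $H$ and made two-sided (this form is what yields the $\frac{H\ell_1}{3N^k(z)}$ term), to $V_{h+1}^*(x,y)$ with $x \sim (\Peff)_z$, then takes a union bound over $y$, $z$, $h$, and $1 \le n \le KH$. Your explicit filtration ordering, in which $y_{h+1}^k$ (hence $z_h^k$) is revealed before $x_{h+1}^k$, together with the skeleton argument, supplies the i.i.d.\ justification that the paper leaves implicit.
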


\begin{proof}
    Apply \cref{lma:bernstein} to $Z = V_{h+1}^*(x, y)$ with $x \sim \Peff(z)$ and the probability of failure as $\frac{\delta}{8}$, then take the union bound over $y \in \Ycal$, $z \in \effspace$, $h \in [H]$, and $1 \le N^k(s, a) \le KH$. 
\end{proof}

The following lemma extends to arbitrary functions at the cost of $B(z)$ dependence in the bound.
The proof is deferred to \cref{appx:proof of all functions concentration}.

\begin{lemma}
\label{lma:all functions concentration}
    There exists an event with probability at least $1 - \frac{\delta}{8}$ such that for all $z \in \effspace$ and for any constants $a\le b$, the following inequality holds for any function $V: \Xcal(z) \rightarrow [a, b]$:
    \begin{align*}
        \left| ((\Peff)_z - (\hatPeff^k)_z)V \right| \le \sqrt{\frac{2 \VV((\Peff)_z, V) B(z) \ell_2(z)}{N^k(z)}} + \frac{(b - a) B(z) \ell_2(z)}{3 N^k(z)}
        \, .
    \end{align*}
\end{lemma}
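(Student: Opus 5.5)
The plan is to prove \cref{lma:all functions concentration} by a covering argument over the function class on the finite support $\Xcal(z)$, which has at most $B(z)$ points. Concentration for a single fixed $V$ would follow from \cref{lma:bernstein}. The extra factor $B(z)$ in the confidence width would then pay for a union bound over an $\epsilon$-net whose log-cardinality is of order $B(z)\log(\cdot)$. Two reductions come first. Both sides of the inequality are invariant under shifting $V$ by a constant: $(\Peff)_z - (\hatPeff^k)_z$ annihilates constants, and the variance is shift invariant. Both sides are also homogeneous under scaling by $b-a$. So it suffices to treat $V : \Xcal(z) \to [0,1]$. Also, $(\hatPeff^k)_z$ is supported on $\Xcal(z)$ almost surely, so only the values of $V$ on $\Xcal(z)$ matter, and $V$ is a vector in $[0,1]^{B(z)}$.

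Next, fix $z$ and a sample count $n = N^k(z) \in [KH]$. I would use the standard device of pre-drawing i.i.d.\ samples for each $z$, which makes $(\hatPeff^k)_z$ the empirical distribution of the first $n$ draws. Take a grid $\Vcal_\epsilon = \{0,\epsilon,2\epsilon,\ldots,1\}^{B(z)}$ with $\epsilon = 1/n$ or $1/(KH)$, so $|\Vcal_\epsilon| \le (KH+1)^{B(z)}$. For each grid point, \cref{lma:bernstein} with failure probability $\delta' = \delta/(8|\effspace| KH |\Vcal_\epsilon|)$ gives
\begin{align*}
|((\Peff)_z - (\hatPeff^k)_z)\bar V| \le \sqrt{\frac{2\VV((\Peff)_z,\bar V)\log(2/\delta')}{n}} + \frac{\log(2/\delta')}{3n}.
\end{align*}
Here $\log(2/\delta') \lesssim B(z)\log(KH) + \log(|\effspace|KH/\delta)$, and this is at most a constant times $B(z)\ell_2(z)$. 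A union bound over $z$, $n$, and the grid then gives total failure probability at most $\delta/8$.

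The step I expect to be the main obstacle is extending from the grid to an arbitrary $V$ while keeping the exact constants $2$ and $1/3$ stated in the lemma. For a general $V$, I would choose $\bar V \in \Vcal_\epsilon$ with $\|V-\bar V\|_\infty \le \epsilon$. Then $|((\Peff)_z - (\hatPeff^k)_z)(V-\bar V)| \le 2\epsilon$, and $|\sqrt{\VV(P,V)} - \sqrt{\VV(P,\bar V)}| \le \epsilon$ because the standard deviation is a seminorm bounded by the sup norm. With $\epsilon \le 1/n$, these errors are $O(1/n)$ and can be absorbed into the $B(z)\ell_2(z)/(3n)$ term. The absorption requires $\ell_2(z)$ to carry enough slack. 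That is why the constant $32HB(z)|\effspace|K$ appears inside the log, together with the fact that $B(z)\ge 1$.

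If the constants do not close this way, I would replace the grid with a union bound over the at most $B(z)!$ orderings of $V$'s values on the support. For a fixed ordering, $V$ is a nonnegative combination of the at most $B(z)$ threshold indicators $\ind\{x \in T_j\}$, where each $T_j$ is an upper level set, so it suffices to control the deviation of each indicator. Summing the individual Bernstein bounds with Cauchy–Schwarz then has to be related back to $\VV(P,V)$. That step is lossy, so the grid/net argument is my primary route.
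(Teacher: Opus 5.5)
\textbf{Verdict.} Your net argument is sound up to absolute constants, but it cannot give the lemma with the constants as written. The missing idea is atom-wise concentration combined with Cauchy--Schwarz, which is the paper's proof.

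\textbf{Where the constants break.} The problem is already in the union-bound cost, not only in the grid-to-$V$ extension you flagged. A grid at resolution $1/n$, together with the union over $N^k(z)\le KH$ and $z\in\effspace$, costs roughly
\[
\log\frac{16|\effspace|KH(KH+1)^{B(z)}}{\delta}\approx B(z)\log(KH)+\log(KH)+\log\frac{16|\effspace|}{\delta}.
\]
By contrast, $B(z)\ell_2(z)=B(z)\log(KH)+B(z)\log\frac{32B(z)|\effspace|}{\delta}$. For $B(z)=2$ the margin is about $\log\frac{256|\effspace|}{\delta}-\log(KH)$, which is negative once $KH$ is large. That happens before you try to absorb the discretization errors $2\epsilon$ and $\epsilon\sqrt{2L/n}$. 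So your route yields the inequality with $B(z)\ell_2(z)$ replaced by $c\,B(z)\ell_2(z)$ for some $c>1$, not the stated one. The quantity $32HB(z)|\effspace|K$ inside $\ell_2(z)$ is not slack reserved for a net. It is sized for a union bound over the at most $B(z)$ atoms of $\Xcal(z)$, over $z$, over $N^k(z)\le KH$, and over the two tails.

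\textbf{The argument that does close.} No net is needed.
\begin{enumerate}
\item Apply \cref{lma:bernstein} to each atom indicator $\ind\{x'=x\}$ with $x\in\Xcal(z)$. Its variance is at most $(\Peff)_z(x)$, so simultaneously for all atoms
\[
\bigl|(\Peff)_z(x)-(\hatPeff^k)_z(x)\bigr|\le\sqrt{2(\Peff)_z(x)\ell_2(z)/N^k(z)}+\ell_2(z)/(3N^k(z)).
\]
\item Center, using $((\Peff)_z-(\hatPeff^k)_z)V=((\Peff)_z-(\hatPeff^k)_z)(V-(\Peff)_zV)$, and apply the triangle inequality atom by atom.
\item For the first-order term, use Cauchy--Schwarz over the at most $B(z)$ support points:
\[
\sum_{x\in\Xcal(z)}\sqrt{(\Peff)_z(x)}\,\bigl|V(x)-(\Peff)_zV\bigr|\le\sqrt{B(z)\,\VV((\Peff)_z,V)}.
\]
\item For the second-order term, use $\sum_{x\in\Xcal(z)}|V(x)-(\Peff)_zV|\le B(z)(b-a)$.
\end{enumerate}
This gives exactly the constants $2$ and $1/3$ with $B(z)\ell_2(z)$.

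Your fallback was heading this way. The loss you worried about disappears once you use atoms rather than level sets and center at $(\Peff)_zV$ before Cauchy--Schwarz. The factor $B(z)$ in the bound then comes from Cauchy--Schwarz rather than from enlarging the union bound.
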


The following \cref{lma:optimal value concentration empirical,lma:all functions concentration empirical} are empirical-variance versions of \cref{lma:optimal value concentration,lma:all functions concentration}, respectively.
The proofs of these lemmas are nearly identical to their counterparts, only that the uses of \cref{lma:bernstein} are replaced by \cref{lma:empirical bernstein}.

\begin{lemma}
\label{lma:optimal value concentration empirical}
    The following inequality holds for all $z \in \effspace$, $y \in \Ycal$, $h \in [H]$, and $k \in [K]$ that satisfies $N^k(s, a) \ge 1$ with probability at least $1 - \frac{\delta}{8}$:
    \begin{align*}
        \left| ( (\Peff)_z - (\hatPeff)_z)V_{h+1}^*(\cdot, y) \right| \le 2 \sqrt{\frac{\VV( (\hatPeff)_z, V_{h+1}^*(\cdot, y)) \ell_1}{N^k(z)}} + \frac{14 H \ell_1}{3 N^k(z)}
        \, .
    \end{align*}
\end{lemma}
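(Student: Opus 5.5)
The plan mirrors the proof of \cref{lma:optimal value concentration}, with the empirical Bernstein inequality (\cref{lma:empirical bernstein}) in place of \cref{lma:bernstein}. Fix $z \in \effspace$, $y \in \Ycal$, $h \in [H]$, and a target count $n \in [KH]$. The samples $x_1, x_2, \ldots$ drawn from $(\Peff)_z$ at the successive visits to $z$ are i.i.d. Indeed, each time the trajectory produces $z_j^i = z$, the next $\Xcal$-component is drawn from $(\Peff)_z$ independently of the past. This is exactly \cref{assm:partially known dynamics tabular}, since $P_{\Xcal}(s,a,y') = \Peff(\Zmap(s,a,y'))$ depends on $(s,a,y')$ only through $z$.

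\textbf{Reduction to a fixed sample size.} By the standard skeleton (stack-of-samples) argument, after $n$ visits $(\hatPeff)_z$ is the empirical distribution of the first $n$ draws from this i.i.d. sequence. It therefore suffices to prove the bound for a fixed sample size $n$ and then union bound over $n$. The function $V_{h+1}^*(\cdot, y)$ is deterministic: it depends only on the true MDP, not on data. So $Z = V_{h+1}^*(x, y)$ with $x \sim (\Peff)_z$ is a fixed bounded random variable with range $[0, H]$.

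\textbf{Applying \cref{lma:empirical bernstein}.} With failure probability $\delta'$, the lemma gives
\begin{align*}
\bigl|((\Peff)_z - (\hatPeff)_z) V_{h+1}^*(\cdot,y)\bigr| \le 2\sqrt{\frac{\VV((\hatPeff)_z, V_{h+1}^*(\cdot,y)) \log(c/\delta')}{n}} + \frac{14 H \log(c/\delta')}{3n}
\end{align*}
for some small constant $c$. I expect the constants $2$ and $14/3$ to match the form of the lemma as stated in the appendix, as in Maurer--Pontil-type bounds; I would check this and, if necessary, adjust the constant inside the logarithm.

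\textbf{Union bound.} I would take $\delta' = \delta / (8 \cdot |\Ycal| |\effspace| H \cdot KH \cdot c)$ and union bound over $y \in \Ycal$, $z \in \effspace$, $h \in [H]$, and $n \in [KH]$. This gives $\log(c/\delta') \le \log \frac{32 H^2 |\Ycal||\effspace| K}{\delta} = \ell_1$, with total failure probability at most $\delta/8$. The definition of $\ell_1$, with its $H^2$ factor, absorbs exactly the factor $H$ from the $h$ union and the factor $KH$ from the count union.

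\textbf{Main obstacle.} The only subtle step is the randomness of $N^k(z)$ and its dependence on the policy. This is handled by the fixed-$n$ skeleton reduction followed by the union over $n$, rather than by a martingale argument. Otherwise the proof is routine.
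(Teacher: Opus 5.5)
Your proposal is correct and is essentially the paper's proof: apply \cref{lma:empirical bernstein} to $Z = V_{h+1}^*(x,y)/H$ with $x \sim (\Peff)_z$, then union bound over $y$, $z$, $h$, and the count $n \le KH$; your stack-of-samples remark makes explicit the i.i.d.\ structure that the paper leaves implicit. Two small bookkeeping points: \cref{lma:empirical bernstein} needs $n \ge 2$, but $n = 1$ is trivial because the empirical variance vanishes and $14H\ell_1/3 > H \ge |((\Peff)_z - (\hatPeff)_z)V_{h+1}^*(\cdot,y)|$; and the constant $32$ in $\ell_1$ comes out exactly as $8 \cdot 2 \cdot 2$ (the $\delta/8$ budget, the two tails, and the $\log\frac{2}{\delta}$ in the lemma), provided you do not also place $c$ inside $\delta'$.
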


\begin{lemma}
\label{lma:all functions concentration empirical}
    There exists an event with probability at least $1 - \frac{\delta}{8}$ such that for all $z \in \effspace$ and for any constants $a\le b$, the following inequality holds for any function $V: \Xcal(z) \rightarrow [a, b]$:
    \begin{align*}
        \left| ((\Peff)_z - (\hatPeff^k)_z)V \right| \le 2 \sqrt{\frac{ \VV((\hatPeff)_z, V) B(z) \ell_2(z)}{N^k(z)}} + \frac{14 (b - a) B(z) \ell_2(z)}{3 N^k(z)}
        \, .
    \end{align*}
\end{lemma}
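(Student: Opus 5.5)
We prove the claim by mirroring the proof of \cref{lma:all functions concentration} (deferred to \cref{appx:proof of all functions concentration}), replacing the population-variance Bernstein inequality \cref{lma:bernstein} by its empirical counterpart \cref{lma:empirical bernstein}. The difficulty is that the inequality must hold simultaneously for \emph{all} functions $V:\Xcal(z)\to[a,b]$, an uncountable family. We handle this by reducing to a finite family and union-bounding, which costs the factor $B(z)\ell_2(z)$ instead of $\ell_1$.

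\emph{Step 1 (normalization).} Both sides are invariant under adding a constant to $V$ (the difference $(\Peff)_z-(\hatPeff^k)_z$ annihilates constants, and variance is shift invariant), and both scale linearly in $(b-a)$. So it suffices to take $a=0$, $b=1$, i.e.\ $V:\Xcal(z)\to[0,1]$. Moreover $(\hatPeff^k)_z$ is supported inside $\Xcal(z)$, since samples from $(\Peff)_z$ never leave its support. Hence everything lives on a domain of size at most $B(z)$, and $V$ is a vector in $[0,1]^{B(z)}$.

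\emph{Step 2 (finite reduction).} We prefer an exact reduction over an $\epsilon$-net, to avoid additive discretization error. For a fixed $z$, the deviation $((\Peff)_z-(\hatPeff^k)_z)V$ is linear in $V$, while the right-hand side involves $\sqrt{\VV(\hatPeff,V)}$. The natural reduction is to indicator functions: write $V=\int_0^1 \ind\{V>t\}\,dt$, and note that the level sets $\{V>t\}$ range over at most $2^{B(z)}$ subsets of $\Xcal(z)$. Apply \cref{lma:empirical bernstein} to each indicator $\ind_U$ with $U\subseteq \Xcal(z)$. Union-bound over the $2^{B(z)}$ subsets, over $z\in\effspace$, and over the at most $KH$ values of $N^k(z)$ (the count-indexed sample sequence). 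The resulting log factor is $\log(2^{B(z)}|\effspace|KH/\delta)\lesssim B(z)\ell_2(z)$, matching the constant $32HB(z)|\effspace|K/\delta$ in $\ell_2(z)$ up to the stated constants 2 and 14/3.

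\emph{Step 3 (recombining).} Integrating the indicator bounds over $t$ gives
\[
|(P-\hat P)V|\le \int_0^1\Big(2\sqrt{\VV(\hat P,\ind_{\{V>t\}})\,B\ell_2/N}+\tfrac{14B\ell_2}{3N}\Big)dt .
\]
We then need $\int_0^1\sqrt{\VV(\hat P,\ind_{\{V>t\}})}\,dt\lesssim\sqrt{\VV(\hat P,V)}$. This is the main obstacle, since the inequality can fail by constants or worse. If it fails, the fallback is to use $(P-\hat P)V=(P-\hat P)(V-c)$ and apply the indicator bound coordinatewise over the at most $B(z)$ atoms: the sum $\sum_x |P(x)-\hat P(x)|\,|V(x)-\hat PV|$ is controlled by Cauchy--Schwarz against $\sum_x (P(x)-\hat P(x))^2/\hat P(x)$. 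That quantity is an empirical-Bernstein/KL-type concentration of the empirical distribution over a $B(z)$-point support, which gives exactly a $\sqrt{\VV(\hat P,V)\,B\ell_2/N}$ term plus a $B\ell_2/N$ lower-order term. This is also likely how the population-variance version was proved, and the constants $2$ and $14/3$ come from the constants in \cref{lma:empirical bernstein}. With the event fixed as the intersection over all $z$ and counts, its probability is at least $1-\delta/8$, and it holds for every $V$ and every $a\le b$ simultaneously.
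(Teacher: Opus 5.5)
\textbf{There is a genuine gap.} Your Steps 2--3 do not give the stated bound, and the fallback, which is where the proof actually has to happen, combines the per-atom bounds incorrectly. Write $P:=(\Peff)_z$, $\hat P:=(\hatPeff^k)_z$, $N:=N^k(z)$.

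\textbf{Steps 2--3.} The inequality $\int_0^1\sqrt{\VV(\hat P,\ind\{V>t\})}\,dt\lesssim\sqrt{\VV(\hat P,V)}$ fails by more than constants. For a distribution on $B$ atoms the ratio of the two sides can be of order $\sqrt{B}$. For example, put atoms at $2^{j}\epsilon$, $j=0,\dots,m$, with $\hat P(V>2^j\epsilon)=4^{-(j+1)}$ for $j<m$; the ratio is then $\gtrsim\sqrt{m}$. The ratio is also at most $\sqrt{B-1}$, by Cauchy--Schwarz over the level intervals. So after paying $B(z)\ell_2(z)$ for the union bound over $2^{B(z)}$ level sets, you would end with $B(z)\sqrt{\ell_2(z)\VV(\hat P,V)/N}$ instead of $\sqrt{B(z)\ell_2(z)\VV(\hat P,V)/N}$.

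\textbf{The fallback.} A single Cauchy--Schwarz against $\sum_x(P(x)-\hat P(x))^2/\hat P(x)$ cannot work. That sum is infinite as soon as an atom with $P(x)>0$ has not been sampled, which is typical when $P(x)\ll 1/N$. In that situation no bound of the form $C\sqrt{\VV(\hat P,V)}$ can hold. Take $P=(1-p,p)$, $\hat P=(1,0)$ and $V=(0,1)$: then $(P-\hat P)V=p\neq 0$, while $\VV(\hat P,V)=0$. The lower-order term therefore has to be split off \emph{before} Cauchy--Schwarz.

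\textbf{The missing step} is the paper's argument: the empirical analogue of \cref{lma:probability concentration}, followed by the same summation as in \cref{lma:all functions concentration}.
\begin{itemize}
\item Apply \cref{lma:empirical bernstein} to each indicator $\ind\{x'=x\}$ with $x\in\Xcal(z)$. Its empirical variance is $\hat P(x)(1-\hat P(x))\le\hat P(x)$.
\item Union-bound only over the at most $B(z)$ atoms, over $z\in\effspace$, and over the counts; this is all that $\ell_2(z)$ pays for. This gives $|P(x)-\hat P(x)|\le 2\sqrt{\hat P(x)\ell_2(z)/N}+14\ell_2(z)/(3N)$.
\item Write $(P-\hat P)V=(P-\hat P)(V-\hat PV)$ and use the triangle inequality over atoms.
\item Apply Cauchy--Schwarz only to the first part: $\sum_x\sqrt{\hat P(x)}\,|V(x)-\hat PV|\le\sqrt{B(z)\,\VV(\hat P,V)}$.
\item The second part sums to at most $14B(z)\ell_2(z)(b-a)/(3N)$, since $|V(x)-\hat PV|\le b-a$.
\end{itemize}
This yields exactly the constants $2$ and $14/3$. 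The factor $B(z)$ comes from Cauchy--Schwarz over the support, not from a union bound over subsets. Your $\chi^2$ idea can be rescued by using the denominator $\hat P(x)+\ell_2(z)/N$, but only with worse constants.
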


We additionally require two lemmas regarding the concentration of $U_1^k(s_1^k)$ and $\tilde{U}_1^k(s_1^k)$ between the sum of $\frac{1}{N^k(z)}$ over the actual trajectory.
The following lemma is an application of Lemma 15 in \citet{lee2025minimax}, and the proof is identical to it. 

\begin{lemma}
\label{lma:concentration of U}
    With probability at least $1 - \frac{\delta}{2}$, we have
    \begin{align*}
        \sum_{k=1}^K U_1^k \le 2 \sum_{k=1}^K \sum_{h=1}^H \ind\{ 2 N^k(z_h^k) > N_h^k(z_h^k)\} \cdot \frac{\ellfinal(z_h^k)}{N^k(z_h^k)}   + 3 |\effspace| \log \frac{8 H}{\delta}
    \end{align*}
    and
    \begin{align*}
        \sum_{k=1}^K \tilde{U}_1^k \le 2 \sum_{k=1}^K \sum_{h=1}^H \ind\{ 2 N^k(z_h^k) > N_h^k(z_h^k)\} \cdot \frac{ B(z_h^k) \ell_2(z_h^k)}{N^k(z_h^k)}   + 3 |\effspace| \log \frac{8 H}{\delta}
        \, .
    \end{align*}
\end{lemma}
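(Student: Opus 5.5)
The statement is \cref{lma:concentration of U}, and the plan is to follow the argument of Lemma 15 in \citet{lee2025minimax}. The two inequalities have the same form, so I treat them together. For a generic per-visit weight $w(z) \in \{\ellfinal(z), B(z)\ell_2(z)\}$, I define the truncated increment $\beta^k(z) := \frac{w(z)}{N^k(z)} \land 1$. The clipping at $1$ in the definition of $U_h^k$ lets me work with these truncated increments, since $U_h^k(s) \le \Expec[\beta^k(z) + U_{h+1}^k(s')]$.

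The first step is a martingale decomposition along the realized trajectory of $\pi^k$. Because $z_h^k = \Zmap(s_h^k, a_h^k, y_{h+1}^k)$ is exactly the $z$ that appears inside the expectation defining $U_h^k(s_h^k)$, the recursion unrolls to
\[
U_1^k(s_1^k) = \sum_{h=1}^H \beta^k(z_h^k) + \sum_{h=1}^H \xi_h^k ,
\]
where the $\xi_h^k$ are martingale differences with respect to the episode filtration. The inequality $U \le \Expec[\cdot]$ from the clipping only helps here. Since $U_h^k \in [0,1]$, the conditional variance of $\xi_h^k$ is at most $\Expec[U_{h+1}^k(s')]$ given $s_h^k$. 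By a total-variance argument, $\sum_h \mathrm{Var}_h \lesssim \Expec$-side quantities bounded by $U_1^k$ plus the $\beta$-sum.

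The second step applies a Freedman-type inequality for $[0,1]$-bounded self-bounding sums, namely $\sum X \le 2\sum \Expec X + c\log(1/\delta)$. This gives
\[
\sum_k U_1^k \le 2 \sum_{k,h} \beta^k(z_h^k) + O\!\left(\log \tfrac{H}{\delta}\right),
\]
up to a union bound over a dyadic grid of scales, which is where $\log\frac{8H}{\delta}$ comes from.

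The third step removes the terms with $2N^k(z) \le N_h^k(z)$, that is, the steps where $z$ has been visited within episode $k$ at least as many times as in all previous episodes. For a fixed $z$, these bad steps, counted over all $k$, are handled by a doubling argument. Within episode $k$ the count of $z$ at least doubles relative to $N^k(z)$, so $N^k(z)$ at least doubles between the corresponding episodes. Each such doubling epoch contributes $\beta \le 1$ per step, but only boundedly many of these steps carry non-negligible $\beta$. The cleaner route, following Lemma 15, is to charge each bad episode a constant and note that the number of episodes in which a doubling happens is logarithmic in $KH$. Combined with the concentration step, this yields the additive $3|\effspace|\log\frac{8H}{\delta}$.

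I expect the main obstacle to be this bookkeeping. The bad-step contribution has to end up as $|\effspace|$ times a log factor rather than a term proportional to $H$, and I would need to check that the truncation at $1$, together with the factor $2$ on the martingale side, absorbs the intra-episode visits exactly as in \citet{lee2025minimax}. Everything else is routine once the trajectory $z_h^k$ is correctly identified as the sample of $\Zmap(s, a, y')$.
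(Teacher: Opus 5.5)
The paper itself only points to Lemma 15 of \citet{lee2025minimax}, so citing it is fine. The argument you sketch, however, does not yield the stated bound, because you remove the bad steps (those with $2N^k(z_h^k)\le N_h^k(z_h^k)$) in the wrong order.

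\textbf{The bad steps must be cut before concentrating.} In Step 2 you concentrate against the realized sum over all steps, and only afterwards try to discard the bad ones. At that point the cap $U\le 1$ no longer helps. What you must discard, $2\sum_{\mathrm{bad}}\beta^k(z_h^k)$, can be of order $H$ per $z$. For example, if $N^k(z)=1$ and $z$ is visited $H$ times in episode $k$, all $H$ visits are bad, each has $\beta^k(z)=1$, and no good step of that episode compensates them. So your claim that only boundedly many bad steps carry non-negligible $\beta$ is false.

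The missing idea is to stop the unrolled recursion at the first bad step $\tau^k$ of episode $k$. This is a stopping time when $z_h^k$ is determined by $(s_h^k,a_h^k)$. Using $U^k_{\tau^k}\le 1$, you get $U_1^k(s_1^k)\le\sum_{h<\tau^k}\beta^k(z_h^k)+\ind\{\tau^k\le H\}+\sum_{h<\tau^k}\xi_h^k$. Each bad episode then costs at most $1$, and every surviving $\beta$ term carries the good indicator. Only then should Freedman be applied, to the stopped martingale. Its conditional variances should be bounded by telescoping $U^2$ as in \cref{lma:variance decomposition} with $c=1$; this is where the factor $2$ and the $\log\frac1\delta$ part of the additive term come from.

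\textbf{Your count of bad episodes is too weak.} Counting them as ``logarithmic in $KH$'' gives an additive $|\effspace|\log(KH)$, not the $K$-free $3|\effspace|\log\frac{8H}{\delta}$. What you need is this: a bad step at $z$ in episode $k$ requires at least $N^k(z)$ visits to $z$ within that episode. Hence $N^k(z)\le H$ and $N^{k+1}(z)\ge 2N^k(z)$ (or $N^k(z)=0$), so each $z$ has at most $\log_2 H+2$ bad episodes.

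\textbf{Your starting inequality needs $z$ to be determined by $(s,a)$.} The inequality $U_h^k(s)\le\mathbb{E}[\beta^k(z)+U_{h+1}^k(s')]$, with $\beta^k$ clipped at $1$, is valid only when $z=\Zmap(s,a,y')$ does not depend on $y'$. In $U_h^k$ the clip is applied \emph{after} averaging over $y'$. Since $x\mapsto x\land 1$ is concave, the inequality goes the wrong way for random $z$. For instance, take $U_{h+1}^k\equiv 0$ and let $\ellfinal(z)/N^k(z)$ equal $2$ or nearly $0$ with probability $\frac12$ each; the left side is $1$ and the right side about $\frac12$.

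This is a defect of the statement in its stated generality, not only of your proof. Take $H=1$ and a $z$ reached with probability $1/K$. With constant probability that $z$ is never visited, so $U_1^k=1$ in every episode (reading $\ellfinal(z)/0$ as $+\infty$). Meanwhile the right side is only $O(\ellfinal\log K+\log\frac1\delta)$. The argument of \citet{lee2025minimax} is for $z=(s,a)$. Both your proof and the paper's appeal to it need $\Zmap(s,a,y')$ to depend only on $(s,a)$, so you should impose that restriction explicitly. It does hold in the delayed-MDP instantiations.

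\textbf{A minor point.} The self-bounding inequality you need is $\sum_k\mathbb{E}_{k-1}[X_k]\le 2\sum_k X_k+c\log\frac1\delta$, which is the reverse of the one you wrote.
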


We let $\Ecal$ be the intersection of the events of Lemmas~\ref{lma:optimal value concentration}-\ref{lma:concentration of U}.
By the union bound, we have $\PP(\Ecal) \ge 1 - \delta$.

\subsection{Optimism}

In this section, we prove the following optimism lemma for \cref{alg:partially known dynamics}.

\begin{lemma}[Optimism]
\label{lma:partially:optimism}
    The estimated value $V_h^k(s)$ in \cref{alg:partially known dynamics} satisfies $V_h^k(s) \ge V_h^*(s)$ for all $s \in \Scal$ under $\Ecal$.
\end{lemma}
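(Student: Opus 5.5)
Optimism follows by backward induction on $h$, as in the standard MVP analysis of \citet{zhang2021reinforcement}. The base case $h = H+1$ is trivial, since $V_{H+1}^k \equiv 0 = V_{H+1}^*$. For the inductive step, fix $h \le H$ and assume $V_{h+1}^k(s') \ge V_{h+1}^*(s')$ for all $s'$. Since $V_h^k(s) = \max_a Q_h^k(s,a)$ and $V_h^*(s) = \max_a Q_h^*(s,a)$, it suffices to show $Q_h^k(s,a) \ge Q_h^*(s,a)$ for every pair $(s,a)$. By \cref{assm:partially known dynamics tabular},
\begin{align*}
Q_h^*(s,a) = \Expec_{y \sim (P_\Ycal)_{s,a},\, z = \Zmap(s,a,y)}\bigl[r(s,a) + (\Peff)_z V_{h+1}^*(\cdot,y)\bigr] .
\end{align*}
The outer expectation over $y$ is identical in $Q_h^k$, because $P_\Ycal$ is known. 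It therefore suffices to prove, for every fixed $y$ with $z = \Zmap(s,a,y)$,
\begin{align*}
\MVP\bigl(r(s,a), (\hatPeff^k)_z, V_{h+1}^k(\cdot,y), N^k(z), \ellfinal(z)\bigr) \ge r(s,a) + (\Peff)_z V_{h+1}^*(\cdot,y) .
\end{align*}
If $N^k(z) \le 1$, or if the clip at $H$ is active, this is immediate because the right-hand side is at most $H$ (we assume $0 \le V^* \le H$). So it remains to handle the unclipped case.

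The key tool is the monotonicity property of the MVP bonus, Lemma 14 of \citet{zhang2021reinforcement}. Define $f(p, v, n, \ell) = pv + c_1\sqrt{\VV(p,v)\ell/n} + c_2 H\ell/n$. With $c_1 = 20/3$ and $c_2 = 400/9$, if $c_1^2 \le c_2$ then $f$ is nondecreasing in $v$ coordinatewise on $[0,H]^{\Xcal}$. Applying this with $V_{h+1}^k(\cdot,y) \ge V_{h+1}^*(\cdot,y)$ gives
\begin{align*}
f\bigl((\hatPeff^k)_z, V_{h+1}^k(\cdot,y)\bigr) \ge f\bigl((\hatPeff^k)_z, V_{h+1}^*(\cdot,y)\bigr),
\end{align*}
with $n = N^k(z)$ and $\ell = \ellfinal(z)$. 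It then suffices to check $f((\hatPeff^k)_z, V^*_{h+1}(\cdot,y)) \ge (\Peff)_z V^*_{h+1}(\cdot,y)$, which is a concentration statement for the fixed function $V^*_{h+1}(\cdot,y)$. Here $\ellfinal(z)$ is a minimum, so we split into two cases.
\begin{itemize}
\item If $\ellfinal(z) = \ell_1$, use \cref{lma:optimal value concentration empirical}, which holds uniformly over $y \in \Ycal$. It bounds the deviation by $2\sqrt{\VV(\hatPeff, V^*)\ell_1/N} + 14H\ell_1/(3N)$, and this is dominated by the bonus because $c_1 \ge 2$ and $c_2 \ge 14/3$.
\item If $\ellfinal(z) = B(z)\ell_2(z)$, use \cref{lma:all functions concentration empirical} applied to $V^*_{h+1}(\cdot,y)$ restricted to $\Xcal(z)$, with range $[0,H]$. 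This gives the same form of bound with $\ell$ replaced by $B(z)\ell_2(z)$.
\end{itemize}
Note that $\ell_1$ in the appendix carries an $H^2$ inside the log while the algorithm's $\ellfinal$ writes $H$. I would treat this as a constant-level log discrepancy, absorbed by taking $\ellfinal$ as defined in the appendix.

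The main obstacle is applying monotonicity correctly. It needs values in $[0,H]$, which holds for $V^k$ because of the clip, and it needs the bonus to use the empirical variance of the \emph{optimistic} function while the concentration lemma uses the variance of $V^*$. This is exactly what the monotonicity lemma reconciles, so the comparison must go through $f((\hatPeff^k)_z, V^*)$, as above, rather than through a direct concentration bound for $V^k$. Using \cref{lma:all functions concentration empirical} does allow a direct argument with $V^k$, which depends on the data. Even so, I would route both cases through monotonicity for uniformity.
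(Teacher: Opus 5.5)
Your skeleton matches the paper's proof: backward induction, reduction to a fixed $y$ because $P_{\Ycal}$ is known, the trivial case ($N^k(z)\le 1$ or clip active), and a choice between \cref{lma:optimal value concentration empirical,lma:all functions concentration empirical} according to which term attains the minimum in $\ellfinal(z)$. Your note about $H$ versus $H^2$ inside $\ell_1$ is a fair catch.

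The step that fails is the monotonicity claim. For your $f(p,v,n,\ell)=pv+c_1\sqrt{\VV(p,v)\ell/n}+c_2H\ell/n$, the last term does not depend on $v$. So you would need $pv+c_1\sqrt{\VV(p,v)\ell/n}$ to be nondecreasing in $v$, and it is not, whatever the relation between $c_1$ and $c_2$. Take $p=(1-\epsilon,\epsilon)$ on two points, $v=(0,t)$ and $v'=(t,t)\ge v$. Then $f(p,v)-f(p,v')=t\sqrt{1-\epsilon}\,\bigl(c_1\sqrt{\epsilon\ell/n}-\sqrt{1-\epsilon}\bigr)$, which is positive whenever $1-\epsilon<c_1^2\epsilon\ell/n$. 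For $t\ll H$ and small $\ell/n$, both values stay below $H$, so the clip does not rescue it. Raising a low-mass coordinate removes variance bonus faster than it adds mean. Hence your step $f(\hat P,V^k_{h+1}(\cdot,y))\ge f(\hat P,V^*_{h+1}(\cdot,y))$ is unjustified.

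\cref{lma:monotonic lemma} (Lemma 14 of \citet{zhang2021reinforcement}, rescaled by $H$) is about the max form $g(p,v,n,\ell)=pv+\max\{c_1\sqrt{\VV(p,v)\ell/n},\,c_2H\ell/n\}$. Where the variance term attains the max, one has $\sqrt{\VV(p,v)}\ge (c_2/c_1)H\sqrt{\ell/n}$. That lower bound on the variance is exactly what makes $c_1^2\le c_2$ sufficient for $v\in[0,H]^{\Xcal}$.

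The paper's route passes through $g$. Write $P=(\Peff)_z$, $\hat P=(\hatPeff^k)_z$, $N=N^k(z)$, and suppress $(\cdot,y)$:
\begin{align*}
P V^* &\le \hat P V^* + 2\sqrt{\VV(\hat P, V^*)\ellfinal/N} + \tfrac{14}{3}H\ellfinal/N \le g(\hat P, V^*) \\
&\le g(\hat P, V^k) \le \hat P V^k + c_1\sqrt{\VV(\hat P, V^k)\ellfinal/N} + c_2 H\ellfinal/N .
\end{align*}
The steps are justified as follows:
\begin{itemize}
\item The first inequality is your concentration step.
\item The second is part 2 of \cref{lma:monotonic lemma}, i.e.\ $\max\{\alpha,\beta\}\ge\frac{3}{10}\alpha+\frac{7}{10}\beta$. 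Termwise domination by $c_1\ge 2$, $c_2\ge 14/3$ no longer suffices once the bonus is a max.
\item The third is the correct monotonicity, using $V^k\ge V^*$.
\item The last is max $\le$ sum.
\end{itemize}
With this substitution, the rest of your argument goes through unchanged.
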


To prove the lemma, we require the following lemmas from \citet{zhang2021reinforcement}.

\begin{lemma}[Lemma 14 in \citet{zhang2021reinforcement}]
\label{lma:monotonic lemma}
    Let $f: \simplex([S]) \times \RR^S \times \RR \times \RR \rightarrow \RR$ with $f(p, V, n, \ell) := pV + \max\{ \frac{20}{3} \sqrt{\frac{\VV(p, V) \ell}{n}}, \frac{400}{9} \frac{\ell}{n} \}$, where $\VV(p, V) = \sum_{s \in [S]} p(s) (V(s) - pV)^2$.
    Then, $f$ satisfies
    \begin{enumerate}
        \item $f(p, V, n, \ell)$ is non-decreasing in $V(s)$ for all $p \in \simplex([S])$, $\| V \|_\infty \le 1$, $n, \ell > 0$.
        \item $f(p, V, n, \ell) \ge pV + 2 \sqrt{\frac{\VV(p, V) \ell}{n}} + \frac{14\ell}{3n}$.
    \end{enumerate}
\end{lemma}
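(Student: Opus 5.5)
Part 2 is a convex-combination argument and needs no analysis. For any $\lambda \in [0,1]$ we have $\max\{\alpha,\beta\} \ge \lambda \alpha + (1-\lambda)\beta$. Apply this with $\alpha = \frac{20}{3}\sqrt{\VV(p,V)\ell/n}$, $\beta = \frac{400}{9}\frac{\ell}{n}$ and $\lambda = \frac{3}{10}$. The first coefficient becomes $\frac{3}{10}\cdot\frac{20}{3} = 2$. The second becomes $\frac{7}{10}\cdot\frac{400}{9} = \frac{280}{9} \ge \frac{14}{3}$. Adding $pV$ to both sides gives the claimed lower bound. This step is routine.

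Part 1 is the real content. I will fix all coordinates except $V(s)$ and show that $g(t) := f(p, V + t e_s, n, \ell)$ is non-decreasing in $t$. The function $g$ is continuous, being a max of two continuous functions. The map $t \mapsto \VV(p, V+te_s)$ is a quadratic in $t$, so the set where the variance term dominates is a finite union of intervals. It therefore suffices to show that $g$ has non-negative derivative on the interior of each piece and at points where the variance vanishes, then glue the pieces by continuity.

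On the piece where the constant term $\frac{400}{9}\ell/n$ dominates, $g'(t) = p(s) \ge 0$ trivially. On the piece where the variance term dominates, differentiate $\VV(p,V) = pV^2 - (pV)^2$ to get $\partial_{V(s)} \VV = 2p(s)(V(s) - pV)$. This gives
\begin{align*}
g'(t) = p(s)\Bigl(1 + \tfrac{20}{3}\sqrt{\tfrac{\ell}{n}}\,\frac{V(s)-pV}{\sqrt{\VV(p,V)}}\Bigr).
\end{align*}
This is non-negative when $V(s) \ge pV$. When $V(s) < pV$, I use the dominance condition $\frac{20}{3}\sqrt{\VV\,\ell/n} \ge \frac{400}{9}\ell/n$, which is equivalent to $\frac{20}{3}\sqrt{\ell/n} \le \sqrt{\VV(p,V)}$. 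Non-negativity then reduces to showing $\frac{20}{3}\sqrt{\ell/n}\,|V(s)-pV| \le \sqrt{\VV(p,V)}$, and this follows once $|V(s) - pV| \le 1$.

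This last inequality is the step I expect to be the main obstacle. It requires $V$ to have range of width at most $1$, i.e.\ $V \in [0,1]^S$, which is the setting of \citet{zhang2021reinforcement} and the normalized $V/H$ setting in which the lemma is used. With only $\|V\|_\infty \le 1$ the width could be $2$. In that case I would first try to sharpen the bound: use $p(s)(V(s)-pV)^2 \le \VV(p,V)$ together with $\sum_{s'} p(s')(V(s')-pV) = 0$ to control $|V(s)-pV|/\sqrt{\VV}$ by $\sqrt{(1-p(s))/p(s)}$-type quantities. Failing that, I would rescale and state the lemma for $V$ with range in an interval of length $1$, which is how it is invoked in the optimism argument after dividing by $H$.

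Finally, at points where $\VV = 0$ the square root is not differentiable, but there the constant term dominates in a neighborhood, so $g$ is locally $pV + \text{const}$ and non-decreasing. Combining the pieces by continuity of $g$ completes the monotonicity claim.
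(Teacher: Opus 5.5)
Your Part 2 and your derivative computation in Part 1 are correct, and for $V\in[0,1]^S$ (or any $V$ whose range lies in an interval of length $1$) the piecewise argument is complete, including the gluing at breakpoints and at zeros of the variance. The paper gives no proof of its own; it imports the lemma from \citet{zhang2021reinforcement}, whose setting has values in $[0,1]$, and your argument is the standard one for that setting.

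The gap is in your plan for the general case. The step you flagged is not a weakness of your method but of the statement: with only $\|V\|_\infty\le 1$, part 1 is false, so your ``sharpening'' route cannot succeed. Take $S=2$, $p=(\tfrac1{10},\tfrac9{10})$, $V=(v,1)$ and $\ell/n=\tfrac{9}{1600}$. Then $pV=\tfrac{v+9}{10}$ and $\sqrt{\VV(p,V)}=\tfrac{3}{10}(1-v)$. So the variance term equals $\tfrac{3}{20}(1-v)$, which is at least $\tfrac{400}{9}\cdot\tfrac{9}{1600}=\tfrac14$ exactly when $v\le-\tfrac23$. Hence for $v\in[-1,-\tfrac23]$,
\[
f(p,V,n,\ell)=\frac{v+9}{10}+\frac{3(1-v)}{20}=\frac{21-v}{20},
\]
which strictly decreases as $V(1)=v$ increases, even though $\|V\|_\infty\le1$ throughout.

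In this example $|V(1)-pV|/\sqrt{\VV(p,V)}=3=\sqrt{(1-p(1))/p(1)}$. So the $\sqrt{(1-p(s))/p(s)}$-type bound you had in mind is attained and gives no uniform control. What actually breaks is $|V(1)-pV|=\tfrac{9}{10}(1-v)\ge\tfrac32>1$.

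Drop the sharpening attempt and make your fallback the statement: assume $V\in[0,1]^S$. This is also all the paper needs. In the proof of \cref{lma:partially:pre optimisim}, the lemma is applied to $V_{h+1}^*,V_{h+1}^k\in[0,H]^{\Xcal}$ with constant term $\tfrac{400H\ell}{9N}$. Since $\VV(p,V/H)=\VV(p,V)/H^2$, we have $pV+\max\{\tfrac{20}{3}\sqrt{\VV(p,V)\ell/n},\tfrac{400H\ell}{9n}\}=H\,f(p,V/H,n,\ell)$ with $V/H\in[0,1]^{\Xcal}$. Monotonicity is then used along a coordinatewise path from $V_{h+1}^*$ to $V_{h+1}^k$, which stays inside the box.
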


The following lemma guarantees that $\MVP$ outputs an optimistic estimate in Line 8 of \cref{alg:partially known dynamics}.
It follows the same reasoning as Lemma 4 in \citet{zhang2021reinforcement}.

\begin{lemma}
\label{lma:partially:pre optimisim}
    Fix $s \in \Scal$, $a \in \Acal$, $y \in \Ycal$, $h \in [H]$, and $k \in [K]$, and let $z = \Zmap(s, a, y)$.
    Define the conditional optimal action value (conditioned on $y$) as $Q_h^*(s, a, y) = r(s, a) + \sum_{x \in \Xcal} (P_{\Xcal})_{s, a, y}(x) V_{h+1}^*(x, y)$ and the conditional optimistic estimate as $Q_h^k(s, a, y) := \MVP(r(s, a), (\hatPeff)_z, V_{h+1}^k(\cdot, y), N^k(z), \ellfinal(z))$.
    Suppose $V_{h+1}^k(x, y) \ge V_{h+1}^*(x, y)$ holds for all $x \in \Xcal$.
    Then, we have $Q_h^k(s, a, y) \ge Q_h^*(s, a, y)$ under the event $\Ecal$.
\end{lemma}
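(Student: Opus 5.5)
The argument follows Lemma 4 of \citet{zhang2021reinforcement}, using the monotonicity in \cref{lma:monotonic lemma}. Throughout, write $p = (\Peff)_z$, $\hat p = (\hatPeff^k)_z$, $N = N^k(z)$, and $\ell = \ellfinal(z)$. By \cref{assm:partially known dynamics tabular}, $(P_{\Xcal})_{s,a,y} = p$, so $Q_h^*(s,a,y) = r(s,a) + pV_{h+1}^*(\cdot,y)$.

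\textbf{Trivial cases.} If $N \le 1$, then $\MVP$ returns $H$. If the final clipping $\land H$ is active, the output is also $H$. Since $Q_h^*(s,a,y) \le H$ by the value range assumption, the claim holds in both cases. So assume $N \ge 2$ and that the output equals
\[
r(s,a) + \hat pV_{h+1}^k(\cdot,y) + c_1\sqrt{\VV(\hat p,V_{h+1}^k(\cdot,y))\ell/N} + c_2 H\ell/N.
\]

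\textbf{Step 1: monotonicity.} Rescale by $H$ so that values lie in $[0,1]$. The expression above dominates $r(s,a) + H f(\hat p, V/H, N, \ell)$, because the sum of the two bonuses is at least their maximum. By part 1 of \cref{lma:monotonic lemma}, $f$ is non-decreasing in $V$. Combined with the hypothesis $V_{h+1}^k(\cdot,y) \ge V_{h+1}^*(\cdot,y)$, this lets us replace $V_{h+1}^k$ by $V_{h+1}^*$:
\[
Q_h^k(s,a,y) \ge r(s,a) + H f\bigl(\hat p, V_{h+1}^*(\cdot,y)/H, N, \ell\bigr).
\]
The functions only need to be compared on the support of $\hat p$, which is contained in $\Xcal(z)$. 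Outside that support the values do not matter.

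\textbf{Step 2: lower bound on $f$.} Part 2 of \cref{lma:monotonic lemma} gives
\[
Q_h^k(s,a,y) \ge r(s,a) + \hat pV^* + 2\sqrt{\VV(\hat p,V^*)\ell/N} + \tfrac{14H\ell}{3N},
\]
where $V^* = V_{h+1}^*(\cdot,y)$.

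\textbf{Step 3: concentration.} It remains to show $(p-\hat p)V^* \le 2\sqrt{\VV(\hat p,V^*)\ell/N} + 14H\ell/(3N)$. Since $\ell = \ell_1 \land B(z)\ell_2(z)$, split into two cases.
\begin{itemize}
\item If $\ell = \ell_1$, apply \cref{lma:optimal value concentration empirical} with the fixed $y$ and $h$.
\item If $\ell = B(z)\ell_2(z)$, apply \cref{lma:all functions concentration empirical} to $V^*$ restricted to $\Xcal(z)$, with $a = 0$ and $b = H$.
\end{itemize}
In either case the required inequality holds under $\Ecal$. Substituting into Step 2 gives $Q_h^k(s,a,y) \ge r(s,a) + pV^* = Q_h^*(s,a,y)$.

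\textbf{Main obstacle.} The delicate point is Step 1. The monotonicity in \cref{lma:monotonic lemma} holds only for the $\max$-form of the bonus and only for values in $[0,1]$. So I need to verify that the sum-form bonus with the factor $H$ dominates $H f$ after rescaling. I also need to confirm that the case split on $\ell$ matches the log factors of the two concentration lemmas, in particular that $\ell_1$ is the same quantity in \cref{lma:optimal value concentration empirical} as in $\ellfinal$.
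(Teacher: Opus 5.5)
Your proposal is correct and follows essentially the same argument as the paper. Both proofs dispose of the case where $\MVP$ returns $H$, bound $(P-\hat P)V_{h+1}^*$ via \cref{lma:optimal value concentration empirical,lma:all functions concentration empirical} combined through $\ellfinal = \ell_1 \land B(z)\ell_2(z)$ (your case split is equivalent to the paper's taking the minimum), and use \cref{lma:monotonic lemma} to pass from $V_{h+1}^*$ to $V_{h+1}^k$ and from the max-form to the sum-form bonus. The only differences are cosmetic: you run the chain downward from $Q_h^k$ rather than upward from $Q_h^*$, and you spell out the rescaling by $H$ that the paper applies implicitly.
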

\begin{proof}
    For simplicity, denote $P(\cdot) :=  (\Peff)_z(\cdot)$, $\hat{P}(\cdot) := (\hatPeff^k)_z(\cdot)$, $r := r(s, a)$, $V_{h+1}^*(\cdot) := V_{h+1}^*(\cdot, y)$, $V_{h+1}^k(\cdot) := V_{h+1}^k(\cdot, y)$, $N := N^k(z)$, and $\ellfinal := \ellfinal(z)$.
    If $Q_h^k(s, a, y) = H$, then the lemma becomes trivial.
    Suppose $Q_h^k(s, a, y) < H$, which implies $N \ge 2$.
    Under the event of \cref{lma:optimal value concentration empirical}, we have
    \begin{align*}
        (P - \hat{P})V_{h+1}^* \le 2 \sqrt{\frac{\VV(\hat{P}, V_{h+1}^*) \ell_1}{N}} + \frac{14 H \ell_1}{3 N}
        \, .
    \end{align*}
    Under the event of \cref{lma:all functions concentration empirical}, we have
    \begin{align*}
        (P - \hat{P})V_{h+1}^* \le 2 \sqrt{\frac{\VV(\hat{P}, V_{h+1}^*) B(z) \ell_2(z)}{N}} + \frac{14 H B(z) \ell_2(z)}{3 N}
        \, .
    \end{align*}
    Taking the minimum over the two bounds, we have
    \begin{align*}
        (P - \hat{P})V_{h+1}^* \le 2 \sqrt{\frac{\VV(\hat{P}, V_{h+1}^*) \ellfinal}{N}} + \frac{14 H \ellfinal}{3 N}
        \, ,
    \end{align*}
    where we use that $\ellfinal = \ellfinal(z) = \ell_1 \land B(z) \ell_2(z)$.
    Then, $Q_h^*(s, a, y)$ is upper bounded as
    \begin{align*}
        Q_h^*(s, a, y)
        & = r + P V_{h+1}^*
        \\
        & = r+ \hat{P}V_{h+1}^* + ( P - \hat{P}) V_{h+1}^*
        \\
        & \le r + \hat{P}V_{h+1}^* + 2 \sqrt{ \frac{\VV(\hat{P}, V_{h+1}^*) \ellfinal}{N} } + \frac{14 H \ellfinal}{3 N}
        \, .
    \end{align*}
    By \cref{lma:monotonic lemma}, we have
    \begin{align*}
        \hat{P}V_{h+1}^*+ 2 \sqrt{ \frac{\VV(\hat{P}, V_{h+1}^*) \ellfinal}{N} } + \frac{14 H \ellfinal}{3 N}
        & \le \hat{P}V_{h+1}^* + \max\left\{ \frac{20}{3} \sqrt{\frac{\VV(\hat{P}, V_{h+1}^*)\ellfinal}{N}}, \frac{400 H \ellfinal}{9 N} \right\}
        \\
        & \le \hat{P} V_{h+1}^k + \max\left\{ \frac{20}{3} \sqrt{\frac{\VV(\hat{P}, V_{h+1}^k)\ellfinal}{N}}, \frac{400 H \ellfinal}{9 N} \right\}
        \\
        & \le \hat{P} V_{h+1}^k + \frac{20}{3} \sqrt{\frac{\VV(\hat{P}, V_{h+1}^k)\ellfinal}{N}} + \frac{400 H \ellfinal}{9 N}
        \, ,
    \end{align*}
    where we use the condition $V_{h+1}^k \ge V_{h+1}^*$ for the second inequality.
    Therefore, we conclude that
    \begin{align*}
        Q_h^*(s, a, y)
        & \le r + \hat{P}V_{h+1}^* + 2 \sqrt{ \frac{\VV(\hat{P}, V_{h+1}^*) \ellfinal}{N} } + \frac{14 H \ellfinal}{3 N}
        \\
        & \le r + \hat{P} V_{h+1}^k + \frac{20}{3} \sqrt{\frac{\VV(\hat{P}, V_{h+1}^k)\ellfinal}{N}} + \frac{400 H \ellfinal}{9 N}
        \\
        & = Q_h^k(s, a, y)
        \, .
    \end{align*}
\end{proof}

We now prove \cref{lma:partially:optimism}.

\begin{proof}[Proof of \cref{lma:partially:optimism}]
    We prove the lemma by backward induction on $h$.
    The inequality is trivial for $h = H+1$.
    Suppose the inequality holds for $h + 1$.
    Then, by \cref{lma:partially:pre optimisim}, we have $Q_h^k(s, a, y) \ge Q_h^*(s, a, y)$.
    It implies that
    \begin{align*}
        Q_h^k(s, a) & = \sum_{y \in \Ycal} P_{\Ycal}(y \mid s, a) Q_h^k(s, a, y)
        \\
        & \ge \sum_{y \in \Ycal} P_{\Ycal}(y \mid s, a) Q_h^*(s, a, y)
        \\
        & = Q_h^*(s, a)
        \, .
    \end{align*}
    Since $V_h^k(s)$ is the maximum of $Q_h^k(s, a)$ over $a \in \Acal$, we have
    \begin{align*}
        V_h^k(s)  = \max_{a \in \Acal} Q_h^k(s, a) \ge \max_{a \in \Acal} Q_h^*(s, a) = V_h^*(s)
        \, .
    \end{align*}
    This completes the induction step.
\end{proof}

\subsection{Bounding Cumulative Regret}

For the remainder of the analysis, we combine the techniques developed in \citet{zhang2024settling,lee2025minimax} for simpler analysis.
We complete the proof of \cref{thm:partially known dynamics} by proving the following lemma:
\begin{lemma}
\label{lma:partially:final}
    Under $\Ecal$, the following inequality holds for all $k \in [K]$:
    \begin{align*}
        \sum_{k=1}^K \left( V_1^k(s_1^k) - V_1^{\pi^k}(s_1^k) \right) \le 56 H \sqrt{K ( \log eKH ) \sum_{z \in \effspace} \ellfinal(z)} + 2640 H ( \log eKH ) \sum_{z \in \effspace} B(z) \ell_2(z)
        \, .
    \end{align*}
\end{lemma}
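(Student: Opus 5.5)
The plan is to follow the variance-aware analysis of \citet{zhang2024settling,lee2025minimax}, adapted so that every concentration step is charged to the effective index $z=\Zmap(s,a,y')$ rather than to the augmented pair $(s,a)$. First, decompose the surplus along the executed trajectory. Fix $k$ and write $s_h^k=(x_h^k,y_h^k)$, $z_h^k=\Zmap(s_h^k,a_h^k,y_{h+1}^k)$, and $\beta_h^k := V_h^k(s_h^k)-V_h^{\pi^k}(s_h^k)$. Since $\pi^k$ is greedy, $V_h^k(s_h^k)=Q_h^k(s_h^k,a_h^k)$, which is an expectation over $y'\sim P_{\Ycal}$ of the conditional $\MVP$ output. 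Drop the truncation at $H$ (an upper bound) unless $N^k(z)\le 1$; in that case bound the term by $H\,\ind\{N^k(z)\le1\}\lesssim H\ell^*(z)/N^k(z)$. Then expand
\begin{align*}
Q_h^k(s,a,y')-Q_h^{\pi^k}(s,a,y') \le (\hatPeff^k-\Peff)_z V_{h+1}^k(\cdot,y') + \Peff_z\big(V_{h+1}^k-V_{h+1}^{\pi^k}\big)(\cdot,y') + c_1\sqrt{\tfrac{\VV(\hatPeff_z,V_{h+1}^k)\ell^*}{N}}+\tfrac{c_2H\ell^*}{N}.
\end{align*}
The first term is split as $(\hatPeff-\Peff)_z V^*_{h+1}+(\hatPeff-\Peff)_z(V^k_{h+1}-V^*_{h+1})$. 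The $V^*$ part is controlled by \cref{lma:optimal value concentration}, giving a bonus with $\ell_1$. Since $V^*$ involves no random function, we may instead use \cref{lma:all functions concentration} and take the minimum, which yields $\ell^*(z)$. The second part involves the data-dependent function $V^k-V^*\in[0,H]$. For it I would use \cref{lma:all functions concentration} together with $\VV(\Peff,V^k-V^*)$. AM–GM converts it into $\frac1H\Peff(V^k-V^*)$ plus $H B(z)\ell_2(z)/N$, and the $\frac1H$ fraction can be absorbed when recursing, at the cost of a factor $(1+1/H)^H\le e$. The empirical variance in the bonus is converted to the true variance $\VV(\Peff_z,V_{h+1}^{\pi^k})$ plus lower-order terms in the standard way, again paying $B(z)\ell_2(z)/N$.

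Second, unroll this recursion over $h$ and take expectations under $\pi^k$. This gives
\begin{align*}
V_1^k(s_1^k)-V_1^{\pi^k}(s_1^k)\lesssim \EE_{\pi^k}\Big[\sum_h\sqrt{\VV(\Peff_{z_h},V^{\pi^k}_{h+1})\,\ell^*(z_h)/N^k(z_h)}\Big] + H\,\tilde U_1^k,
\end{align*}
written in the expectation-over-trajectory form. The $U_h^k$ and $\tilde U_h^k$ recursions were introduced precisely so that Cauchy–Schwarz can be applied inside the expectation. The square-root term is at most
\begin{align*}
\sqrt{\EE_{\pi^k}\big[\textstyle\sum_h\VV_h\big]}\cdot\sqrt{\EE_{\pi^k}\big[\textstyle\sum_h\ell^*/N\big]}\le \sqrt{H^2\cdot U_1^k},
\end{align*}
using the law of total variance, $\EE[\sum_h\VV(P,V^{\pi}_{h+1})]\le H^2$, together with the fact that the $y$-marginal is exact, so the conditional variances still sum correctly. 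The clipping at $1$ in $U$ is handled because each $\beta\le H$. Summing over $k$ and applying Cauchy–Schwarz once more gives $H\sqrt{K\sum_k U_1^k}$ plus $H\sum_k\tilde U_1^k$.

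Finally, apply \cref{lma:concentration of U}. This reduces both sums to $\sum_{k,h}\ind\{2N^k>N_h^k\}\ell^*(z_h^k)/N^k(z_h^k)$. Because $N^k(z_h^k)\ge N_h^k(z_h^k)/2$ on the indicator, grouping by $z$ bounds this by a harmonic sum, giving $\le 2\ell^*(z)\log(eKH)$ per $z$, and similarly $B(z)\ell_2(z)\log(eKH)$. Substituting yields the two claimed terms, $H\sqrt{K\log(eKH)\sum_z\ell^*(z)}$ and $H\log(eKH)\sum_zB(z)\ell_2(z)$, with the additive $|\effspace|\log(8H/\delta)$ absorbed into the second.

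The main obstacle is the correction term $(\hatPeff-\Peff)_z(V^k_{h+1}-V^*_{h+1})(\cdot,y')$. Its function depends on the data, so it can only be controlled through the $B(z)$-dependent uniform bound. I must check that this only ever enters the lower-order $\tilde U$ term and never the leading term, so that the leading term retains $\ell^*(z)$, which is $\lesssim\log|\Ycal|$ when $B(z)$ is large. Getting the AM–GM split and the $(1+1/H)$ absorption right, with variances of $V^k-V^{\pi^k}$ bounded by $H\cdot\Peff(V^k-V^{\pi^k})$, is where I expect the bookkeeping to be delicate.
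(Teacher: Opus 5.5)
Your outer structure matches the paper: a per-episode bound $H\sqrt{U_1^k(s_1^k)}+H\tilde U_1^k(s_1^k)$, Cauchy--Schwarz over $k$, then \cref{lma:concentration of U} and a harmonic sum. The per-episode step, however, has a genuine gap: the $(1+1/H)$-absorption route cannot give a lower-order term linear in $H$. Your AM--GM bookkeeping is off. Write $\tilde V:=V^k-V^*\in[0,H]$ and bound $\VV(\Peff_z,\tilde V_{h+1})\le H\,\Peff_z\tilde V_{h+1}$, as you plan. Then the correction $\sqrt{H\,\Peff_z\tilde V_{h+1}\,B(z)\ell_2(z)/N^k(z)}$ splits in one of two ways, up to constants:
\begin{itemize}
\item $\tfrac12\Peff_z\tilde V_{h+1}+\tfrac{HB(z)\ell_2(z)}{2N^k(z)}$. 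This cannot be absorbed, since the recursion factor becomes $(3/2)^H$.
\item $\tfrac1H\Peff_z\tilde V_{h+1}+\tfrac{H^2B(z)\ell_2(z)}{4N^k(z)}$. After unrolling this gives $O\bigl(H^2(\log eKH)\sum_z B(z)\ell_2(z)\bigr)$, a factor $H$ larger than the statement.
\end{itemize}
This is exactly the classical UCBVI loss. It recurs when you convert $\VV(\hatPeff_z,V^k_{h+1})$ to $\VV(\Peff_z,V^{\pi^k}_{h+1})$, because that creates a $\VV(\Peff_z,V^k_{h+1}-V^{\pi^k}_{h+1})$ correction, which you also plan to handle by ``variance $\le H\times$ mean''.

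A separate, smaller issue concerns clipping. \cref{lma:concentration of U} is about the stepwise-clipped $U_1^k,\tilde U_1^k$. The unclipped trajectory sum you get by unrolling first can exceed $U_1^k$ by an arbitrary factor even when $U_1^k<1$, because clipping can occur at an intermediate step in a likely branch. So ``$\beta\le H$'' must be used state by state inside a backward induction, not once at $h=1$.

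The missing idea is to keep the variance of the difference and telescope it, instead of bounding it by its mean.
\begin{itemize}
\item AM--GM with weight $\tfrac{1}{8H}$ on the variance costs only $O(HB(z)\ell_2(z)/N^k(z))$ (\cref{lma:upper bound on est est,lma:upper bound the bonus term}).
\item \cref{lma:eff variance decomposition} applied to $\tilde V$ gives $\mathbb{E}_{y'}\VV(\Peff_z,\tilde V_{h+1}(\cdot,y'))\le P_{s,a}\tilde V_{h+1}^2-\tilde V_h(s)^2+2H I_1$. Here $I_1$ is the surplus $\max\{V_h^k(s)-r(s,a)-P_{s,a}V^k_{h+1},0\}$, and the bound uses $V_h^*(s)\ge Q_h^*(s,a)$.
\item The resulting $\tfrac12 I_1$ is absorbed by solving the self-bounding inequality in $I_1$ (\cref{lma:surplus bound}).
\item The squared terms telescope because the induction in \cref{lma:partially:per episode 2} carries the potential $-\tfrac{1}{2H}(V_h^k(s)-V_h^*(s))^2$. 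The trivial case $\tilde U_h^k(s)=1$ handles the clipping.
\end{itemize}
No multiplicative factor is paid. The leading term then involves $\VV(\Peff_z,V^*_{h+1})$ along the trajectory of $\pi^k$, whose expected sum is at most $H^2$ by \cref{lma:eff variance decomposition} together with $V_h^*(s)-P_{s,a}V^*_{h+1}\ge r(s,a)\ge 0$.
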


\begin{proof}[Proof of \cref{lma:partially:final}]

Recall that $U_h^k$ and $\tilde{U}_h^k$ are functions defined as
\begin{align*}
    &U_h^k(s) := \Expec_{\substack{s' = (x', y') \sim P_{s, a}\\z = \Zmap(s, a, y')}} \left[ \frac{\ellfinal(z)}{N^k(z)} + U_{h+1}^k(s') \right] \land 1
    \, ,
    \\
    & \tilde{U}_h^k(s) := \Expec_{\substack{s' = (x', y') \sim P_{s, a}\\z = \Zmap(s, a, y')}} \left[ \frac{B(z)\ell_2(z)}{N^k(z)} + \tilde{U}_{h+1}^k(s') \right] \land 1
    \, ,
\end{align*}
for $h \in [H]$, where $a = \pi_h^k(s)$, starting from $U_{H+1}^k(s) = \tilde{U}_{H+1}^k(s) := 0$.
The following lemma shows that the instantaneous regret is upper bounded by $U_1^k$ and $\tilde{U}_1^k$.

\begin{lemma}
\label{lma:partially:per episode 1}
    Under $\Ecal$, we have
    \begin{align*}
        V_1^k(s_1^k) - V_1^{\pi^k}(s_1^k) \le 28 H \sqrt{U_1^k(s_1^k)} + 660 H \tilde{U}_1^k(s_1^k)
    \end{align*}
    for all $k \in [K]$.
\end{lemma}
We provide a high level proof sketch of \cref{lma:partially:per episode 1} here.
We express the instantaneous regret as the sum of the bonus terms and some extra terms as
\begin{align*}
    V_1^k(s_1^k) - V_1^{\pi^k}(s_1^k) \lesssim \Expec_{\pi^k} \left[\sum_{h=1}^H \left( \sqrt{\frac{\VV((\Peff)_{z_h^k}, V_{h+1}^*(\cdot, y_{h+1}^k) \ellfinal(z_h^k)}{N^k(z_h^k)}} + \frac{H B(z_h^k)\ell_2(z_h^k)}{N^k(z_h^k)} \right) \right]
    \, .
\end{align*}
By applying the Cauchy-Schwarz inequality twice on the first term, we obtain that
\begin{align*}
    \Expec_{\pi^k} \left[ \sum_{h=1}^H \sqrt{\frac{\VV((\Peff)_{z_h^k}, V_{h+1}^*(\cdot, y_{h+1}^k) \ellfinal(z_h^k)}{N^k(z_h^k)}} \right] \le \sqrt{\Expec_{\pi^k} \left[ \sum_{h=1}^H \VV((\Peff)_{z_h^k}, V_{h+1}^*(\cdot, y_{h+1}^k) \right] \Expec_{\pi^k} \left[ \sum_{h=1}^H \frac{\ellfinal(z_h^k)}{N^k(z_h^k)}\right]}
    \, .
\end{align*}
Then, we show that the expected sum of the variances are upper bounded by $H^2$, and the sum of the $\frac{\ellfinal(z)}{N(z)}$-type terms are expressed as $U_1^k$ and $\tilde{U}_1^k$, which yields \cref{lma:partially:per episode 1}.
The full proof of \cref{lma:partially:per episode 1} is deferred to \cref{appx:proof of per episode 1}.

Taking the sum over $k \in [K]$ and applying \cref{lma:partially:per episode 1}, we obtain that
\begin{align*}
    \sum_{k=1}^K \left( V_1^k(s_1^k) - V_1^{\pi^k}(s_1^k) \right)
    & \le \sum_{k=1}^K \left( 28 H \sqrt{U_1^k(s_1^k)} + 660 H \tilde{U}_1^k(s_1^k)\right)
    \\
    & \le 28 H \sqrt{K \sum_{k=1}^KU_1^k(s_1^k)} + 660 H \sum_{k=1}^K \tilde{U}_1^k(s_1^k)
    \, ,
\end{align*}
where we use the Cauchy-Schwarz inequality for the second inequality.
The sums of $U_1^k(s_1^k)$ and $\tilde{U}_1^k(s_1^k)$, which are the expected sums of $\frac{1}{N(z)}$-type of terms, are bounded by the following lemma, whose proof is deferred to \cref{appx:proof of sum of U}.

\begin{lemma}
\label{lma:sum of U}
    Under $\Ecal$, we have
    \begin{align*}
        \sum_{k=1}^K U_1^k(s_1^k) \le 4 (\log eKH)  \sum_{z \in \effspace} \ellfinal(z)
    \end{align*}
    and
    \begin{align*}
        \sum_{k=1}^K \tilde{U}_1^k(s_1^k) \le 4 (\log eKH)  \sum_{z \in \effspace} B(z) \ell_2 (z)
        \, .
    \end{align*}
\end{lemma}

By applying \cref{lma:sum of U}, we have
\begin{align*}
    \sum_{k=1}^K \left( V_1^k(s_1^k) - V_1^{\pi^k}(s_1^k) \right)
    & \le 28 H \sqrt{K \sum_{k=1}^KU_1^k(s_1^k)} + 660 H \sum_{k=1}^K \tilde{U}_1^k(s_1^k)
    \\
    & \le 56 H \sqrt{K ( \log e KH) \sum_{z \in \effspace} \ellfinal(z)} + 2640 H (\log e KH) \sum_{z \in \effspace} B(z) \ell_2(z)
    \, ,
\end{align*}
completing the proof.
\end{proof}

\begin{proof}[Proof of \cref{thm:partially known dynamics}]
    By \cref{lma:partially:optimism} and \cref{lma:partially:final}, we have
    \begin{align}
        \sum_{k=1}^K\left( V_1^*(s_1^k) - V_1^{\pi^k}(s_1^k) \right)
        & \le 
        \sum_{k=1}^K\left( V_1^k(s_1^k) - V_1^{\pi^k}(s_1^k) \right)
        \nonumber
        \\
        & = \Ocal \left( H \sqrt{K (\log KH) \sum_{z \in \effspace} \ellfinal(z)} + H (\log KH) \sum_{z \in \effspace} B(z) \ell_2(z) \right)
        \, .
        \label{eq:partially known tabular final bound}
    \end{align}
    Recall that $\ellfinal(z) = \log \frac{32 H^2 |\Ycal| |\effspace| K}{\delta} \land B(z)  \log \frac{32 HB(z) |\effspace| K}{\delta}$ and $\ell_2(z) = \log \frac{32 HB(z) |\effspace| K}{\delta}$.
    Omitting logarithmic factors on $H, K, \delta, B(z)$, and $|\effspace|$, we conclude that the regret bound is in the order of
    \begin{align*}
        \tilde{\Ocal} \left( H \sqrt{K \sum_{z \in \effspace} (\log | \Ycal| ) \land B(z) } + H \sum_{z \in \effspace} B(z) \right)
        \, .
    \end{align*}
\end{proof}

\subsection{Proof of Theorems~\ref{thm:known delay} and~\ref{thm:unknown delay}}
\label{appx:proof of delay tabular}

\begin{proof}[Proof of \cref{thm:known delay}]
    We first show that the augmented MDP constructed in \cref{sec:augmented MDP} satisfies \cref{assm:partially known dynamics tabular}.
    Recall that the augmented state space is defined as $\Scal_{\mathrm{aug}} = \Scal \times \cup_{D=0}^{\Dmax} \Acal \times \Dcal \times [H+1]$.
    We decompose it as $\Xcal = \Scal$ and $\Ycal = \cup_{D=0}^{\Dmax} \Acal \times \Dcal \times [H+1]$.
    Since the agent knows the dynamics of the action queue, the delay distribution, and the current time step, the agent is fully aware of the dynamics of $\Ycal$.
    We take $\effspace = \Scal \times (\Acal \cup \{\emptyset\})$, where the feature map is defined as $\Zmap((s, \QA, \tran, h), a, y) = (s, \QA_1)$ and $\Zmap((s, \QA, \tDelta, h), a, y) = (s, \emptyset)$ for $\tDelta \ne \tran$.
    The effective transition distribution becomes $\Peff(s, a) = P(s, a)$ for $ a\in \Acal$ and $\Peff(s, \emptyset) = \ind_s$.
    In this way, we have shown that the augmented MDP satisfies \cref{assm:partially known dynamics tabular}.

    Now, we show that \cref{thm:known delay} is a corollary of \cref{thm:partially known dynamics}.
    We have $|\Ycal| \le (A+1)^{\Dmax}(\Ddelay+2)(H+1)$, and hence $\log |\Ycal| \lesssim \Dmax$, omitting the logarithmic factors.
    For $z \in \effspace$, we have $B(z) \le B$.
    We also have $| \effspace | \le S(A+1)$.
    Applying \cref{thm:partially known dynamics}, where we may refer to Eq.~\eqref{eq:partially known tabular final bound} for logarithmic dependence, we conclude that the regret bound is at most
    \begin{align*}
        \Ocal \left( H \sqrt{(( \Dmax \log A +\iota) \land (B \iota)) SA K (\log KH) } + H BSA (\log KH) \iota \right)
        \, ,
    \end{align*}
    where $\iota = \log \frac{HSAK}{\delta}$.

    In order to achieve a tighter problem-dependent bound that scales with the actual length of the delay as mentioned in \cref{rmk:first order}, we take the union bound in a specially designed way.
    Instead of taking the uniform union bound over $\Ycal$, we assign probabilities depending on the length of $\QA$.
    Specifically, while we previously took the union bound by assigning the same probability of $\frac{\delta'}{(A+1)^{\Dmax}}$ to all $\QA \in \cup_{D=0}^{\Dmax} \Acal^D$, instead, we assign probability $\frac{\delta'}{2 (D+1)^2 A^D}$ to action queues $\QA$ in $\Acal^D$.
    By doing so, the $\Dmax \log A$ factor reduces to $D \log A$ for such elements.
    In the analysis, only the reachable states affect the regret bound.
    Hence, defining $\Dmax(s, a)$ as the maximum possible delay length of $D_h$ followed by $(s_h, a_h) = (s, a)$ for any $h\in [H]$, we achieve the regret bound whose $(\Dmax \land B) SA$ factor is replaced by $\sum_{(s, a) \in \Scal \times \Acal} (\Dmax(s, a) \land B)$.
\end{proof}

\begin{proof}[Proof of \cref{thm:unknown delay}]
    \cref{sec:proof sketch} largely explains the main processes, and we fill in some missing details.
    To show that the augmented MDP satisfies \cref{assm:partially known dynamics tabular}, we decompose the state space as $\Xcal = \Scal \times \Dcal$ and $\Ycal = \cup_{D=0}^{\Dmax} \Acal^D \times [H+1]$.
    We take $\effspace = \Scal\times \Acal \times \Dcal \times \{\emptyset, 0, 1 \}$, where we need the last component to handle exceptional cases such as truncation of the delay or the termination of the episode.
    For most augmented states, we have $\Zmap((s, \QA, \tDelta, h), a, y) = (s, \QA_1, \tDelta, \emptyset)$.
    The last component is set to $0$ when we have $\len(\QA) = 0$ and $\tDelta = -1$, meaning that there is no state to reveal and the delay must be set to $0$.
    The last component is set to $1$ when we have either $\len(\QA) = \Dmax$ and $\tDelta \in [0 : \Ddelay]$, which corresponds to case that the delay is truncated by $\Dmax$, or $h = H+1$, which corresponds to the termination of the episode.
    \\
    The effective transition distribution is defined as $\Peff(s, a, \tran, \cdot) = P_{s, a} \times \ind_{\tDelta = -1}$ and $\Peff(s, a, \tDelta, \emptyset) = P_{\tran}(s, a, \tDelta) \ind_{s, a, \tran} + (1 - P_{\tran}(s, a, \tDelta)) \ind_{s, a, \tDelta + 1}$ for $\tDelta \ne \tran$.
    If the last variable is 0 or 1 in the latter case, then $P_{\tran}$ is replaced by 0 or 1, respectively.
    In this way, we can correctly classify the augmented MDP as a special case of \cref{assm:partially known dynamics tabular}.

    Showing that \cref{thm:unknown delay} follows from \cref{thm:partially known dynamics} is exactly as explained in \cref{sec:proof sketch}, where we obtain the regret bound of
    \begin{align*}
        \Ocal \Big( & H \sqrt{((\Dmax \log A + \iota) \land (B\iota) + \Ddelay \iota) SAK (\log KH)}
        \\
        & \qquad + H (B + \Ddelay) SA (\log KH) \iota\Big)
        \, .
    \end{align*}
    
    Achieving the improved problem-dependent bound described in \cref{rmk:first order} is done in the same way with \cref{thm:known delay}.
\end{proof}

\section{Proofs of Technical Lemmas in Appendix~\ref{appx:proof of partially}}

\subsection{Proof of Lemma~\ref{lma:all functions concentration}}
\label{appx:proof of all functions concentration}

We first prove the following lemma.

\begin{lemma}
\label{lma:probability concentration}
    The following inequality holds for all $x \in \Xcal$, $z \in \effspace$, and $k \in [K]$ that satisfies $N^k(z) \ge 2$ with probability at least $1 - \frac{\delta}{8}$:
    \begin{align*}
        \left| (\Peff)_z(x) - (\hatPeff^k)_z(x) \right| \le \sqrt{ \frac{ 2 (\Peff)_z(x) \ell_2(z) }{N^k(z)}} + \frac{ \ell_2(z)}{3 N^k(z)}
        \, .
    \end{align*}
\end{lemma}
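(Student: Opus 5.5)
This is a per-coordinate Bernstein bound followed by a union bound; the only delicate point is the sampling model. Fix $z \in \effspace$ and $x \in \Xcal$, and consider the indicators $\ind\{x_{h+1}^i = x\}$ over the successive occurrences of $z$ in the data, i.e.\ over the $(i,h)$ with $z_h^i = z$. By \cref{assm:partially known dynamics tabular}, conditional on the history up to the moment $z$ is realized, the next $\Xcal$-component is drawn from $(\Peff)_z$. The unknown part of the transition depends on $(s,a,y)$ only through $\Zmap(s,a,y)=z$.

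\textbf{Reduction to i.i.d.\ samples.} I will use the standard ``stack of samples'' (skeleton) argument: imagine an infinite i.i.d.\ sequence $X_1^z, X_2^z, \ldots \sim (\Peff)_z$, where the $j$-th occurrence of $z$ consumes $X_j^z$. Then $(\hatPeff^k)_z(x)$ is exactly the empirical frequency of $x$ among the first $n = N^k(z)$ samples of this sequence. The alternative is to work directly with the martingale $\sum_j (\ind\{X_j = x\} - (\Peff)_z(x))$ and apply Freedman's inequality. I prefer the skeleton, because it lets me invoke \cref{lma:bernstein} directly for each fixed $n$.

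\textbf{Concentration and union bound.} For fixed $z$, $x$ and fixed $n \in [2, KH]$, I apply \cref{lma:bernstein} to the Bernoulli variables $\ind\{X_j^z = x\}$. Their range is $1$ and their variance is $(\Peff)_z(x)(1-(\Peff)_z(x)) \le (\Peff)_z(x)$. This gives
\[
\bigl|(\Peff)_z(x) - \hat p_n\bigr| \le \sqrt{\frac{2 (\Peff)_z(x)\,\ell}{n}} + \frac{\ell}{3n}
\]
with failure probability $2e^{-\ell}$ or similar. I then take a union bound over $n \in [KH]$ and $z \in \effspace$, spending total failure probability $\delta/8$.

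\textbf{The main obstacle: the union bound over $x$.} This union must not incur $\log|\Xcal|$, because $\ell_2(z)$ only contains $\log B(z)$. The fix is to observe that for $x \notin \Xcal(z)$ we have $(\Peff)_z(x) = 0$, and hence $(\hatPeff^k)_z(x) = 0$ almost surely. The inequality is therefore trivial there, and the union only needs to range over the at most $B(z)$ points of $\Xcal(z)$.

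\textbf{Checking the constant.} I allocate failure probability $\delta/(8\cdot KH \cdot B(z)\cdot|\effspace|)$ to each triple $(n, x, z)$. The two-sided Bernstein bound then needs $\ell = \log\frac{16 KH B(z) |\effspace|}{\delta}$, which is at most $\ell_2(z) = \log\frac{32 H B(z)|\effspace| K}{\delta}$. Since the bound is monotone in $\ell$, replacing $\ell$ by $\ell_2(z)$ is valid. Summing the allocated probabilities over all triples gives total failure probability at most $\delta/8$.

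Finally, the random count $N^k(z)$ equals some $n \le KH$, so the bound holds simultaneously for all $k$. The condition $N^k(z)\ge 2$ (rather than $\ge 1$) is inherited from the form of \cref{lma:bernstein} and costs nothing.
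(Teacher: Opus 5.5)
Your proposal is correct and follows the paper's proof: the paper applies \cref{lma:bernstein} to the indicator of $x$, notes the bound is trivial when $(\Peff)_z(x)=0$, and takes a union bound over $z$, the at most $B(z)$ points of $\Xcal(z)$, and $n \le KH$; your stack-of-samples reduction makes explicit the adaptive-sampling step the paper leaves implicit. One small quibble: \cref{lma:bernstein} is stated for $n \ge 1$, so the restriction to $N^k(z)\ge 2$ does not come from it, though this is harmless because the statement only asks for $N^k(z)\ge 2$.
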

\begin{proof}
    Fix $x \in \Xcal$, $z \in \effspace$, and $n \ge 2$.
    If $(\Peff)_z(x) = 0$, then the inequality is trivial.
    Suppose $x \in \Xcal(z)$.
    Apply \cref{lma:bernstein} to $Z = \ind\{ x' = x\}$ with $x' \sim \Peff(z)$.
    Note that in this case, we have $\Var(Z) = (\Peff)_z(x) (1 - (\Peff)_z(x)) \le (\Peff)_z(x)$.
    Then, we obtain that the following inequality holds with probability at least $1 - \frac{\delta}{8}$:
    \begin{align*}
        \left| (\Peff)_z(x) - (\hatPeff^k)_z(x) \right| \le  \sqrt{ \frac{ 2 (\Peff)_z(x) \log \frac{32}{\delta} }{N^k(z)}} + \frac{  \log \frac{32}{\delta}}{3 
        N^k(z)}
        \, .
    \end{align*}
    The proof is completed by taking the union bound over $z \in \effspace$, corresponding $x \in \Xcal(z)$, and $2 \le n \le KH$, where we use that $|\Xcal (z) | \le B(z)$ by the definition of $B(z)$.
\end{proof}

\begin{proof}[Proof of \cref{lma:all functions concentration}]
    Fix $z \in \effspace$ and $k \in [K]$ with $N^k(z)\ge 1$.
    Since $\sum_{x \in \Xcal(z)}(\Peff)_z(x)= \sum_{x \in \Xcal(z)} (\hatPeff^k)_z(x) =1$, we have
    \begin{align*}
        \left((\Peff)_z-(\hatPeff^k)_z\right)V
        =
        \left((\Peff)_z-(\hatPeff^k)_z\right)\left(V-(\Peff)_zV\right),
    \end{align*}
    and by the triangle inequality,
    \begin{align*}
        \left|\left((\Peff)_z-(\hatPeff^k)_z\right)\left(V-(\Peff)_zV\right)\right|
        \le
        \sum_{x\in\Xcal(z)}
        \left|
        \left((\Peff)_z(x)-(\hatPeff^k)_z(x)\right)\left(V(x)-(\Peff)_zV\right)
        \right|.
    \end{align*}
    
    Under the event of \cref{lma:probability concentration}, for all $x\in\Xcal$, we have
    \begin{align*}
        \left|(\Peff)_z(x)- (\hatPeff^k)_z(x)\right|
        \le
        \sqrt{\frac{2 (\Peff)_z(x) \ell_2(z)}{N^k(z)}}+\frac{\ell_2(z)}{3 N^k(z)} \, .
    \end{align*}
    Multiplying both sides by $\left|V(x)-(\Peff)_zV\right|$ and summing over $x\in\Xcal(z)$ gives
    \begin{align*}
        &\sum_{x\in\Xcal(z)}
        \left|
        \left((\Peff)_z(x)-(\hatPeff^k)_z(x)\right)\left(V(x)-(\Peff)_zV\right)
        \right|
        \\
        &\le
        \sum_{x\in\Xcal(z)}
        \left(
        \sqrt{\frac{2 (\Peff)_z(x)\ell_2(z)}{N^k(z)}} \left|V(x)-(\Peff)_zV\right|
        +
        \frac{\ell_2(z)}{3 N^k(z)}\left|V(x)-(\Peff)_zV\right|
        \right)
        \\
        &\le
        \sum_{x\in\Xcal(z)}
        \sqrt{\frac{2 (\Peff)_z(x)\left(V(x)-(\Peff)_zV\right)^2\,\ell_2(z)}{N^k(z)}}
        +
        \frac{\ell_2(z)}{3 N^k(z)}
        \sum_{x\in\Xcal(z)}\left|V(x)-(\Peff)_zV\right|
        \\
        &\le
        \sqrt{\frac{2 B(z) \ell_2(z)}{N^k(z)}
        \sum_{x\in\Xcal(z)}(\Peff)_z(x)\left(V(x)-(\Peff)_zV\right)^2}
        +
        \frac{\ell_2(z)}{3 N^k(z)}\cdot B(z) (b-a)
        \\
        &=
        \sqrt{\frac{2 \VV\left((\Peff)_z,V\right) B(z) \ell_2(z)}{N^k(z)}}+\frac{(b - a) B(z) \ell_2(z)}{3 N^k(z)} \, ,
    \end{align*}
    where we use the Cauchy-Schwarz inequality and that $\left|V(x)-(\Peff)_zV\right|\le b-a$ in the third inequality.
\end{proof}

\subsection{Proof of Lemma~\ref{lma:partially:per episode 1}}

\label{appx:proof of per episode 1}

For the proof of \cref{lma:partially:per episode 1}, we require the following lemmas.

\begin{lemma}
\label{lma:surplus bound}
    Fix $s \in \Scal$, $h \in [H]$, and $a = \pi_h^k(s)$.
    Under $\Ecal$, we have
    \begin{align*}
        V_h^k(s) - r(s, a) - P_{s, a}V_{h+1}^k
        & \le \frac{1}{2H} \left( P_{s, a}(V_{h+1}^k - V_{h+1}^*)^2 - (V_{h}^k(s) - V_{h}^*(s))^2 \right)
        \\
        & \qquad + \Expec_{\substack{y' \sim P_{\Ycal}(s, a) \\z = \Zmap(s, a, y')}} \left[28 \sqrt{ \frac{\VV((\Peff)_z, V_{h+1}^*(\cdot, y')) \ellfinal(z) }{N^k(z)}} 
        + \frac{660 H B(z) \ell_2(z)}{N^k(z)} \right] 
        \, .
    \end{align*}
\end{lemma}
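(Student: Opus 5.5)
Fix $s$, $h$, $a=\pi_h^k(s)$, write $V^k=V_{h+1}^k$, $V^*=V_{h+1}^*$, and decompose over $y'\sim P_{\Ycal}(s,a)$ with $z=\Zmap(s,a,y')$. Since $V_h^k(s)=Q_h^k(s,a)$ is the $y'$-average of the conditional MVP outputs, and each output is at most $r+\hat P V^k+c_1\sqrt{\VV(\hat P,V^k)\ellfinal/N}+c_2H\ellfinal/N$, the surplus is bounded by an expectation over $y'$ of
\[
(\hat P-P)V^k + c_1\sqrt{\VV(\hat P,V^k)\ellfinal(z)/N^k(z)} + c_2 H\ellfinal(z)/N^k(z),
\]
with $P=(\Peff)_z$, $\hat P=(\hatPeff^k)_z$. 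When $N^k(z)\le1$ the output is $H$, but then $H B(z)\ell_2(z)/N^k(z)\ge H$ (for $\ell_2\ge1$), so the second term of the claim already dominates. We therefore assume $N\ge2$ throughout.

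\emph{Splitting $(\hat P-P)V^k$.} Write it as $(\hat P-P)V^*+(\hat P-P)(V^k-V^*)$. The first piece is handled by \cref{lma:optimal value concentration} (or \cref{lma:all functions concentration} with $V=V^*$), whichever has the smaller log factor. This gives $\sqrt{2\VV(P,V^*)\ellfinal/N}+H\ellfinal/(3N)$, using $\ellfinal\le B\ell_2$ in the lower-order term. The second piece is controlled via \cref{lma:all functions concentration} applied to $V^k-V^*\in[0,H]$ (optimism, \cref{lma:partially:optimism}). This gives $\sqrt{2\VV(P,V^k-V^*)B\ell_2/N}+HB\ell_2/(3N)$. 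By AM--GM,
\[
\sqrt{2\VV(P,V^k-V^*)B\ell_2/N}\le \frac{1}{4H}\VV(P,V^k-V^*) + \frac{2HB\ell_2}{N},
\]
and $\VV(P,V^k-V^*)\le P(V^k-V^*)^2-(P(V^k-V^*))^2$.

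\emph{Converting the empirical variance in the bonus.} Bound $\sqrt{\VV(\hat P,V^k)}\le\sqrt{\VV(\hat P,V^*)}+\sqrt{\VV(\hat P,V^k-V^*)}$. Change $\hat P$ to $P$ in each variance using \cref{lma:all functions concentration} applied to squared functions in $[0,H^2]$. This gives $\VV(\hat P,f)\le \VV(P,f)+ O(\sqrt{\cdot})+O(H^2B\ell_2/N)$, and via $\sqrt{x+y}\le\sqrt x+\sqrt y$ plus AM--GM it becomes $\sqrt{\VV(P,f)}+O(H\sqrt{B\ell_2/N})$. For $f=V^*$, this yields $c\sqrt{\VV(P,V^*)\ellfinal/N}$ plus lower order $O(H B\ell_2/N)$. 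For $f=V^k-V^*$, we again use AM--GM to get $\frac1{4H}\VV(P,V^k-V^*)+O(HB\ell_2/N)$. Collecting constants should produce $28$ and $660$.

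\emph{Producing the telescoping variance term.} After averaging over $y'$, we have $\frac{1}{2H}\bigl(P_{s,a}(V^k-V^*)^2-\EE_{y'}[(P(V^k-V^*))^2]\bigr)$. By Jensen, $\EE_{y'}[(P(V^k-V^*))^2]\ge (P_{s,a}(V^k-V^*))^2$. It remains to show $(P_{s,a}(V^k-V^*))^2\ge (V_h^k(s)-V_h^*(s))^2$ up to absorbed terms. Since $V_h^*(s)\ge r+P_{s,a}V^*$ and $V_h^k(s)=r+P_{s,a}V^k+\text{surplus}$, we get $V_h^k(s)-V_h^*(s)\le P_{s,a}(V^k-V^*)+\text{surplus}$. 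Squaring and using the crude bound $\text{surplus}\le H$ together with $|V_h^k-V_h^*|\le H$ means the cross term costs at most a constant times the surplus, divided by $2H$. This extra surplus contribution is absorbed by moving it to the left side (a factor like $1/2$ absorbed into the constants).

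The main obstacle is this last step. Making the quadratic comparison rigorous without losing a factor, especially handling the case where the surplus is negative or $V_h^k-V_h^*$ is smaller than $P(V^k-V^*)$, needs care. I would first try the self-bounding rearrangement $x^2-(x-e)^2\le 2He$ with $e$ equal to the surplus and $x$ equal to the gap, then absorb $e$ into the left-hand side.
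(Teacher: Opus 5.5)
Your outline matches the paper's proof. You bound the surplus by the $y'$-average of the MVP bonus plus $(\hat P-P)V_{h+1}^k$, split off $V_{h+1}^*$ with the two concentration lemmas, and convert the empirical variance to the true one. You then apply the law of total variance over $y'$ and absorb the surplus. Your Minkowski-first conversion, $\sqrt{\VV(\hat P,V^k)}\le\sqrt{\VV(\hat P,V^*)}+\sqrt{\VV(\hat P,V^k-V^*)}$, is a harmless variant of the paper's $\VV(\hat P,V^k)\le\frac32\VV(P,V^k)+\dots$ step and even gives smaller constants.

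However, your bookkeeping does not close. Each of your two AM--GM steps produces $\frac{1}{4H}\VV(P,V^k-V^*)$, so the variance enters with total coefficient $\frac{1}{2H}$. The quadratic comparison then contributes $2H\max\{e,0\}$, where $e$ is the surplus. The right-hand side therefore contains $\frac{1}{2H}\cdot 2H\max\{e,0\}=\max\{e,0\}$, and nothing is left to absorb; with your crude squaring, $e(2y+e)\le 3He$, it is even $\frac32 e$.

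The fix is to tune both AM--GM steps to $\frac{1}{8H}$, so the total coefficient is $\frac{1}{4H}$. Then $I_1:=\max\{e,0\}$ satisfies $I_1\le\frac12 I_1+\frac{1}{4H}\bigl(P_{s,a}(V_{h+1}^k-V_{h+1}^*)^2-(V_h^k(s)-V_h^*(s))^2\bigr)+\dots$. Multiplying by $2$ produces exactly the $\frac{1}{2H}$ in the statement and the constants $28=2\cdot 14$ and $660=2\cdot 330$; the paper's per-step constants are $14$ and $330$. The $\frac{1}{2H}$ is a post-absorption coefficient, and aiming for it before absorbing is what broke your count.

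For the step you flagged as the obstacle, use the sharp comparison instead of squaring with $e\le H$. Set $x=V_h^k(s)-V_h^*(s)$ and $y=P_{s,a}(V_{h+1}^k-V_{h+1}^*)$; both lie in $[0,H]$ by optimism, and $x\le y+e$ by your derivation. If $x\le y$, then $x^2\le y^2$. Otherwise $x^2-y^2=(x-y)(x+y)\le 2H(x-y)\le 2H\max\{e,0\}$. Working with $I_1=\max\{e,0\}$ is legitimate because the per-$y'$ upper bound is nonnegative, and it handles the negative-surplus case automatically.

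Finally, handle $N^k(z)\le 1$ per $y'$ inside the average. There the conditional surplus is at most $H\le 330HB(z)\ell_2(z)/N^k(z)$. Do not instead declare that the claim's second term dominates globally: only some $y'$ may have small counts, and the telescoping term can be negative.
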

The proof of \cref{lma:surplus bound} is quite technical, and it is deferred to \cref{appx:proof of surplus bound}.
Using \cref{lma:surplus bound}, we can prove the following bound for the per-episode regret.

\begin{lemma}
\label{lma:partially:per episode 2}
    Under $\Ecal$, we have
    \begin{align*}
        V_1^k(s_1^k) - V_1^{\pi^k}(s_1^k) \le 28 \sqrt{ \Expec_{\pi^k} \left[\sum_{h=1}^H \VV((\Peff)_{z_h^k}, V_{h+1}^*(\cdot, y_{h+1}^k)) \right] U_1^k(s_1^k)} + 660 H \tilde{U}_1^k(s_1^k) 
    \end{align*}
    for all $k \in [K]$.
\end{lemma}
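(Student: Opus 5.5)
The plan is to unroll \cref{lma:surplus bound} along the trajectory of $\pi^k$ and telescope. Write $\Delta_h^k(s) := V_h^k(s) - V_h^{\pi^k}(s)$. Since $V_h^{\pi^k}(s) = r(s,a) + P_{s,a}V_{h+1}^{\pi^k}$ with $a = \pi_h^k(s)$, we have
\begin{align*}
\Delta_h^k(s) = \bigl(V_h^k(s) - r(s,a) - P_{s,a}V_{h+1}^k\bigr) + P_{s,a}\Delta_{h+1}^k .
\end{align*}
Applying \cref{lma:surplus bound} to the first term and iterating from $h=1$ to $H$ under the trajectory distribution of $\pi^k$ started at $s_1^k$ gives
\begin{align*}
\Delta_1^k(s_1^k) \le \Expec_{\pi^k}\Bigl[\sum_{h=1}^H \tfrac{1}{2H}\bigl((V_{h+1}^k - V_{h+1}^*)^2(s_{h+1}) - (V_h^k - V_h^*)^2(s_h)\bigr)\Bigr] + \Expec_{\pi^k}\Bigl[\sum_{h=1}^H\Bigl(28\sqrt{\tfrac{\VV_h \ellfinal(z_h^k)}{N^k(z_h^k)}} + \tfrac{660 H B(z_h^k)\ell_2(z_h^k)}{N^k(z_h^k)}\Bigr)\Bigr].
\end{align*}
Here $\VV_h := \VV((\Peff)_{z_h^k}, V_{h+1}^*(\cdot, y_{h+1}^k))$. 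The inner expectation over $y'\sim P_{\Ycal}(s,a)$ in \cref{lma:surplus bound} is exactly the law of $y_{h+1}^k$, so $z = \Zmap(s_h,a_h,y_{h+1})=z_h^k$. The first sum telescopes to $\frac{1}{2H}\bigl(\Expec[(V_{H+1}^k - V_{H+1}^*)^2] - (V_1^k-V_1^*)^2(s_1^k)\bigr) \le 0$, because $V_{H+1}^k = V_{H+1}^* = 0$.

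For the square-root term, I apply Cauchy--Schwarz jointly over $h$ and the expectation:
\begin{align*}
\Expec\Bigl[\sum_h\sqrt{\VV_h\,\ellfinal/N}\Bigr] \le \sqrt{\Expec\bigl[\textstyle\sum_h \VV_h\bigr]\cdot\Expec\bigl[\textstyle\sum_h \ellfinal(z_h^k)/N^k(z_h^k)\bigr]}.
\end{align*}
It remains to identify $\Expec_{\pi^k}[\sum_h \ellfinal(z_h^k)/N^k(z_h^k)]$ with $U_1^k(s_1^k)$, and $\Expec_{\pi^k}[\sum_h B\ell_2/N]$ with $\tilde U_1^k(s_1^k)$. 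Unrolling the recursions defining $U_h^k$ and $\tilde U_h^k$ without the truncation $\land 1$ gives exactly these expectations. The truncation is the main obstacle, since with it $U_1^k$ can be smaller than the unclipped sum.

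I would handle the truncation by a case split. Regret never exceeds $H$, and the bound $\Delta_h^k \le H$ holds pointwise. Define the clipped recursions and prove by backward induction the stronger statement
\begin{align*}
\Delta_h^k(s) - \tfrac{1}{2H}\text{(telescoping terms)} \le 28\sqrt{\Expec\bigl[\textstyle\sum_{j\ge h}\VV_j\bigr]\, U_h^k(s)} + 660H\,\tilde U_h^k(s).
\end{align*}
At any step where the clip $\land 1$ binds for $\tilde U$, the right-hand side is at least $660H \ge H \ge \Delta_h^k$, so the claim holds trivially. When the clip binds only for $U$, either the bound also holds trivially or one passes to $\tilde U$ using $\ellfinal \le B\ell_2$.

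If the induction gets messy, I would fall back on this simpler route. If the unclipped $\tilde U$-sum is at least $1$, then $660H\tilde U_1^k \ge 660H > H$, which already bounds the regret. Otherwise the unclipped $\tilde U$-sum is below $1$, hence so is every partial unclipped $\tilde U_h$, and the unclipped $U_h$ are dominated by these since $\ellfinal \le B\ell_2$. Therefore no clipping is active anywhere, and the identifications above hold exactly. This yields the statement of \cref{lma:partially:per episode 2}.
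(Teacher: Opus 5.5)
Your backward-induction plan is the paper's proof. The paper proves by induction that $V_h^k(s)-V_h^{\pi^k}(s)\le 28\sqrt{W_h^k(s)U_h^k(s)}+660H\tilde U_h^k(s)-\frac{1}{2H}(V_h^k(s)-V_h^*(s))^2$, where $W_h^k(s)$ is the expected remaining sum of variances. It dismisses the case $\tilde U_h^k(s)=1$ as trivial. Otherwise it uses $U_h^k\le\tilde U_h^k<1$, which is your remark $\ellfinal\le B\ell_2$; this also shows that your case ``the clip binds only for $U$'' never occurs. So both recursions hold without the $\land 1$.

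Your sketch omits the non-trivial inductive step, and there your global Cauchy--Schwarz cannot be reused, because after full unrolling it only yields the unclipped sums. You have to merge the square roots one step at a time:
\begin{itemize}
\item bound $V_h^k(s)-r(s,a)-P_{s,a}V_{h+1}^k$ by \cref{lma:surplus bound};
\item bound $P_{s,a}(V_{h+1}^k-V_{h+1}^{\pi^k})$ by the hypothesis at $h+1$;
\item apply $\sqrt{ab}+\sqrt{cd}\le\sqrt{(a+c)(b+d)}$ pointwise in $s'$, then Cauchy--Schwarz over $s'\sim P_{s,a}$.
\end{itemize}
This way the clipped $U_{h+1}^k(s')$ supplied by the hypothesis is exactly the quantity inside the recursion for $U_h^k(s)$, and the $\frac{1}{2H}$ and $660H$ terms combine exactly.

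Your fallback route, however, is wrong and should be dropped. The truncation acts state by state inside the recursion, not only at $s_1^k$, so $\tilde U_1^k(s_1^k)$ can lie far below $\min\{1,\text{unclipped sum}\}$. For example, suppose that with probability $p$ the policy reaches at step $2$ a state whose unclipped remaining $\tilde U$-sum is $M>1$ (say, via a rarely visited $z$), and otherwise accumulates almost nothing. Then the unclipped root sum is about $pM$, while $\tilde U_1^k(s_1^k)$ is about $p$.
\begin{itemize}
\item With $pM\ge 1$ and $p<1/660$, your first branch fails: $660H\tilde U_1^k(s_1^k)<H$, so the trivial bound $H$ does not give the claim.
\item With $pM<1<M$, your second branch fails: the unclipped remaining sum at a low-probability state is a conditional expectation and can exceed $1$ even though the root sum does not, so clipping is active there.
\end{itemize}

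If you want to keep your unrolling argument, repair it with a stopping time. Let $\sigma$ be the first $h$ with $\tilde U_h^k(s_h)=1$ ($\sigma=H+1$ if there is none), and unroll only over $h<\sigma$. Write $c_h:=\ellfinal(z_h^k)/N^k(z_h^k)$ and $\tilde c_h:=B(z_h^k)\ell_2(z_h^k)/N^k(z_h^k)$. Before $\sigma$ neither recursion is clipped, so $\tilde U_1^k(s_1^k)=\EE[\sum_{h<\sigma}\tilde c_h]+\PP(\sigma\le H)$ and $U_1^k(s_1^k)\ge\EE[\sum_{h<\sigma}c_h]$. The telescoped $\frac{1}{2H}$ terms leave at most $\frac{H}{2}\PP(\sigma\le H)$, since $0\le V^k-V^*\le H$. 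The stopped remainder $V_\sigma^k-V_\sigma^{\pi^k}$ contributes at most $H\,\PP(\sigma\le H)$. Both are absorbed by $660H\,\PP(\sigma\le H)$, and your joint Cauchy--Schwarz restricted to $h<\sigma$ handles the square-root term.
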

\begin{proof}
    We define
    \begin{align*}
        W_h^k(s) := \Expec_{\pi^k(\cdot \mid s_h = s)}\left[ \sum_{j=h}^H \VV((\Peff)_{z_j^k}, V_{j+1}^*(\cdot, y_{j+1}^k))  \right]
        \, ,
    \end{align*}
    which is the expectation of the sum of variances.
    Note that we have $W_h^k(s) =  \Expec_{\substack{s' = (x', y') \sim P_{s, a}\\z = \Zmap(s, a, y')}} [ \VV((\Peff)_z, V_{h+1}^*(\cdot, y')) + W_{h+1}^k(s')]$.
    We prove that the following inequality holds for all $h\in[H]$ and $ s\in \Scal$ by backward induction on $h$:
    \begin{align*}
        V_h^k(s) - V_h^{\pi^k}(s) & \le 28 \sqrt{W_h^k(s) U_h^k(s)} + 660 H \tilde{U}_h^k(s) 
        \\
        & \qquad  -  \frac{1}{2H} \left(V_{h}^k(s) - V_{h}^*(s)\right)^2
        \, .
    \end{align*}
    The inequality is trivial for $h = H+1$.
    Suppose the inequality holds for $h+1$.
    We note that the inequality is also trivial when $\tilde{U}_h^k(s) = 1$, as the left-hand side is at most $H$, while the right-hand side becomes larger than $H$.
    Suppose $\tilde{U}_h^k(s) < 1$.
    We note that we have $U_h^k(s) \le \tilde{U}_h^k$, hence we also have $U_h^k(s) < 1$.
    We perform the following decomposition:
    \begin{align*}
        V_h^k(s) - V_h^{\pi^k}(s) = \underbrace{V_h^k(s) - r(s, a) - P_{s, a}V_{h+1}^k}_{I_1} +\underbrace{P_{s, a}(V_{h+1}^k - V_{h+1}^{\pi^k})}_{I_2}
        \, .
    \end{align*}
    $I_1$ is bounded by \cref{lma:surplus bound} as
    \begin{align*}
        I_1 & \le \underbrace{ \frac{1}{2H} \left( P_{s, a}(V_{h+1}^k - V_{h+1}^*)^2 - (V_{h}^k(s) - V_{h}^*(s))^2 \right)}_{I_3}
        \\
        & \qquad + \underbrace{\Expec_{\substack{y' \sim P_{\Ycal}(s, a) \\z = \Zmap(s, a, y')}} \left[28 \sqrt{ \frac{\VV((\Peff)_z, V_{h+1}^*(\cdot, y')) \ellfinal(z) }{N^k(z)}} \right]}_{I_4} + 
        \underbrace{\Expec_{\substack{y' \sim P_{\Ycal}(s, a) \\z = \Zmap(s, a, y')}} \left[\frac{660 H B(z) \ell_2(z)}{N^k(z)} \right] }_{I_5}
        \, .
    \end{align*}
    $I_2$ is bounded by the induction hypothesis as
    \begin{align*}
        I_2 & \le \underbrace{ \Expec_{s' \sim P_{s, a}} \left[28\sqrt{W_{h+1}^k(s') U_{h+1}^k(s') } \right]}_{I_6} + \underbrace{ 660 H P_{s, a} \tilde{U}_{h+1}^k}_{I_7} - \underbrace{\frac{1}{2H} P_{s, a}(V_{h+1}^k - V_{h+1}^*)^2}_{I_8}
        \, .
    \end{align*}
    We have $I_3 - I_8 = -\frac{1}{2H}(V_h^k(s) - V_h^*(s))^2$ and $I_5 + I_7 = 660H \tilde{U}_h^k(s)$.
    It remains to bound $I_4 + I_6$.
    Using the Cauchy-Schwarz inequality, we obtain
    \begin{align*}
        & \sqrt{ \frac{\VV((\Peff)_z, V_{h+1}^*(\cdot, y')) \ellfinal(z) }{N^k(z)}} + \sqrt{W_{h+1}^k(s') U_{h+1}^k(s')}
        \\
        & \le \sqrt{\VV((\Peff)_z, V_{h+1}^*(\cdot, y')) + W_{h+1}^k(s') } \sqrt{\frac{\ellfinal(z)}{N^k(z)} + U_{h+1}^k(s')}
        \, .
    \end{align*}
    By the Cauchy-Schwarz inequality for expectation and the law of total expectation, we obtain that
    \begin{align*}
        I_4 + I_6
        & \le 28 \Expec_{\substack{s' = (x', y') \sim P_{s, a}\\ z = \Zmap(s, a, y')}} \left[ \sqrt{\VV((\Peff)_z, V_{h+1}^*(\cdot, y')) + W_{h+1}^k(s') } \sqrt{\frac{\ellfinal(z)}{N^k(z)} + U_{h+1}^k(s')} \right]
        \\
        &\le 28 \sqrt{\Expec_{\substack{s' = (x', y') \sim P_{s, a}\\ z = \Zmap(s, a, y')}} \left[ \VV((\Peff)_z, V_{h+1}^*(\cdot, y')) + W_{h+1}^k(s') \right]}
        \\
        & \qquad \times \sqrt{\Expec_{\substack{s' = (x', y') \sim P_{s, a}\\ z = \Zmap(s, a, y')}} \left[\frac{\ellfinal(z)}{N^k(z)} + U_{h+1}^k(s')  \right]}
        \\
        & = 28 \sqrt{W_{h}^k(s) {U_h^k(s)}}
        \, .
    \end{align*}
    Combining the bounds for $I_3 - I_8$ and $I_5 + I_7$, we obtain the desired bound for $V_h^k(s ) - V_h^{\pi^k}(s)$, completing the induction step.
\end{proof}

\begin{proof}[Proof of \cref{lma:partially:per episode 1}]
    Given \cref{lma:partially:per episode 2}, it suffices to prove that $\Expec_{\pi^k} \left[\sum_{h=1}^H \VV((\Peff)_{z_h^k}, V_{h+1}^*(\cdot, y_{h+1}^k)) \right]  \le H^2$.
    By \cref{lma:eff variance decomposition}, we have
    \begin{align*}
        & \Expec_{y_{h+1}^k \sim P_{\Ycal}(s_h^k, a_h^k)} \left[ \VV((\Peff)_{z_h^k}, V_{h+1}^*(\cdot, y_{h+1}^k)) \right]
        \\
        & \le P_{s_h^k, a_h^k}(V_{h+1}^*)^2 - (V_h^*(s_h^k))^2 + 2 H \max\{ V_h^*(s_h^k) - P_{s_h^k, a_h^k}V_{h+1}^*, 0 \}
        \\
        & = P_{s_h^k, a_h^k}(V_{h+1}^*)^2 - (V_h^*(s_h^k))^2 + 2 H \left( V_h^*(s_h^k) - P_{s_h^k, a_h^k}V_{h+1}^* \right)
        \, ,
    \end{align*}
    where we use that $V_h^*(s_h^k) - P_{s_h^k, a_h^k}V_{h+1}^* \ge Q_h^*(s_h^k) - P_{s_h^k, a_h^k}V_{h+1}^* = r(s_h^k, a_h^k) \ge 0$ for the last equality.
    Taking the sum over $h \in [H]$ and taking the expectation, the terms are telescoped and we derive
    \begin{align*}
        \Expec_{\pi^k} \left[\sum_{h=1}^H \VV((\Peff)_{z_h^k}, V_{h+1}^*(\cdot, y_{h+1}^k)) \right] 
        & \le - (V_1^*(s_1^k))^2 + 2H V_1^*(s_1^k)
        \\
        & = H^2 - (V_1^*(s_1^k) - H)^2
        \\
        & \le H^2
        \, .
    \end{align*}
    Plugging this bound into \cref{lma:partially:per episode 2}, we obtain \cref{lma:partially:per episode 1}.
\end{proof}

\subsection{Proof of Lemma~\ref{lma:sum of U}}
\label{appx:proof of sum of U}

\begin{proof}[Proof of \cref{lma:sum of U}]
    Under the event of \cref{lma:concentration of U}, we have
    \begin{align*}
        \sum_{k=1}^K U_1^k(s_1^k) & \le 2 \sum_{k=1}^K \sum_{h=1}^H \ind\{2 N^k(z_h^k) > N_h^k(z_h^k) \}\frac{\ellfinal(z_h^k)}{N^k(z_h^k)} + 3 |\effspace | \log \frac{8H}{\delta}
        \, .
    \end{align*}
    Under $2N^k(z_h^k) > N_h^k(z_h^k)$, we have $\frac{1}{N^k(z_h^k)} \le \frac{2}{N_h^k(z_h^k)}$, and we also have $N_h^k(z_h^k) \ge 2$, and hence the first sum is bounded as follows:
    \begin{align*}
        2 \sum_{k=1}^K \sum_{h=1}^H \ind\{2 N^k(z_h^k) > N_h^k(z_h^k) \}\frac{\ellfinal(z_h^k)}{N^k(z_h^k)}
        & \le \sum_{k=1}^K \sum_{h=1}^H \frac{4\ind\{ N_h^k(z_h^k) \ge 2\} \ellfinal(z_h^k)}{N_h^k(z_h^k)}
        \\
        & = 4 \sum_{z \in \effspace} \ellfinal(z) \sum_{n=2}^{N^{K+1}(z)}\frac{1}{n}
        \\
        & \le 4 \sum_{z \in \effspace} \ellfinal(z) (\log KH)
        \, .
    \end{align*}
    We add $3 |\effspace | \log \frac{8H}{\delta}$ and using that $\log \frac{8H}{\delta} \le \ellfinal(z)$, we derive that
    \begin{align*}
        \sum_{k=1}^K U_1^k(s_1^k) & \le 4 \sum_{z \in \effspace} \ellfinal(z) (\log KH) + 3 |\effspace | \log \frac{8H}{\delta}
        \\
        & \le 4 \sum_{z \in \effspace} (\ellfinal(z) (\log KH) + 1)
        \\
        & = 4 ( \log eKH) \sum_{z \in \effspace} \ellfinal(z)
        \, ,
    \end{align*}
    proving the first inequality.
    The second inequality is proved in the exact same way.
\end{proof}

\subsection{Proof of Lemma~\ref{lma:surplus bound}}
\label{appx:proof of surplus bound}

We require two additional lemmas to prove \cref{lma:surplus bound}.

\begin{lemma}
\label{lma:upper bound the bonus term}
    Under $\Ecal$, the following inequality holds for all $z \in \effspace$, $y \in \Ycal$, $h \in [H]$, and $k \in [K]$ with $N^k(z) \ge 2$:
    \begin{align*}
        & \frac{20}{3} \sqrt{ \frac{\VV((\hatPeff)_z V_{h+1}^k(\cdot, y)) \ellfinal(z)}{N^k(z)}} + \frac{400 H \ellfinal(z)}{9 N^k(z)}
        \\
        & \le 
        12 \sqrt{ \frac{\VV((\Peff)_z, V_{h+1}^*(\cdot, y)) \ellfinal(z) }{N^k(z)}} + \frac{1}{8H} \VV((\Peff)_z,(V_{h+1}^k - V_{h+1}^*)(\cdot, y) ) 
        + \frac{325 H B(z) \ell_2(z)}{N^k(z)}
        \, .
    \end{align*}
\end{lemma}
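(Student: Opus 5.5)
The plan is to convert the empirical-variance bonus into a true-variance bonus at $V^*$, in the standard MVP way, while paying $B(z)\ell_2(z)$ for every concentration step applied to data-dependent functions. Write $P := (\Peff)_z$, $\hat P := (\hatPeff^k)_z$, $N := N^k(z)$, $V^k := V_{h+1}^k(\cdot,y)$, $V^* := V_{h+1}^*(\cdot,y)$, and $\ellfinal := \ellfinal(z)$. The term $\frac{400 H\ellfinal}{9N}$ needs no work, since $\ellfinal \le B(z)\ell_2(z)$; it goes straight into the last term on the right. The whole task is therefore to bound $\sqrt{\VV(\hat P, V^k)\ellfinal/N}$.

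First, pass from $\VV(\hat P, V^k)$ to $\VV(P, V^k)$. Write $\VV(\hat P, V) = \hat P V^2 - (\hat P V)^2$ and compare it with $\VV(P,V)$ using \cref{lma:all functions concentration}. This lemma applies to the data-dependent function $V^k$ restricted to $\Xcal(z)$, with range $[0,H]$. Apply it to $V^2$ (range $[0,H^2]$) and to $V$. The two first-order error terms are each of order $H\sqrt{\VV(P,V)B\ell_2/N}$; here one uses $\VV(P,V^2)\le 4H^2\VV(P,V)$. The second-order terms are of order $H^2 B\ell_2/N$. Solving the resulting quadratic inequality in $\sqrt{\VV}$ gives
\[
\VV(\hat P, V^k) \le 2\VV(P,V^k) + c\,H^2 B(z)\ell_2(z)/N .
\]
This is the step where the $B(z)$ factor is unavoidable, because $V^k$ is random.

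Next, split $\VV(P,V^k) \le 2\VV(P,V^*) + 2\VV(P, V^k - V^*)$, and take square roots using $\sqrt{a+b}\le\sqrt a+\sqrt b$. The piece $\sqrt{\VV(P,V^*)\ellfinal/N}$ carries a constant that I expect to come to at most $12$ after tracking the $\frac{20}{3}$ and the factors of $2$; this constant bookkeeping needs care. The piece $\sqrt{H^2 B\ell_2\ellfinal/N^2}$ is at most $H B\ell_2/N$, since $\ellfinal\le B\ell_2$.

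The cross piece $\frac{20}{3}\cdot 2\sqrt{\VV(P,V^k-V^*)\ellfinal/N}$ is handled with AM–GM:
\[
\sqrt{xy} \le \frac{x}{8H\cdot c'} + c' \cdot 2H y ,
\]
with $x = \VV(P,V^k-V^*)$ and $y = \ellfinal/N$. Choosing the constants so that the variance coefficient is exactly $\frac{1}{8H}$ leaves a term of the form $cH\ellfinal/N \le cHB\ell_2/N$. Collecting all $HB\ell_2/N$ terms should give a total below $325$.

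I expect the main obstacle to be the first step, that is, obtaining a clean empirical-to-true variance bound for the non-fixed function $V^k$ with only $B(z)\ell_2(z)$ inflation. The quadratic solving and the constants must be arranged so that the final numbers $12$, $\frac{1}{8H}$, and $325$ come out. If the direct route through $\hat P V^2$ gives constants that are too loose, I would instead use the identity $\VV(\hat P,V) = \min_c \hat P(V-c)^2$ with $c = PV$. Then
\[
\VV(\hat P,V)\le \hat P(V-PV)^2 = \VV(P,V) + (\hat P-P)(V-PV)^2 ,
\]
and applying \cref{lma:all functions concentration} once to $(V-PV)^2$ directly yields $\VV(P,V)+\sqrt{\cdot}$ plus lower-order terms.
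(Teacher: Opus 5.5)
Your outline is the paper's proof, and your fallback is exactly the paper's first step. The paper bounds $\VV(\hat P,V^k)\le \hat P(V^k-PV^k)^2$ and applies \cref{lma:all functions concentration} to $(V^k-PV^k)^2\in[0,H^2]$. It then combines $\VV\bigl(P,(V^k-PV^k)^2\bigr)\le H^2\,P(V^k-PV^k)^2$ with $\sqrt{2XY}\le \frac{X}{2H^2}+H^2Y$ to get
\[
\VV(\hat P,V^k)\le \tfrac32\,\VV(P,V^k)+\frac{4H^2B(z)\ell_2(z)}{3N}\le 3\,\VV(P,V^*)+3\,\VV(P,V^k-V^*)+\frac{4H^2B(z)\ell_2(z)}{3N}.
\]
The rest (square root, AM--GM on the cross term, $\ellfinal\le B(z)\ell_2(z)$) is what you describe.

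You should commit to this route, because the factors you wrote for your primary route cannot give the stated constants. A factor $2$ in the empirical-to-true step followed by a $2,2$ split puts $4\,\VV(P,V^*)$ under the root. The leading coefficient is then $\frac{20}{3}\cdot 2=\frac{40}{3}>12$, so the constant you "expect to come to at most $12$" does not.

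The cross term fails too. The cheapest way to put coefficient $\frac{1}{8H}$ on $x$ is $c\sqrt{xy}\le \frac{x}{8H}+2Hc^2y$, and with $c=\frac{40}{3}$ this costs $\frac{3200}{9}\approx 356$ in front of $H\ellfinal/N$. That already exceeds $325$ before adding $\frac{400}{9}$.

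With the factor $3$ the numbers work:
\begin{itemize}
\item $\frac{20}{3}\sqrt3<12$;
\item $\sqrt{3xy}\le \frac{3x}{160H}+40Hy$ gives exactly $\frac{1}{8H}$ after multiplying by $\frac{20}{3}$;
\item the leftover terms total $\frac{20}{3}\bigl(40+\frac{2}{\sqrt3}\bigr)+\frac{400}{9}<325$, which leaves very little slack.
\end{itemize}

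The route through $\hat PV^2-(\hat PV)^2$ is strictly lossier. The identity $\VV(\hat P,V)=\hat P(V-PV)^2-\bigl((\hat P-P)V\bigr)^2$ lets the fallback drop a nonnegative term for free, and it needs only one concentration step where your primary route needs two. The primary route could be salvaged only by retuning the AM--GM weights, using a factor of at most about $1.5$ in place of your $2$ together with an uneven split. Also, no quadratic has to be solved there: the bound is already explicit in $\VV(P,\cdot)$, and AM--GM suffices.
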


\begin{proof}
    We fix $z \in \effspace$, $y \in \Ycal$, $h \in [H]$, and $k \in [K]$.
    Denote $P := (\Peff)_z$, $\hat{P} := (\hatPeff)_z$, $V_{h+1}^* := V_{h+1}^*(\cdot, y)$, $V_{h+1}^k := V_{h+1}^k(\cdot, y)$, $N := N^k(z)$, and $\ellfinal := \ellfinal(z)$ for simplicity.
    Under the event of \cref{lma:all functions concentration}, by applying the lemma to $V(\cdot) = (V_{h+1}^k(\cdot) - PV_{h+1}^k)^2$ we have
    \begin{align}
        (\hat{P} - P)(V_{h+1}^k - P V_{h+1}^k)^2 & \le \sqrt{\frac{2 \VV(P, (V_{h+1}^k - PV_{h+1}^k)^2) B(z) \ell_2(z)}{N}} + \frac{H^2 B(z) \ell_2(z)}{3 N}
        \nonumber
        \\
        & \le \frac{1}{2 H^2}\VV(P, (V_{h+1}^k - PV_{h+1}^k)^2) + \frac{4  H^2 B(z) \ell_2(z)}{3N}
        \nonumber
        \\
        & \le \frac{1}{2} P (V_{h+1}^k - PV_{h+1}^k)^2 + \frac{4 H^2 B(z) \ell_2(z)}{3N}
        \label{eq:empirical variance to variance 1}
        \, ,
    \end{align}
    where we use the AM-GM inequality for the second inequality and that $\Var(Z) \le \EE[Z^2] \le c \EE[Z]$ for random variables $Z \in [0, c]$ for the third inequality.
    Then, we bound the empirical variance of $V_{h+1}^k$ as follows:
    \begin{align*}
        \VV(\hat{P}, V_{h+1}^k)
        &  = \hat{P} \left( V_{h+1}^k - \hat{P}V_{h+1}^k \right)^2
        \\
        & \le \hat{P} \left( V_{h+1}^k - P V_{h+1}^k \right)^2
        \\
        & \le \frac{3}{2} P \left( V_{h+1}^k - P V_{h+1}^k \right)^2 + \frac{4 H^2 B(z) \ell_2(z)}{3N}
        \\
        & = \frac{3}{2} \VV(P, V_{h+1}^k) + \frac{4 H^2 B(z) \ell_2(z)}{3N}
        \\
        & \le 3 \VV(P, V_{h+1}^*) + 3 \VV(P, V_{h+1}^k - V_{h+1}^*) + \frac{4 H^2 B(z) \ell_2(z)}{3N}
        \, ,
    \end{align*}
    where the second inequality uses inequality~\eqref{eq:empirical variance to variance 1}, and the last inequality uses that $\Var(X+Y) \le 2 \Var(X) + 2 \Var(Y)$ for any random variables $X$ and $Y$.
    Applying this bound, we have
    \begin{align}
         & \sqrt{ \frac{\VV(\hat{P}, V_{h+1}^k) \ellfinal}{N}} 
         \nonumber
         \\
         & \le \sqrt{ \frac{ \left(3 \VV(P, V_{h+1}^*) + 3 \VV(P, V_{h+1}^k - V_{h+1}^*) + \frac{4 H^2 B(z) \ell_2(z)}{3 N} \right) \ellfinal}{N}}
         \nonumber
         \\
         & \le \sqrt{ \frac{3 \VV(P, V_{h+1}^*) \ellfinal }{N}}
         + \sqrt{\frac{3 \VV(P,V_{h+1}^k - V_{h+1}^*) \ellfinal }{N}} 
         + \frac{2 H  B(z) \ell_2(z) }{\sqrt{3} N}
         \label{eq:empirical variance decomposition 1}
         \, ,
    \end{align}
    where the last inequality uses that $\sqrt{a+b+c} \le \sqrt{a} + \sqrt{b} + \sqrt{c}$, and $ \ellfinal(z) \le B(z) \ell_2(z)$ for the last term.
    For the second term, we apply the AM-GM inequality and obtain
    \begin{align}
        \sqrt{\frac{3 \VV(P,V_{h+1}^k - V_{h+1}^*) \ellfinal }{N}}
        & \le \frac{3 \VV(P,V_{h+1}^k - V_{h+1}^*)}{160 H} + \frac{40 H \ellfinal}{N}
        \nonumber
        \\
        & \le \frac{3 \VV(P,V_{h+1}^k - V_{h+1}^*)}{160 H} + \frac{40 H B(z)\ell_2(z)}{N}
         \label{eq:empirical variance decomposition 2}
        \, .
    \end{align}
    Combining inequalities~\eqref{eq:empirical variance decomposition 1} and~\eqref{eq:empirical variance decomposition 2}, we have
    \begin{align*}
         \sqrt{ \frac{\VV(\hat{P}, V_{h+1}^k) \ellfinal}{N}} 
         & \le \sqrt{ \frac{3 \VV(P, V_{h+1}^*) \ellfinal}{N}} +\frac{3 \VV(P, V_{h+1}^k - V_{h+1}^*)}{160 H} + \frac{42 H B(z) \ell_2(z)}{N}
         \, .
    \end{align*}
    Finally, we obtain that
    \begin{align*}
        \frac{20}{3} \sqrt{ \frac{\VV(\hat{P} V_{h+1}^k) \ellfinal}{N}} + \frac{400 H \ellfinal}{9 N}
        & \le 12 \sqrt{ \frac{\VV(P, V_{h+1}^*) \ellfinal }{N}} + \frac{1}{8H} \VV(P, V_{h+1}^k - V_{h+1}^* ) + \frac{280 H B(z) \ell_2(z)}{N} + \frac{400 H\ellfinal }{9 N}
        \\
        & \le 12 \sqrt{ \frac{\VV(P, V_{h+1}^*) \ellfinal }{N}} + \frac{1}{8H} \VV(P, V_{h+1}^k - V_{h+1}^* ) 
        + \frac{325 H B(z) \ell_2(z)}{N}
        \, .
    \end{align*}
\end{proof}

\begin{lemma}
\label{lma:upper bound on est est}
    Under $\Ecal$, we have that for all $z \in \effspace$, $y \in \Ycal$, $h \in [H]$, and $k \in [K]$ with $N^k(z) \ge 1$, we have
    \begin{align*}
        ((\hatPeff^k)_z - (\Peff)_z) V_{h+1}^k(\cdot, y) \le \sqrt{\frac{2 \VV((\Peff)_z, V_{h+1}^*(\cdot, y)) \ellfinal(z) }{N^k(z)}} + \frac{1}{8H}\VV((\Peff)_z, (V_{h+1}^k - V_{h+1}^*)(\cdot, y) + \frac{14 H B(z) \ell_2(z)}{3N^k(z)}
        \, .
    \end{align*}
\end{lemma}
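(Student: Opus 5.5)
The plan is to split $V_{h+1}^k = V_{h+1}^* + (V_{h+1}^k - V_{h+1}^*)$ and bound the two resulting deviations separately. With the shorthand $P := (\Peff)_z$, $\hat{P} := (\hatPeff^k)_z$, $N := N^k(z)$, and $V^*, V^k$ denoting the slices at $y$ restricted to $\Xcal(z)$, we write
\begin{align*}
    (\hat{P} - P)V_{h+1}^k = (\hat{P} - P)V_{h+1}^* + (\hat{P} - P)(V_{h+1}^k - V_{h+1}^*)
    \, .
\end{align*}

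\textbf{First term.} We apply the events of \cref{lma:optimal value concentration} and \cref{lma:all functions concentration}, the latter with $V = V_{h+1}^*$ and range $[0,H]$. Taking the minimum of the two bounds, exactly as in the proof of \cref{lma:partially:pre optimisim}, gives
\begin{align*}
    \sqrt{\frac{2\VV(P, V_{h+1}^*)\ellfinal}{N}} + \frac{H B(z)\ell_2(z)}{3N}
    \, ,
\end{align*}
using $\ellfinal \le B(z)\ell_2(z)$ in the lower-order term. This is the first term of the target, plus a lower-order remainder.

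\textbf{Second term.} Here $V_{h+1}^k$ is data-dependent, so only the uniform-over-functions bound is available. We apply \cref{lma:all functions concentration} to $V = V_{h+1}^k - V_{h+1}^*$. By optimism (\cref{lma:partially:optimism}) this function takes values in $[0,H]$. This yields
\begin{align*}
    \sqrt{\frac{2\VV(P, V_{h+1}^k - V_{h+1}^*) B(z)\ell_2(z)}{N}} + \frac{H B(z)\ell_2(z)}{3N}
    \, .
\end{align*}
We then apply AM-GM in the form $\sqrt{2 v L/N} \le \frac{v}{8H} + \frac{4 H L}{N}$ with $v = \VV(P, V_{h+1}^k - V_{h+1}^*)$ and $L = B(z)\ell_2(z)$. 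This turns the square root into the target's $\frac{1}{8H}$ variance term plus $\frac{4 H B(z)\ell_2(z)}{N}$.

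\textbf{Collecting constants.} The lower-order terms sum to $\frac{1}{3} + 4 + \frac{1}{3} = \frac{14}{3}$ times $\frac{H B(z)\ell_2(z)}{N}$, which matches the stated constant exactly. This is reassuring that this is the intended route.

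\textbf{Main obstacle.} The delicate point is justifying the use of \cref{lma:all functions concentration} on the random function $V_{h+1}^k - V_{h+1}^*$. That lemma is uniform over all functions on $\Xcal(z)$ with a given range, so this dependence is fine, but it forces the $B(z)$ factor, which must then be absorbed by AM-GM into the lower-order term rather than the leading one. One must also restrict attention to $\Xcal(z)$, on which both $P$ and $\hat{P}$ are supported, which is what makes the lemma applicable.
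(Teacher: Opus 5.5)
Your proposal is correct and follows the paper's proof essentially step for step. You use the same split $V_{h+1}^k = V_{h+1}^* + (V_{h+1}^k - V_{h+1}^*)$, take the minimum of \cref{lma:optimal value concentration,lma:all functions concentration} on the first piece, and apply \cref{lma:all functions concentration} on the $[0,H]$-valued second piece followed by AM-GM with weight $\frac{1}{8H}$. The constant bookkeeping $\frac{1}{3} + 4 + \frac{1}{3} = \frac{14}{3}$ via $\ellfinal(z) \le B(z)\ell_2(z)$ also matches.
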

\begin{proof}
    Denote $P := (\Peff)_z$, $\hat{P} := (\hatPeff)_z$, $V_{h+1}^* := V_{h+1}^*(\cdot, y)$, $V_{h+1}^k := V_{h+1}^k(\cdot, y)$, $N := N^k(z)$, and $\ellfinal := \ellfinal(z)$.
    We begin by adding and subtracting $(\hat{P} - P)V_{h+1}^*$.
    \begin{align}
        (\hat{P}  - P) V_{h+1}^k 
        & = (\hat{P} - P)V_{h+1}^* + (\hat{P} - P)(V_{h+1}^k - V_{h+1}^*)
        \label{eq:estimate concentration 1}
        \, .
    \end{align}
    By \cref{lma:optimal value concentration,lma:all functions concentration}, we have
    \begin{align}
        (\hat{P} - P)V_{h+1}^* \le \sqrt{\frac{2 \VV(P, V_{h+1}^*) \ellfinal}{N}} + \frac{H \ellfinal}{3N}
        \label{eq:estimate concentration 2}
        \, ,
    \end{align}
    where we use that $\ellfinal := \ellfinal(z) = \ell_1 \land B(z) \ell_2(z)$.
    By \cref{lma:all functions concentration}, and using that $0 \le V_{h+1}^k(x) - V_{h+1}^*(x) \le H$ for all $x \in \Xcal$ by \cref{lma:partially:optimism}, we have
    \begin{align}
        (\hat{P} - P)(V_{h+1}^k - V_{h+1}^*) & \le \sqrt{\frac{2 \VV(P, V_{h+1}^k - V_{h+1}^*) B(z) \ell_2(z)}{N}} + \frac{H B(z) \ell_2(z)}{3 N}
        \nonumber
        \\
        & \le \frac{1}{8H}\VV(P, V_{h+1}^k - V_{h+1}^*) + \frac{13 H B(z) \ell_2(z)}{3N}
        \label{eq:estimate concentration 3}
        \, ,
    \end{align}
    where we use the AM-GM inequality for the last inequality.
    Plugging inequalities~\eqref{eq:estimate concentration 2} and~\eqref{eq:estimate concentration 3} into inequality~\eqref{eq:estimate concentration 1}, we obtain that
    \begin{align*}
        (\hat{P}  - P) V_{h+1}^k 
        & \le \sqrt{\frac{2 \VV(P, V_{h+1}^*) \ellfinal}{N}} + \frac{H \ellfinal}{3N} + \frac{1}{8H}\VV(P, V_{h+1}^k - V_{h+1}^*) + \frac{13 H B(z) \ell_2(z)}{3N}
        \\
        & \le \sqrt{\frac{2 \VV(P, V_{h+1}^*) \ellfinal}{N}} + \frac{1}{8H}\VV(P, V_{h+1}^k - V_{h+1}^*) + \frac{14 H B(z) \ell_2(z)}{3N}
        \, ,
    \end{align*}
    where we use that $\ellfinal \le B(z)\ell_2(z)$ by definition for the last inequality.
\end{proof}

\begin{proof}[Proof of \cref{lma:surplus bound}]
    We let $b_h^k(s, a, y') := \frac{20}{3} \sqrt{\frac{\VV((\hatPeff)_z, V_{h+1}^k(\cdot, y') \ellfinal(z)}{N^k(z)}} + \frac{400 H \ellfinal(z)}{N^k(z)}$, where $z = \Zmap(s, a, y')$.
    The value of $b_h^k(s, a, y')$ corresponds to the optimistic bonus term added by $\MVP$.
    We first expand $V_h^k(s)$ as follows:
    \begin{align*}
        V_h^k(s)
        & = \Expec_{y' \sim P_{\Ycal}(s, a)} [ Q_h^k(s, a, y') ]
        \\
        & \le \Expec_{y' \sim P_{\Ycal}(s, a)} \left[ r(s, a) + (\hatPeff^k)_z V_{h+1}^k(\cdot, y') + b_h^k(s, a, y') \right]
        \\
        & = r(s, a) + \Expec_{y' \sim P_{\Ycal}(s, a) } \left[ (\hatPeff^k)_z V_{h+1}^k(\cdot, y') + b_h^k(s, a, y') \right]
        \, .
    \end{align*}
    Then, we obtain that
    \begin{align*}
        V_h^k(s) - r(s, a) - P_{s, a}V_{h+1}^k
        & \le \Expec_{y' \sim P_{\Ycal}(s, a)} \left[ b_h^k(s, a, y') + ((\hatPeff^k)_z - (\Peff)_z) V_{h+1}^k(\cdot, y' ) \right]
        \, .
    \end{align*}
    By \cref{lma:upper bound the bonus term,lma:upper bound on est est}, we have
    \begin{align*}
        & b_h^k(s, a, y') + ((\hatPeff^k)_z - (\Peff)_z) V_{h+1}^k(\cdot, y' )
        \\
        & \le 14 \sqrt{ \frac{\VV((\Peff)_z, V_{h+1}^*(\cdot, y')) \ellfinal(z) }{N^k(z)}} + \frac{1}{4H} \VV((\Peff)_z,(V_{h+1}^k - V_{h+1}^*)(\cdot, y') ) 
        + \frac{330 H B(z) \ell_2(z)}{N^k(z)}
        \, .
    \end{align*}
    Let $I_1 := \max\{ V_h^k(s) - r(s, a) - P_{s, a}V_{h+1}^k, 0 \}$.
    Then, we obtain
    \begin{align}
        I_1 \le \Expec_{y' \sim P_{\Ycal}(s, a)} \left[14 \sqrt{ \frac{\VV((\Peff)_z, V_{h+1}^*(\cdot, y')) \ellfinal(z) }{N^k(z)}} + \frac{1}{4H} \VV((\Peff)_z,(V_{h+1}^k - V_{h+1}^*)(\cdot, y') ) 
        + \frac{330 H B(z) \ell_2(z)}{N^k(z)} \right]
        \, .
        \label{eq:surplus 1}
    \end{align}
    Now, we focus on the second term in the expectation.
    Denoting $\tilde{V}_h^k(s) := V_h^k(s) - V_h^*(s)$, we know that $0 \le \tilde{V}_h^k(s) \le H$ by \cref{lma:partially:optimism} and bounds on $V_h^k(s) \le H$ and $V_h^*(s) \ge 0$.
    By \cref{lma:eff variance decomposition}, we have
    \begin{align*}
        \VV((\Peff)_z, \tilde{V}_{h+1}^k(\cdot, y')) \le P_{s, a}(\tilde{V}_{h+1}^k)^2 - (\tilde{V}_h(s))^2 + 2 H \max \left\{ \tilde{V}_h^k(s) - P_{s, a}\tilde{V}_{h+1}^k, 0 \right\}
        \, .
    \end{align*}
    Note that we have
    \begin{align*}
        \tilde{V}_h^k(s) - P_{s, a} \tilde{V}_{h+1}^k
        & = V_h^k(s) - V_h^*(s) - P_{s, a}(V_{h+1}^k - V_{h+1}^*)
        \\
        & \le V_h^k(s) - Q_h^*(s, a) - P_{s, a}V_{h+1}^k + P_{s, a} V_{h+1}^*
        \\
        & = V_h^k(s) - r(s, a) - P_{s, a}V_{h+1}^k
        \, ,
    \end{align*}
    and hence
    \begin{align*}
        \VV((\Peff)_z, \tilde{V}_{h+1}^k(\cdot, y')) \le P_{s, a}(\tilde{V}_{h+1}^k)^2 - (\tilde{V}_h^k(s))^2 + 2 H I_1
        \, .
    \end{align*}
    Plugging this bound into inequality~\eqref{eq:surplus 1}, we obtain
    \begin{align*}
        I_1 \le & \frac{1}{2}I_1 + \frac{1}{4H} \left( P_{s, a}(V_{h+1}^k - V_{h+1}^*)^2 - (V_{h}^k(s) - V_{h}^*(s))^2 \right)
        \\
        & \qquad + \Expec_{y' \sim P_{\Ycal}(s, a)} \left[14 \sqrt{ \frac{\VV((\Peff)_z, V_{h+1}^*(\cdot, y')) \ellfinal }{N^k(z)}} 
        + \frac{330 H B(z) \ell_2(z)}{N^k(z)} \right] 
        \, .
    \end{align*}
    Solving the inequality with respect to $I_1$, we conclude that
    \begin{align*}
        I_1
        & \le  \frac{1}{2H} \left( P_{s, a}(V_{h+1}^k - V_{h+1}^*)^2 - (V_{h}^k(s) - V_{h}^*(s))^2 \right)
        \\
        & \qquad + \Expec_{y' \sim P_{\Ycal}(s, a)} \left[28 \sqrt{ \frac{\VV((\Peff)_z, V_{h+1}^*(\cdot, y')) \ellfinal }{N^k(z)}} 
        + \frac{660 H B(z) \ell_2(z)}{N^k(z)} \right] 
        \, .
    \end{align*}
\end{proof}

\subsection{Additional Technical Lemmas}

\begin{lemma}
\label{lma:eff variance decomposition}
    Let $c \ge 0$ be a constant.
    Let $\{V_h\}_{h=1}^{H+1}$ be a sequence of functions with $V_h: \Scal \rightarrow [0, c]$.
    Then, for any $(s, a) \in \Scal \times \Acal$ and $h \in [H]$, we have
    \begin{align*}
        \Expec_{y' \sim P_{\Ycal}(s, a)} \VV((\Peff)_{\Zmap(s, a, y')}, V_{h+1}(\cdot, y')) \le P_{s, a}(V_{h+1})^2 - (V_h(s))^2 + 2 c \max\{ V_h(s) - P_{s, a}V_{h+1}, 0\}
        \, .
    \end{align*}
\end{lemma}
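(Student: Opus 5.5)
We prove the inequality by writing the left-hand side as the total variance of $V_{h+1}(s')$ under $P_{s,a}$ minus the variance across $y'$, and then controlling $P_{s,a}(V_{h+1})^2 - (P_{s,a}V_{h+1})^2$ by $P_{s,a}(V_{h+1})^2 - V_h(s)^2$ plus a correction term.

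\textbf{Step 1: law of total variance.} Fix $(s,a)$ and $h$, and write $s' = (x',y')$. By \cref{assm:partially known dynamics}, $P_{s,a}(x',y') = (P_{\Ycal})_{s,a}(y')\,(P_{\Xcal})_{s,a,y'}(x')$. By \cref{assm:partially known dynamics tabular}, $(P_{\Xcal})_{s,a,y'} = (\Peff)_{\Zmap(s,a,y')}$. Hence, conditional on $y'$, the random variable $V_{h+1}(x',y')$ has distribution induced by $x' \sim (\Peff)_{\Zmap(s,a,y')}$. The law of total variance then gives
\begin{align*}
\Expec_{y' \sim P_{\Ycal}(s,a)} \VV\bigl((\Peff)_{\Zmap(s,a,y')}, V_{h+1}(\cdot,y')\bigr) \le \VV(P_{s,a}, V_{h+1}) = P_{s,a}(V_{h+1})^2 - (P_{s,a}V_{h+1})^2 .
\end{align*}
We simply drop the nonnegative between-$y'$ variance term.

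\textbf{Step 2: compare $(P_{s,a}V_{h+1})^2$ with $V_h(s)^2$.} Set $m := P_{s,a}V_{h+1} \in [0,c]$ and $v := V_h(s) \in [0,c]$. It suffices to show $v^2 - m^2 \le 2c\max\{v - m, 0\}$.
\begin{itemize}
\item If $v \le m$, the left side is nonpositive and the bound is immediate.
\item If $v > m$, then $v^2 - m^2 = (v-m)(v+m) \le 2c(v-m)$, because $v + m \le 2c$.
\end{itemize}
Rearranging gives $-(m)^2 \le -v^2 + 2c\max\{v-m,0\}$. Combining this with Step 1 yields the claim.

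\textbf{Expected obstacle.} The only delicate point is in Step 1. We must check that $\Zmap(s,a,y')$ identifies the conditional law of $x'$ given $y'$ exactly, so that the inner variance in the statement is the conditional variance. This follows directly from the factorization in \cref{assm:partially known dynamics} together with \cref{assm:partially known dynamics tabular}. We also need $V_{h+1}(\cdot, y')$ to be evaluated only on the support of that conditional law, which causes no issue. Everything else is elementary algebra.
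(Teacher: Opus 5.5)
Your proof is correct and follows the paper's route. You use the law of total variance to bound the left side by $\VV(P_{s,a}, V_{h+1})$, dropping the between-$y'$ term, and then apply the variance-decomposition bound. The only difference is that the paper cites that last bound as \cref{lma:variance decomposition} (Lemma 27 of \citet{lee2025minimax}), whereas you verify it directly from $v^2 - m^2 \le 2c\max\{v-m,0\}$ for $v, m \in [0,c]$.
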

\begin{proof}
    By the law of total variance, we have
    \begin{align*}
        \Expec_{y' \sim P_{\Ycal}(s, a)} \VV((\Peff)_{\Zmap(s, a, y')}, V_{h+1}(\cdot, y'))
        & = \VV(P_{s, a}, V_{h+1}) - \VV((P_{\Ycal})_{s, a}, (\Peff)_{\Zmap(s, a, y')} V_{h+1}(\cdot, y'))
        \\
        & \le \VV(P_{s, a}, V_{h+1})
        \, .
    \end{align*}
    Applying \cref{lma:variance decomposition} to $\VV(P_{s, a}, V_{h+1})$ completes the proof.
\end{proof}

\begin{lemma}[Lemma 27 in \citet{lee2025minimax}]
\label{lma:variance decomposition}
    Let $c \ge 0$ be a constant.
    Let $\{V_h\}_{h=1}^{H+1}$ sequences of functions with $V_h: \Scal \rightarrow [0, c]$.
    Then, for any $(s, a) \in \Scal \times \Acal$, we have
    \begin{align*}
        \VV(P_{s, a}, V_{h+1}) \le P_{s, a}(V_{h+1})^2 - (V_h(s))^2 + 2 c \max\{ V_h(s) - P_{s, a}V_{h+1}, 0\}
        \, .
    \end{align*}
\end{lemma}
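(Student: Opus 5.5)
We need to prove Lemma 27 (variance decomposition): $\VV(P_{s,a},V_{h+1}) \le P_{s,a}(V_{h+1})^2 - (V_h(s))^2 + 2c\max\{V_h(s)-P_{s,a}V_{h+1},0\}$, for arbitrary functions $V_h,V_{h+1}$ with values in $[0,c]$.

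The plan is a direct algebraic computation; no probabilistic or MDP structure beyond $P_{s,a}$ being a distribution is needed. Write $p := P_{s,a}V_{h+1}$ and $v := V_h(s)$, both in $[0,c]$. By definition,
\begin{align*}
\VV(P_{s,a},V_{h+1}) = P_{s,a}(V_{h+1})^2 - p^2 .
\end{align*}
Subtracting the claimed right-hand side, it suffices to show
\begin{align*}
v^2 - p^2 \le 2c\max\{v-p,0\} .
\end{align*}

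We split into two cases on the sign of $v-p$.
\begin{itemize}
\item If $v \le p$, then $v^2 - p^2 = (v-p)(v+p) \le 0$, because $v+p \ge 0$ (nonnegativity of the value functions). The right-hand side is $0$, so the inequality holds.
\item If $v > p$, then $v^2-p^2 = (v-p)(v+p) \le (v-p)\cdot 2c$, since $v,p \le c$. The right-hand side equals $2c(v-p)$, so the inequality again holds.
\end{itemize}

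The only point requiring care is that $p \in [0,c]$. This follows because $P_{s,a}$ is a probability distribution and $V_{h+1}$ takes values in $[0,c]$. No step is a real obstacle: the lemma is a two-line factorization once the variance is written as second moment minus squared mean.
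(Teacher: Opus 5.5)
Your proof is correct: once you write $\VV(P_{s,a},V_{h+1}) = P_{s,a}(V_{h+1})^2 - (P_{s,a}V_{h+1})^2$, the claim reduces to bounding $(V_h(s)-P_{s,a}V_{h+1})(V_h(s)+P_{s,a}V_{h+1})$ by $2c\max\{V_h(s)-P_{s,a}V_{h+1},0\}$, which follows from $0 \le V_h(s),\, P_{s,a}V_{h+1} \le c$. The paper gives no proof of its own and only cites Lemma 27 of \citet{lee2025minimax}; your factorization is the standard argument for this inequality.
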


\section{Proof of Theorem~\ref{thm:regret lower bound}}
\label{appx:lower bound}

In this section, we prove the lower bound result.

\subsection{Hard Structure for Delayed Observation: CodeMDP}

\label{appx:codeMDP}

In this section, we describe a specially designed MDP with delayed observation, whose learning complexity increases with the length of the delay.
This structure serves as a core component of the hard instances for the regret lower bound.

Given $D$, a \emph{CodeMDP} is an MDP with a state space $\Scal_{\mathrm{code}} = ([D] \times\{0, 1\}) \cup \{ s_{\mathrm{succ}}, s_{\mathrm{fail}}\}$ and an action space $\Acal = \{0, 1\}$.
The transition of the CodeMDP is deterministic and is as follows:
\begin{itemize}
    \item For states $(i, b)$ with $i = 2, \ldots, D$ and $b = 0, 1$, any action takes the agent to state $(i - 1, b)$.
    \item For states $(1, b)$ with $b = 0, 1$, taking action $b$ leads to $s_{\mathrm{succ}}$, and taking the other action leads to $s_{\mathrm{fail}}$.
    \item For state $s_{\mathrm{succ}}$, taking any action yields a reward of $1$ and returns to $s_{\mathrm{succ}}$.
    \item For state $s_{\mathrm{fail}}$, taking any action yields a reward of $0$ and returns to $s_{\mathrm{fail}}$.
\end{itemize}

Due to the delay, the agent cannot observe the intermediate states.
The result is only revealed after the episode, or at least after entering either $s_{\mathrm{suc}}$ or $s_{\mathrm{fail}}$, so the agent must plan a sequence of actions $a_1, \ldots, a_D$ in advance.
The mechanism of CodeMDP is as follows.
The agent randomly arrives at an initial state, say $(i, b)$, but agent can not observe the arrived state.
The agent takes the planned sequence of actions $a_1, \ldots, a_D$.
After taking $i-1$ actions, the agent arrives at $(1, b)$, which is unobservable, and then takes the action $a_i$.
If $a_i = b$, then the agent transitions to $s_{\mathrm{succ}}$ and receives the reward, otherwise fails.
\\
The randomness of the CodeMDP that the agent must learn is the initial state distribution.
For instance, suppose the initial state is distributed uniformly over states $\{(i, 0)\}_{i=1}^D$, then the agent receives the reward if and only if $a_i = 0$, so the optimal sequence of actions is $(0, \ldots, 0)$.
In general, suppose the initial state is sampled from a fixed distribution $P$ over $\Scal_{\mathrm{code}}$, where $P(s_{\mathrm{succ}}) = P(s_{\mathrm{fail}}) = 0$.
For the sake of brevity, we momentarily assume that $s_{\mathrm{succ}}$ transitions to $s_{\mathrm{fail}}$ after receiving a reward, so that the total reward is always 0 or 1.
Then, the expected reward, or the $Q$-value of an action sequence $\ab = (a_1, \ldots, a_D)$ under $P$ is $ Q_{P}(\ab)  = \sum_{i=1}^D \sum_{b \in \{0, 1\}} P(i, b) \ind\{ a_i = b \} = \sum_{i=1}^D P(i, a_i)$.
It follows that the optimal actions are $a_i^* = \argmax_{b \in \{0, 1\}} P(i, b)$ for $i = 1, \ldots, D$, and the optimal value function is
\begin{align*}
    V^*(P) := \sum_{i=1}^D \max\{ P(i, 0), P(i, 1)\}
    \, .
\end{align*}
In this sense, $P$ encodes a correct ``codeword'' $(a_1^*, \ldots, a_D^*)$ that yields the maximum reward of $V^*(P)$.

Now, we consider a certain parameterization of $P$.
For $\thetab \in [-1, 1]^D$, let $P(\thetab)$ assign probability $\frac{1 - \theta_i}{2D}$ to state $(i, 0)$ and $\frac{1 + \theta_i}{2D}$ to state $(i, 1)$.
The codeword for this distribution is $( \ind\{\theta_1 \ge 0\}, \ldots, \ind\{\theta_D \ge 0\})$ with the optimal value being
\begin{align*}
    V^*(P(\thetab)) & = \sum_{i=1}^D \max \left\{ \frac{1 + \theta_i}{2D}, \frac{1 - \theta_i}{2D}  \right \} 
    \\
    & = \frac{1}{2} + \frac{1}{2D}\sum_{i=1}^D |\theta_i| 
    \\
    & = \frac{1}{2} + \frac{1}{2D} \| \thetab \|_1
    \, .
\end{align*}

Suppose there is an additional initial state $s_0$ with $A$ actions.
Suppose there are $A$ vectors $\{ \thetab_a \}_{a \in \Acal}$ such that $P_{s_0, a} = P(\thetab_a)$.
Then, the optimal action from $s_0$ is the action $a^*$ with the largest $\| \thetab_a \|_1$, and taking any other action incurs instantaneous regret of at least $\frac{1}{2D}( \| \thetab_{a^*} \|_1 - \| \thetab_a \|_1)$, let alone taking the correct codeword actions of $P(\thetab_a)$.
Then, the agent must find the optimal arm by finding the vector $\thetab_a$ with the largest $\ell_1$-norm.
This problem can be reduced to the following problem formulation, which is also the one described in \cref{prop:l1 estimation}.

\paragraph{$\ell_1$-norm estimation problem.}
Let $\thetab \in [-1, 1]^d$.
For $t = 1, \ldots, n$, an index $I_t \sim \mathrm{Unif}([d])$ is sampled, and then $X_t \sim \mathrm{Ber}(\frac{1 + \theta_{I_t}}{2})$ is sampled.
Based on the observations $\{(I_t, X_t)\}_{t=1}^n$, what is the minimum number of $n$ required to estimate $\frac{1}{d} \| \thetab \|_1$ up to an additive error of $\varepsilon$?

The observation $(I_t, X_t)$ in the $\ell_1$-norm estimation problem corresponds to the state $(I_t, X_t) \in \Scal_{\mathrm{code}}$ being sampled as the first state of the CodeMDP.
Although the agent cannot observe that state immediately, it is observed after the end of the episode, which is why we can identify the observation types of the $\ell_1$-norm estimation and the CodeMDP.
Note that the $\ell_1$-norm estimation problem only models the learning of $V^*(P(\thetab_a))$ and disregards the learning of the optimal action sequence.
We later show that the hardness of learning $V^*(P(\thetab))$ is enough to prove \cref{thm:regret lower bound}.

We restate \cref{prop:l1 estimation}, which states a lower bound for the sample complexity of the $\ell_1$-norm estimation problem.

\begin{proposition}[Formal statement of \cref{prop:l1 estimation}]
\label{prop:l1 estimation 2}
    Suppose $d$ is larger than some absolute constant and $0 < \varepsilon < \frac{1}{48 \log d}$ is given.
    Let $n = \lfloor  \frac{d}{1152 \varepsilon^2 \log d} \rfloor$.
    Suppose $\hat{L}^n$ is an estimator for $\frac{1}{d} \| \thetab \|_1$ based on $n$ observations $\{(I_t, X_t)\}_{t=1}^n$.
    Then, we have
    \begin{align*}
        \inf_{\hat{L}^n} \sup_{\thetab \in [-1, 1]^d} \PP_{\thetab} \left( \left| \hat{L}^n - \frac{1}{d} \| \thetab \|_1 \right| \ge \varepsilon \right) \ge \frac{1}{8}
        \, ,
    \end{align*}
    meaning that every estimator based on only $n$ samples incurs the error of at least $\varepsilon$ for some $\thetab$ with probability at least $\frac{1}{8}$.
\end{proposition}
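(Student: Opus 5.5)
We use the method of two fuzzy hypotheses (Le Cam's two-point method with mixture priors) from the functional-estimation literature, in the style of the $\ell_1$-norm lower bounds of Lepski--Nemirovski--Spokoiny and Cai--Low.

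\textbf{Step 1: Poissonization.} With $n$ samples, the count $N_i$ of index $i$ is approximately $\mathrm{Poi}(n/d)$, and the counts are approximately independent across $i$. Within coordinate $i$, the number of ones is then $\mathrm{Poi}\bigl(\frac{n}{d}\cdot\frac{1+\theta_i}{2}\bigr)$ and the number of zeros is an independent $\mathrm{Poi}\bigl(\frac{n}{d}\cdot\frac{1-\theta_i}{2}\bigr)$. Both the Poissonized and the multinomial models observe the same sufficient statistic, these per-coordinate counts. We therefore prove the bound for a Poissonized model with a slightly smaller rate, say $n' = n/2$. We then transfer it back, using that a Poisson sample size at most $n$ occurs with constant probability (the standard reduction, accepting a constant-factor loss).

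\textbf{Step 2: Two priors with matched moments.} Draw the $\theta_i$ i.i.d.\ from one of two distributions $\mu_0,\mu_1$ on $[-\eta,\eta]$, where $\eta \asymp \varepsilon\sqrt{\log d}$. We require:
\begin{itemize}
\item $\mu_0$ and $\mu_1$ agree on their first $L \asymp \log d$ moments;
\item $\mathbb{E}_{\mu_1}|\theta| - \mathbb{E}_{\mu_0}|\theta| \ge c\,\eta/L \ge 4\varepsilon$.
\end{itemize}
Such priors exist by duality with the best uniform polynomial approximation of $|x|$ on $[-\eta,\eta]$ by degree-$L$ polynomials, whose error is $\asymp \eta/L$ (Bernstein). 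The condition $\varepsilon < 1/(48\log d)$ ensures $\eta \le 1$, so every $\theta_i$ lies in $[-1,1]$.

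Next we pass from priors to deterministic parameters. Concentration of $\frac1d\sum_i|\theta_i|$ around its prior mean (Hoeffding, deviation $\eta/\sqrt d \ll \varepsilon$) shows the two functional values are separated by at least $2\varepsilon$ with high probability. Conditioning on these good events costs only a small total-variation term.

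\textbf{Step 3: Bound the distance between the mixtures.} We bound the total variation (or $\chi^2$) distance between the two induced observation laws. Since coordinates are independent, it suffices to bound a single coordinate and tensorize: we need a per-coordinate $\chi^2$ of order $o(1/d)$, or a per-coordinate Hellinger distance squared of order $o(1/d)$.

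For one coordinate, the observation is a pair of independent Poisson counts with means $\lambda(1\pm\theta)/2$, where $\lambda = n/d$. Expanding the likelihood in $\theta$ (Charlier/Taylor expansion), the terms of order $\le L$ cancel because the moments match. The remainder is governed by $\bigl(\lambda \eta^2\bigr)^{L}/L!$-type quantities. Choosing $\lambda\eta^2 \asymp \lambda\varepsilon^2\log d$ to be at most a small constant times $L$ makes this remainder $\le d^{-c}$ for some $c > 1$. This choice corresponds exactly to $n \asymp d/(\varepsilon^2\log d)$, which is where the constant $1152$ enters. Summing over the $d$ coordinates gives total variation at most $1/2$.

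Le Cam's inequality then yields the error probability $\ge \frac{1}{2}\bigl(1 - \mathrm{TV}\bigr) - (\text{small}) \ge \frac18$.

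\textbf{Main obstacle.} The main obstacle is Step 3: getting a clean per-coordinate $\chi^2$ bound for the two-Poisson model that decays geometrically in $L$, with explicit constants. I would first try the known bound
\[
\chi^2\Bigl(\mathbb{E}_{\mu_1}\mathrm{Poi}(\cdot),\,\mathbb{E}_{\mu_0}\mathrm{Poi}(\cdot)\Bigr) \le \sum_{j > L} \frac{(\text{mean range})^{2j}}{j!\,\lambda^{j}}
\]
from Wu--Yang style moment-matching arguments. I would apply it to the sum count $N_i$, which is independent of $\theta$, and then to the difference structure. Conditionally on $N_i = m$, the number of ones is $\mathrm{Bin}\bigl(m, (1+\theta)/2\bigr)$. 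Its law depends on the moments of $\theta$ only up to order $m$, so it is identical under the two priors whenever $m \le L$. This isolates the discrepancy to the event $N_i > L$, and Poisson tail bounds ($\lambda \ll L$) control that event.
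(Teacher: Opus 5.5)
Your architecture matches the paper's: two fuzzy hypotheses built from Cai--Low moment-matching priors, the fact that given the count of index $i$ the number of ones is a binomial mixture that sees only moments up to that count, a per-coordinate $\chi^2$ bound, Hoeffding for $\frac{1}{d}\sum_i|\theta_i|$, and Pinsker/Le Cam. However, two steps are wrong as written.

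\textbf{The calibration in Step 2 fails.} Priors on $[-\eta,\eta]$ that match $L$ moments separate $\mathbb{E}|\theta|$ only by order $\eta/L$. With $\eta\asymp\varepsilon\sqrt{\log d}$ and $L\asymp\log d$, the gap is $\asymp\varepsilon/\sqrt{\log d}\ll\varepsilon$, so your scale choice and your second bullet ($c\eta/L\ge 4\varepsilon$) cannot both hold. Separation forces $\eta\gtrsim\varepsilon L\asymp\varepsilon\log d$. The paper takes $\eta=\varepsilon'=4k\varepsilon$ with $k=6\log d$, so the gap is at least $\varepsilon'/(2k)=2\varepsilon$. The hypothesis $\varepsilon<1/(48\log d)$ exists exactly to give $\varepsilon'<\frac12$; its $1/\log d$ is the signature of the scale $\varepsilon\log d$. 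Step 3 inherits the error. With $\eta^2\asymp\varepsilon^2\log d$ and $\lambda=n/d$, the condition $\lambda\eta^2\lesssim L$ reads $n\lesssim d/\varepsilon^2$, not $d/(\varepsilon^2\log d)$. So your parameters do not yield the rate you claim. With $\eta\asymp\varepsilon\log d$ the same condition gives exactly $n\lesssim d/(\varepsilon^2\log d)$. In the paper, $n\le d\log d/(2\varepsilon'^2)$ is precisely $n\le d/(1152\,\varepsilon^2\log d)$, since $1152=2\cdot 24^2$.

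\textbf{Step 1 runs in the wrong direction.} To run a fixed-$n$ estimator on a Poissonized sample you need $N'\ge n$. So a Poissonized lower bound must be proved at rate $n'\ge 2n$, paying $\mathbb{P}(\mathrm{Poi}(n')<n)$. At rate $n/2$ the event $N'\ge n$ is exponentially rare. The event ``$N'\le n$'' you invoke only shows that $n$ fixed samples are at least as informative, which is the wrong inequality for a lower bound.

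Poissonization is also unnecessary, and dropping it is how the paper keeps the explicit constants $1152$ and $\frac18$. Because the $\theta_i$ are i.i.d.\ under each prior, the multinomial mixture likelihood factorizes as $(2d)^{-n}\prod_{i}\mathbb{E}_{\theta}\bigl[f(\theta,N(i,0),N(i,1))\bigr]$ with $f(\theta,a,b)=(1-\theta)^a(1+\theta)^b$. The log-likelihood ratio is therefore a sum over coordinates, and the KL equals exactly $d$ times the KL of the law of $(N(i,0),N(i,1))$. No independence across coordinates is needed.

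The paper then conditions on $N=N(i,0)+N(i,1)\sim\mathrm{Bin}(n,1/d)$, whose law is free of $\theta$. It uses the Krawtchouk-type orthogonality identity $\mathbb{E}_{N_0\sim\mathrm{Bin}(N,1/2)}\bigl[(\mathbb{E}_{1}f-\mathbb{E}_{2}f)^2\bigr]=\sum_{l}\binom{N}{l}m_l^2$. Here $\mathbb{E}_j$ averages $f(\theta,N_0,N-N_0)$ over $\theta\sim\varepsilon'\nu_j$, and $m_l$ is the difference of the $l$-th prior moments. The identity kills all $l\le k$, and it is exactly what your ``main obstacle'' needs.

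Your sketch also omits the $\chi^2$ denominator. The paper lower-bounds it by $(1-\varepsilon'^2)^{N/2}$ using symmetry of the priors and AM--GM. After averaging over $N$ this costs a factor up to $(1+2\varepsilon'^2/d)^n\le d$. So the numerator tail $\bigl(\frac{2\log d}{k+1}\bigr)^{k+1}$ must be $O(d^{-2})$, not merely $d^{-c}$ with $c>1$. That is why the paper requires $k\ge 4\log d+2$.
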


The proposition is closely related to Theorem 3 in \citet{cai2011testing}.
Their theorem also considers the problem of estimating $\| \thetab\|_1$, but the observation considered is a single $d$-dimensional vector sampled from a Gaussian distribution $\Ncal(\thetab, I_d)$, and the measurement of risk is also different.
The proof of \cref{prop:l1 estimation 2} follows the high-level roadmap of Theorem 3 in \citet{cai2011testing}, but the intermediate steps require a significantly different type of computation due to this difference.
The proof of \cref{prop:l1 estimation 2} is presented in \cref{appx:proof of l1 estimation}.

\subsection{Construction of Hard Instances}
\label{appx:construction of hard instance}

Recall the structure where there is an initial state $s_0$ with $A$ actions that transition to a CodeMDP, whose transition probabilities are parameterized by $\thetab_a$.
Suppose exactly one of the $\|\thetab_a\|_1$ is larger than the others by $\varepsilon$.
\cref{prop:l1 estimation 2} implies that the agent must take each action at least $\Omega( \frac{D}{\varepsilon^2 \log D})$ times to specify the optimal action, and failing to do so will incur $\Omega(\varepsilon K)$ regret.
Tuning $\varepsilon$ leads to a (non-rigorous) lower bound of $\Omega (\sqrt{\frac{D A K}{\log D}}) $.
We inflate this bound by a $\sqrt{S}$ factor by using the tree structure considered in \citet{domingues2021episodic}.
Hence, the final structure of the hard instances will be a combination of the tree structure and the CodeMDP, where the initial state of the whole MDP will be the root of the tree.
We explain the structure of the tree in more detail.

The tree part of the MDP is a $A$-ary tree of states with $\Theta(S)$ leaves.
Specifically, we choose a set of leaves $\Sleaf \subset \Scal$ with $| \Sleaf |$ equal to the power of 2 that satisfies $\frac{S}{8}\le | \Sleaf| < \frac{S}{4}$.
Then, we construct the tree bottom-up, adding parent states, one for at most $A$ children.
It can be shown that the height of the tree is at most $H_{\mathrm{tree}} \le 1 + \lceil \log_A \frac{S}{4} \rceil$ and the total number of states in the tree is at most $\frac{S}{2}$.
For a state in the tree that is not a leaf, each action corresponds to one of its children, where the action transitions to that child.
Then, the path from the root to a leaf corresponds to a sequence of $H_{\mathrm{tree}} - 1$ actions that takes the agent deterministically from the root to the leaf state.
\\
Then, we add a CodeMDP structure with $D = \tilde{D} := \min\{ \Dmax, \frac{H}{4}, \frac{B}{2}, \frac{S}{4} -1\}$.
For each leaf-action pair $(l, a) \in \Sleaf, \Acal$, we assign a vector $\thetab_{l, a} \in [-1, 1]^d$ so that $P_{l, a} = P(\thetab_{l, a})$, which is a distribution over $\Scal_{\mathrm{code}}$ defined in \cref{appx:codeMDP}.
One of the goals of the agent becomes to find the pair that has the largest $\frac{1}{\tilde{D}}\| \thetab_{l, a} \|_1$ based on the observation.
\cref{fig:hard instance} illustrates the structure of the whole MDP.

We assume $H \ge 4 H_{\mathrm{tree}} \ge 8 + 4 \log_A S$, so that taking $H_{\mathrm{tree}}$ actions to reach a leaf state and then taking $\tilde{D} \le \frac{H}{4}$ many actions for the CodeMDP takes at most $\frac{H}{2}$ time steps.
We note that the total number of states required to construct this structure is at most $\frac{S}{2} + 2 \tilde{D} + 2 \le S$, and the branching factor is $2\tilde{D} \le B$.

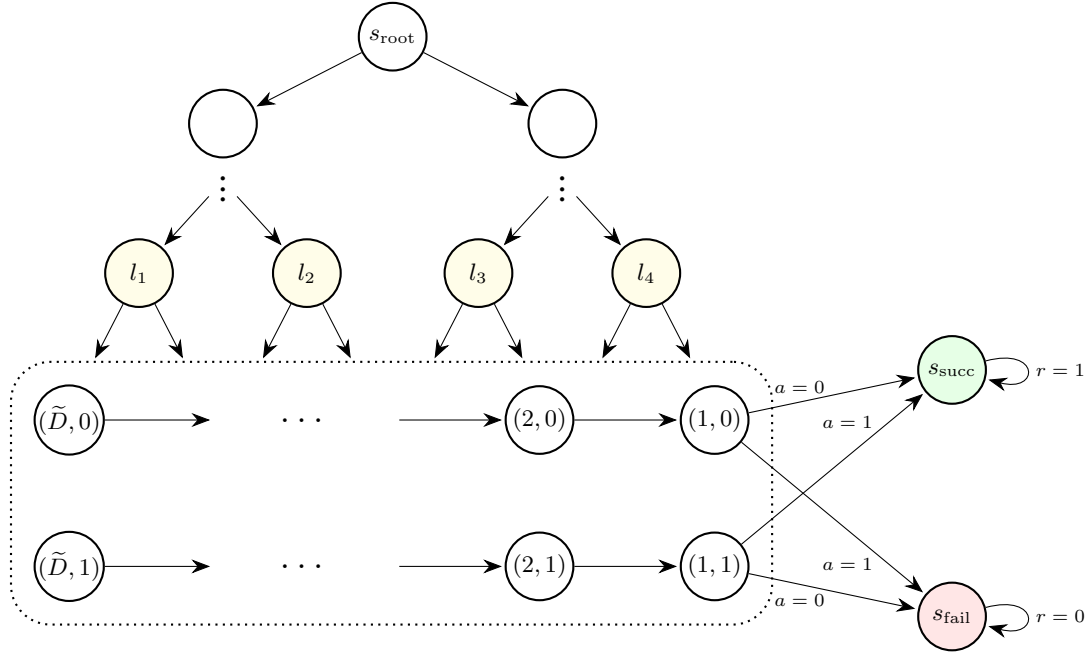
\begin{figure}[tb]
\centering

\begin{tikzpicture}[
    >={Stealth[length=2.5mm]},
    shorten >=1pt,
    state/.style={
        circle,
        draw=black,
        thick,
        minimum size=0.9cm,
        inner sep=0pt,
        font=\small
    },
    leaf/.style={
        state,
        fill=yellow!10
    },
    missing/.style={
        draw=none, 
        font=\Large
    }
]

    \node[state] (root) at (4, 4.5) {$s_{\mathrm{root}}$}; 

    \node[state] (n1) [below left=0.5cm and 1.6cm of root] {};
    \node[state] (n2) [below right=0.5cm and 1.6cm of root] {};

    \node[missing] (dots1) [below=-0.1cm of n1] {$\vdots$};
    \node[missing] (dots2) [below=-0.1cm of n2] {$\vdots$};

    \node[leaf] (leaf1) [below left=0.5cm and 0.6cm of dots1] {$l_1$};
    \node[leaf] (leaf2) [below right=0.5cm and 0.6cm of dots1] {$l_2$};
    \node[leaf] (leaf3) [below left=0.5cm and 0.6cm of dots2] {$l_3$};
    \node[leaf] (leaf4) [below right=0.5cm and 0.6cm of dots2] {$l_4$};

    \draw[->] (root) -- (n1);
    \draw[->] (root) -- (n2);
    \draw[->] (dots1) -- (leaf1);
    \draw[->] (dots1) -- (leaf2);
    \draw[->] (dots2) -- (leaf3);
    \draw[->] (dots2) -- (leaf4);

    \node[state] (c1_start) at (-0.3, -0.6) {$(\tilde{D}, 0)$};
    \node[missing] (c1_dots) [right=2.2cm of c1_start] {$\dots$};
    \node[state] (c1_mid)  [right=2.2cm of c1_dots] {$(2, 0)$};
    \node[state] (c1_end)  [right=1.4cm of c1_mid] {$(1, 0)$};

    \node[state] (c2_start) [below=1cm of c1_start] {$(\tilde{D}, 1)$};
    \node[missing] (c2_dots) [right=2.2cm of c2_start] {$\dots$};
    \node[state] (c2_mid)  [right=2.2cm of c2_dots] {$(2, 1)$};
    \node[state] (c2_end)  [right=1.4cm of c2_mid] {$(1, 1)$};

    \draw[->, shorten >=8mm] (c1_start) -- (c1_dots);
    \draw[->, shorten <=8mm] (c1_dots) -- (c1_mid);
    \draw[->] (c1_mid) -- (c1_end);

    \draw[->, shorten >=8mm] (c2_start) -- (c2_dots);
    \draw[->, shorten <=8mm] (c2_dots) -- (c2_mid);
    \draw[->] (c2_mid) -- (c2_end);

    \node[draw, dotted, thick, inner sep=0.3cm, rounded corners=15pt, fit=(c1_start) (c1_end) (c2_start) (c2_end)] (chainbox) {};

    \node[state, fill=green!10] (succ) [above right=0.01cm and 2.5cm of c1_end] {$s_{\mathrm{succ}}$};
    \node[state, fill=red!10]   (fail) [below right=0.01cm and 2.5cm of c2_end] {$s_{\mathrm{fail}}$};

    \draw[->] (succ) edge [loop right] node[font=\scriptsize] {$r=1$} (succ);
    \draw[->] (fail) edge [loop right] node[font=\scriptsize] {$r=0$} (fail);

    \draw[->] (c1_end) -- node[above, font=\scriptsize, pos=0.3] {$a=0$} (succ);
    \draw[->] (c1_end) -- node[below, font=\scriptsize, pos=0.7, xshift=-0.3cm] {$a=1$} (fail);
    \draw[->] (c2_end) -- node[below, font=\scriptsize, pos=0.3] {$a=0$} (fail);
    \draw[->] (c2_end) -- node[above, font=\scriptsize, pos=0.7, xshift=-0.3cm] {$a=1$} (succ);

    \foreach \leaf in {leaf1, leaf2, leaf3, leaf4} {
        \draw[->, black] (\leaf) -- ($( \leaf |- chainbox.north) + (-0.6, 0)$);
        \draw[->, black] (\leaf) -- ($( \leaf |- chainbox.north) + (0.6, 0)$);
    }

\end{tikzpicture}
\caption{Illustration of hard instances for \cref{thm:regret lower bound}. The structure consists of a tree structure (top) and a CodeMDP (bottom).
The leaf states are labeled $l_1, l_2, l_3, l_4$.
Each leaf state and action pair has its own probability distribution over the states in the CodeMDP.
Once the agent takes an action from the leaf state, it must make $\tilde{D}$ actions without observing which state it has landed in.
The agent enters the success state $s_{\mathrm{succ}}$ if it landed at state $(i, b)$ and the $i$-th out of $\tilde{D}$ actions is $b$, and it receives the reward.
The agent enters the fail state $s_{\mathrm{fail}}$ if it landed at state $(i, b)$ and the $i$-th out of $\tilde{D}$ actions is not $b$, and it cannot receive any reward.
}
\label{fig:hard instance}
\end{figure}

\subsection{Proof of Theorem~\ref{thm:regret lower bound}}

In this section, we prove \cref{thm:regret lower bound} using the tools we developed.

We assumed that $s_{\mathrm{succ}}$ transitions to $s_{\mathrm{fail}}$ in \cref{appx:codeMDP} for brevity.
We revert the assumption and assume that $s_{\mathrm{succ}}$ returns to itself.
By doing so, the total reward the agent receives per episode scales with $H$.
However, we must additionally address that the total reward may vary depending on \emph{when} the agent enters the success state.
For instance, if the initial state of the CodeMDP is $(1, b)$, then the agent enters $s_{\mathrm{succ}}$ after one action, whereas starting from the state $(\tilde{D}, b)$ takes $\tilde{D}$ time steps to enter $s_{\mathrm{succ}}$.
Let $\tilde{H} = H - H_{\mathrm{tree}}$.
Then, if the agent enters the state $(i, b)$ in the CodeMDP through one of the leaf states, the agent receives the reward of $\tilde{H} - i$, assuming it successfully enters $s_{\mathrm{succ}}$.
Then, the optimal value function of a given distribution $P$ over $S_{\mathrm{code}}$ changes to
\begin{align*}
    V^*(P) := \sum_{i=1}^{\tilde{D}} (\tilde{H} - i) \max\{ P(i, 0), P(i, 1) \}
    \, .
\end{align*}
Considering $P(\thetab)$ parameterized by $\thetab \in [-1, 1]^{\tilde{D}}$, we have
\begin{align*}
    V^*(P(\thetab)) 
    & = \sum_{i=1}^{\tilde{D}} (\tilde{H} - i) \frac{1 + | \theta_i |}{2 \tilde{D}}
    \\
    & = \frac{1}{2} \sum_{i=1}^{\tilde{D}} (\tilde{H} - i ) + \frac{1}{2\tilde{D}} \sum_{i=1}^{\tilde{D}} (\tilde{H} - i) | \theta_i |
    \, .
\end{align*}
We denote the part that depends on $\thetab$ by $\tilde{V}^*(\thetab)$, defined as
\begin{align*}
    \tilde{V}^*(\thetab) := \frac{1}{2\tilde{D}}\sum_{i=1}^{\tilde{D}} (\tilde{H} - i) | \theta_i |
    \, .
\end{align*}

Although the optimal value function is no longer a function of $\| \thetab\|_1$, we note that this modification does not change the observation type, so the proof of \cref{prop:l1 estimation 2} is still valid, where a slightly modified concentration result for this weighted norm is required.
By translating \cref{prop:l1 estimation 2} to the CodeMDP setting, we obtain the following lemma.

\begin{lemma}
\label{lma:lower bound existence}
    Suppose $\varepsilon \in (0, \frac{1}{2}]$, $\tilde{D}$, and $\tilde{H}$ are given.
    Assume that $\tilde{D}$ is larger than some absolute constant, $\tilde{H} \ge \frac{H}{2}$, and $\tilde{D} \le \tilde{H} -1$.
    Then, there exist two probability distributions $\Pcal_1$ and $\Pcal_2$ over $\thetab$ that satisfy the following:
    \begin{enumerate}
        \item For all $n \le \frac{\tilde{D} \log \tilde{D}}{2 \varepsilon^2}$, we have $\KL{\Pcal_1^n}{\Pcal_2^n} \le \frac{1}{8}$, where $\Pcal_1^n, \Pcal_2^n$ denotes the distribution over $n$ samples of $\Pcal_1$ and $\Pcal_2$, respectively.
        \item There exists a threshold value $\alpha $ such that $\PP_{\thetab \sim \Pcal_1}( \tilde{V}^*(\thetab) \ge \alpha) \le \frac{1}{4}$ and $\PP_{\thetab \sim \Pcal_2}( \tilde{V}^*(\thetab) \le \alpha + \frac{\varepsilon H}{192 \log \tilde{D}} ) \le \frac{1}{4}$.
    \end{enumerate}
\end{lemma}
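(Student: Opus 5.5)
We construct two priors in the style of Theorem 3 of \citet{cai2011testing}, i.e.\ moment matching. Fix a degree $L \asymp \log \tilde{D}$ and a scale $\eta \asymp \varepsilon \sqrt{\log\tilde D}$, with $\eta \le 1$ guaranteed by $\varepsilon \le \frac12$ and suitable constants. On $[-1,1]$ we take two symmetric distributions $\mu_1,\mu_2$ whose first $L$ moments coincide while $\EE_{\mu_2}|\theta| - \EE_{\mu_1}|\theta| \gtrsim 1/L$. These exist by the duality between moment matching and best polynomial approximation: the gap equals twice the best uniform approximation error of $|x|$ by degree-$L$ polynomials on $[-1,1]$, which is $\Theta(1/L)$ (Bernstein). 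We then set $\theta_i = \eta \xi_i$ with $\xi_i$ drawn i.i.d.\ from $\mu_j$, and let $\Pcal_j$ be the law of $\thetab$.

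\textbf{Step 1 (KL bound).} Under $\Pcal_j$, the $n$ observations form a mixture. Poissonizing $n$ (or conditioning on the counts $n_i$ of index $i$) makes the coordinates independent given the counts, and the $X_t$'s for coordinate $i$ are a mixture over $\xi\sim\mu_j$ of $\mathrm{Bin}(n_i,\frac{1+\eta\xi}{2})$; only the sum $\sum_t X_t$ is sufficient. Expand the binomial likelihood ratio around $\xi=0$ in the orthonormal Krawtchouk basis. Coefficients of order $k$ scale like $(\eta^2 n_i)^{k/2}$ times the $k$-th moment of $\xi$, so matching of the first $L$ moments kills all terms up to order $L$. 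Bounding the per-coordinate $\chi^2$ gives
\[
\chi^2 \lesssim \sum_{k>L} (\eta^2 n_i)^k/k!,
\]
which is small once $\eta^2 n_i \lesssim L/e^2$. With $n_i \approx n/\tilde D \le \frac{\log\tilde D}{2\varepsilon^2}$, this holds for the chosen $\eta$. We then sum over the $\tilde D$ coordinates: KL is bounded by $\log(1+\chi^2)$ and tensorizes, and we need the tail $\sum_{k>L}(\cdot)^k/k! \le \tilde D^{-1}/8$. This forces $L \ge C\log\tilde D$. Nonuniform $n_i$ are handled by conditioning on the multinomial counts, since with high probability $\max_i n_i \le 2n/\tilde D + O(\log \tilde D)$; the residual probability is absorbed, or we Poissonize.

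\textbf{Step 2 (separation).} We have $\tilde V^*(\thetab)=\frac{\eta}{2\tilde D}\sum_i (\tilde H - i)|\xi_i|$ with weights in $[\tilde H-\tilde D,\tilde H]\subset[1,\tilde H]$; in fact $\tilde H - i \ge \tilde H/2$ would be convenient, so we may only use indices $i \le \tilde D/2$, or note that $\tilde D \le H/4 \le \tilde H/2$. The means under $\Pcal_2$ and $\Pcal_1$ differ by
\[
\frac{\eta}{2\tilde D}\sum_i(\tilde H-i)\,\Delta \gtrsim \eta\,\tilde H\,\frac{1}{L} \gtrsim \frac{\varepsilon H}{\sqrt{\log\tilde D}\,\log\tilde D}.
\]
This is stronger than the required $\frac{\varepsilon H}{192\log\tilde D}$, provided we take $\eta \asymp \varepsilon\sqrt{\log \tilde D}$ while keeping $\eta$ within the KL budget; we adjust constants as needed. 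The variance is at most $\frac{\eta^2}{4\tilde D^2}\sum_i \tilde H^2 \le \eta^2\tilde H^2/(4\tilde D)$, so by Chebyshev each $\tilde V^*$ concentrates within $O(\eta\tilde H/\sqrt{\tilde D})$ of its mean with probability $\ge 3/4$. This fluctuation is $\ll \eta\tilde H/L$ for $\tilde D$ larger than an absolute constant. Taking $\alpha$ to be the midpoint minus half the required margin then gives item 2.

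\textbf{Main obstacle.} The main obstacle is Step 1: controlling the $\chi^2$ between binomial mixtures with matched moments uniformly over random counts $n_i$, with constants tight enough that $n \le \tilde D\log\tilde D/(2\varepsilon^2)$ yields KL $\le 1/8$. The balance between $\eta$, $L$, and the tail $\sum_{k>L}(e\eta^2 n_i/k)^k$ must be tuned so that the tail is $\le \tilde D^{-1}$ and the separation $\eta/L$ still matches $\varepsilon/\log\tilde D$. We would first try Poissonization together with the Charlier/Krawtchouk orthogonal expansion, as in \citet{cai2011testing}.
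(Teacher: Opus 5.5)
Your architecture matches the paper's: Cai--Low moment-matching priors $\theta_i=\eta\xi_i$, a per-coordinate $\chi^2$ bound through Krawtchouk orthogonality (the paper's generating-function identity for the coefficients $c(a,N-a,l)$ is exactly this), tensorization over the $\tilde D$ coordinates, and a concentration step for item 2. The gap is your scale $\eta\asymp\varepsilon\sqrt{\log\tilde D}$, which cannot work.

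\begin{itemize}
\item By your own criterion you need $\eta^2 n_i\lesssim L/e^2$ with $L\asymp\log\tilde D$. But $n_i\approx n/\tilde D\le\log\tilde D/(2\varepsilon^2)$ gives $\eta^2 n_i\asymp(\log\tilde D)^2$. The tail $\sum_{k>L}x^k/k!$ is then of order $e^{x}=e^{\Theta((\log\tilde D)^2)}$.
\item The denominator factor $(1-\eta^2)^{-N/2}$, which your heuristic drops, averages over $N\sim\mathrm{Bin}(n,1/\tilde D)$ to $\tilde D^{\Theta(\log\tilde D)}$. At the right scale it is only $\le\tilde D$.
\item Since $\varepsilon$ may be as large as $\frac12$, $\eta\asymp\varepsilon\sqrt{\log\tilde D}$ exceeds $1$ for large $\tilde D$, so $\thetab\notin[-1,1]^{\tilde D}$. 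No choice of absolute constant fixes this.
\item Raising $L$ to $(\log\tilde D)^2$ to rescue the KL destroys the separation, since then $\eta/L\asymp\varepsilon(\log\tilde D)^{-3/2}$. That is the expression you actually displayed in Step 2, and it is smaller than the required $\varepsilon H/(192\log\tilde D)$, not larger.
\end{itemize}

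The KL budget at $n=\tilde D\log\tilde D/(2\varepsilon^2)$ forces $\eta=O(\varepsilon)$. At that scale the separation $\eta\tilde H/L\asymp\varepsilon H/\log\tilde D$ is exactly the target order, so there is no slack to spend.

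Consequently the constants must be tracked, which is what the paper does.
\begin{itemize}
\item It takes $\eta=\varepsilon$ and $k=6\log\tilde D\ge4\log\tilde D+2$, so the KL proposition applies. The larger $k$ absorbs the factor $(1+2\varepsilon^2/\tilde D)^n\le\tilde D$ coming from the denominator.
\item The moment-matching gap is $\mu_2-\mu_1\ge1/(2k)=1/(12\log\tilde D)$.
\item The means are $\EE_{\Pcal_j}\tilde V^*=\varepsilon\mu_j\frac{2\tilde H-1-\tilde D}{4}$. This also settles your worry about small weights: $\tilde D+1\le\tilde H$ gives $2\tilde H-1-\tilde D\ge\tilde H$ with no further assumptions.
\item $\alpha$ is placed $\frac{\varepsilon}{48\log\tilde D}\cdot\frac{2\tilde H-1-\tilde D}{4}$ above the $\Pcal_1$-mean, and symmetrically below the $\Pcal_2$-mean. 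Hoeffding gives failure probability $\exp(-\tilde D/(96\log\tilde D)^2)\le\frac14$; your Chebyshev bound would also do.
\item The remaining gap is at least $\frac{\varepsilon}{24\log\tilde D}\cdot\frac{\tilde H}{4}\ge\frac{\varepsilon H}{192\log\tilde D}$, using $\tilde H\ge H/2$.
\end{itemize}

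Finally, the random counts need neither Poissonization nor a bound on $\max_i n_i$. The likelihood ratio factorizes over coordinates and the index law does not depend on $\thetab$. So the KL is exactly $\tilde D$ times the KL of the marginal of $(N(i,0),N(i,1))$, and the average over $N\sim\mathrm{Bin}(n,1/\tilde D)$ is computed in closed form by the Taylor-remainder identity.
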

The proof of \cref{lma:lower bound existence} uses the same technical tools as the proof of \cref{prop:l1 estimation 2}.
The full proof is deferred to \cref{appx:proof of lower bound existence}

As we work with parameterized MDPs with randomized parameters, we need a lemma that bounds the KL-divergence of two mixture distributions over the trajectories induced by random MDPs.
We define the sample space $\Omega$ to be the space of trajectories $\{ (s_1^k, a_1^k, \ldots, s_H^k, a_H^k, s_{H+1}^k )\}_k$.
Assuming a fixed algorithm is given, we denote the probability distribution over $\Omega$ induced by an MDP $\Mcal$ over $K$ episodes by $\PP_{\Mcal}^K$.
We also define $\PP_{\Mcal}^{\tau}$ for a stopping time $\tau$ in the same way.

\begin{lemma}
\label{lma:kl of MDPs}
    Let $\Mcal(\thetab)$ be an MDP parameterized by $\thetab \in \Theta$, where $\thetab$ only affects the transition distribution of a single state-action pair $(s_0, a_0) \in \Scal \times \Acal$.
    Let $\Pcal_1$ and $\Pcal_2$ be two distributions over $\Theta$ with $\Pcal_1 \ll \Pcal_2$.
    Suppose $n$ and $K$ are given positive integers.
    Let $\tau \in [K+1]$ be the index of the episode $k$ at which $N^k(s_0, a_0)$ reaches $n$, where $\tau = K+1$ if there is no such episode.
    For simplicity, assume that $s_0$ can be visited at most once per episode for all $\thetab$.
    Assuming the algorithm is fixed, define $\PP_{1}^{\tau}$ as the mixture distribution of $\PP_{\Mcal(\thetab)}^{\tau}$ with $\thetab \sim \Pcal_1$, and define $\PP_{2}^{\tau}$ in the same way with $\thetab \sim \Pcal_2$.
    Then, we have
    \begin{align*}
        \KL{\PP_{1}^\tau}{\PP_{2}^{\tau}} \le \KL{\Pcal_1^n}{\Pcal_2^n}
        \, ,
    \end{align*}
    where $\Pcal_i^n$ is the distribution over $n$ samples of $P_{s_0, a_0}(\thetab)$ with $\thetab \sim \Pcal_i$ for $i = 1, 2$.
\end{lemma}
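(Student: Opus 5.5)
The plan is a data-processing argument: simulate the whole interaction from a stack of samples from $P_{s_0,a_0}(\thetab)$ plus randomness that does not depend on $\thetab$. Fix the algorithm and all transitions other than those of $(s_0,a_0)$, which by assumption do not depend on $\thetab$. First draw $\thetab\sim\Pcal_i$ and then $n$ i.i.d. next states $X_1,\dots,X_n\sim P_{s_0,a_0}(\thetab)$. Independently of $\thetab$, draw the algorithm's internal randomness and an infinite table of next-state samples for every other state-action pair and time step. Run the algorithm on this table. The $j$-th visit to $(s_0,a_0)$ uses $X_j$, and every other transition is read from the independent table.

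Because $s_0$ is visited at most once per episode, the count $N^k(s_0,a_0)$ increases by at most one per episode. Hence, up to and including episode $\tau$, at most $n$ visits to $(s_0,a_0)$ occur, and the trajectory up to $\tau$ only ever reads $X_1,\dots,X_n$. The stopped trajectory is therefore a measurable map $F(X_{1:n},\xi)$, where $\xi$ is the $\thetab$-independent auxiliary randomness.

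Next I will check that the law of $F(X_{1:n},\xi)$ with $\thetab$ fixed equals $\PP^{\tau}_{\Mcal(\thetab)}$. This is the standard "stack of rewards" construction: conditional on the history, the next state is drawn from the correct kernel. That holds because each fresh $X_j$ is independent of the past given $\thetab$, and it is used exactly when the $j$-th visit occurs. Mixing over $\thetab\sim\Pcal_i$ then gives $\PP_i^\tau = \mathrm{Law}(F(X_{1:n},\xi))$ with $(X_{1:n})\sim\Pcal_i^n$ and $\xi$ independent with the same law under $i=1,2$. The key point is that the mixture over $\thetab$ is absorbed entirely into the law of $X_{1:n}$, which by definition is $\Pcal_i^n$; this is the exchangeable mixture, not a product.

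Finally, I will compare the two mixtures. By the chain rule, $\KL{\Pcal_1^n\otimes\mathrm{Law}(\xi)}{\Pcal_2^n\otimes\mathrm{Law}(\xi)}=\KL{\Pcal_1^n}{\Pcal_2^n}$. By the data-processing inequality under the map $F$, $\KL{\PP_1^\tau}{\PP_2^\tau}\le\KL{\Pcal_1^n}{\Pcal_2^n}$. The assumption $\Pcal_1\ll\Pcal_2$ guarantees $\Pcal_1^n\ll\Pcal_2^n$, so the right-hand side is well defined; if it is infinite the bound is trivial.

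I expect the main obstacle to be making the coupling rigorous with respect to the stopping time. The definition of $\tau$ has to be measurable with respect to the trajectory, which it is, since $N^k$ is computable from the history. I also need to verify that when $N$ reaches $n$ during episode $\tau$, no $(n+1)$-th sample is consumed within that same episode; this is exactly where the at-most-once-per-episode assumption enters.
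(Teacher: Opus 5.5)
Your proof is correct, and it takes a different route from the paper's.

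\textbf{The paper's route.} The paper works with likelihoods directly. It writes $\PP_i^\tau(\omega)$ as a $\thetab$-mixture of products of transition probabilities. It then observes that every factor not involving $(s_0,a_0)$, along with the algorithm's own action probabilities, cancels in the ratio $\PP_1^\tau(\omega)/\PP_2^\tau(\omega)$. So the likelihood ratio of trajectories equals $\Pcal_1^{N}(s^{1:N})/\Pcal_2^{N}(s^{1:N})$, evaluated on the $N=N^{\tau+1}(s_0,a_0)$ next-states drawn from $(s_0,a_0)$. Finally, it conditions on $N$, treats the inner term as a KL divergence between $N$-sample mixtures, and uses $N\le n$ together with data processing (dropping samples).

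\textbf{Your route.} You realize the stopped trajectory as a fixed measurable map $F(X_{1:n},\xi)$ of $n$ exchangeable samples and $\thetab$-independent noise. You then apply data processing once, to the product $\Pcal_i^n\otimes\mathrm{Law}(\xi)$.

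\textbf{What each buys.} The paper's computation gives an explicit identity for the likelihood ratio. This makes transparent that only the $(s_0,a_0)$ samples carry information about $\thetab$, and it needs no coupling construction.

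Your coupling handles adaptivity more cleanly. The algorithm decides whether to keep visiting $(s_0,a_0)$ based on the samples it has already seen. As a result, the conditional law of the samples given $N^{\tau+1}(s_0,a_0)$ is generally not the mixture $\Pcal_i^{N}$, and the conditional expected log-ratio given $N$ need not be a KL divergence. The paper's last conditioning step therefore needs extra justification, for instance a chain rule over the sample sequence with nonnegative per-step terms truncated at $N\le n$. In your construction the stopping rule is simply absorbed into $F$, so no conditioning is needed.

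You also read $\tau$ the same way the paper implicitly does: the $n$-th visit occurs during episode $\tau$, so $N^{\tau+1}(s_0,a_0)\le n$. You use the at-most-once-per-episode assumption for exactly the same purpose as the paper.
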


The proof of \cref{lma:kl of MDPs} is deferred to \cref{appx:proof of kl of MDPs}.

Now, we prove \cref{thm:regret lower bound}.

\begin{proof}[Proof of \cref{thm:regret lower bound}]
    Create a combination of a tree-structured MDP and a CodeMDP as described in \cref{appx:construction of hard instance}.
    Fix $\varepsilon \in (0, \frac{1}{2}]$, whose value is assigned later.
    Let $n = \frac{\tilde{D} \log \tilde{D}}{2 \varepsilon^2}$.

    For the distribution $\Pcal_1$ defined in \cref{lma:lower bound existence}, we define $\Pcal_1'$ to be its conditional distribution, conditioned on the event $V^*(\thetab) < \alpha$, which is well-defined by the same lemma.
    We set the transition distributions of the leaf-action pairs as $P(l, a) = P(\thetab_{l, a})$ with $\thetab_{l, a} \iidsim \Pcal_1'$ for all $(l, a) \in \Sleaf \times \Acal$.
    Let $\PP_0$ be the mixture distribution over $K$ trajectories induced by the sampling of an MDP in the described way and then interacting with the given algorithm for $K$ episodes.
    We denote $\EE_0$ as the corresponding expectation.
    \\
    As we have $\EE_0 [ \sum_{(l, a) \in \Sleaf \times \Acal} N^{K+1}(l, a)] = K$, there exists a leaf-action pair $(l_0, a_0)$ such that $\EE_1 [ N^{K+1}(l_0, a_0) ] \le \frac{K}{LA}$, where $L := | \Sleaf |$.
    By the Markov inequality, we have $\PP_0( N^{K+1}(l_0, a_0)  \ge n ) \le \frac{K}{LAn}$.
    \\
    Let $\PP_1$ and $\PP_2$ be mixture distributions that are similar to $\PP_0$ except for one difference:
    the distribution $\PP_1$ samples $P(l_0, a_0)$ from $\Pcal_1$, and $\PP_2$ samples $P(l_0, a_0)$ from $\Pcal_2$.
    In addition, we let $\tau$ be the index of the episode where $N^{\tau}(l_0, a_0) $ reaches $n$ for the first time, where $\tau = K+1$ if there is no such episode.
    Defining $\PP_1^{\tau}$ and $\PP_2^{\tau}$ as distributions over the first $\tau$ episodes under $\PP_1$ and $\PP_2$ respectively, the KL-divergence between the two distributions is bounded as
    \begin{align*}
        \KL{\PP_1^{\tau}}{\PP_2^{\tau}}
        & \le \Expec_{\substack{\thetab_{l, a} \sim \Pcal_1'\\\forall(l, a) \in \Sleaf \times \Acal \setminus\{(l_0, a_0)\} }} \left[ \KL{ \PP_{1}^{\tau} (\cdot \mid \{\thetab_{l, a} \}_{l, a}) }{\PP_{2}^{\tau} ( \cdot \mid \{\thetab_{l, a} \}_{l, a}) } \right]
        \\
        & \le \Expec_{\substack{\thetab_{l, a} \sim \Pcal_1'\\\forall(l, a) \in \Sleaf \times \Acal \setminus\{(l_0, a_0)\} }} \left[ \KL{ \Pcal_1^n }{\Pcal_2^n } \right]
        \\
        & \le \frac{1}{8}
        \, ,
    \end{align*}
    where the first inequality is due to the convexity of KL-divergence, the second inequality comes from \cref{lma:kl of MDPs}, and the last inequality applies \cref{lma:lower bound existence}.
    Then, by Pinsker's inequality (\cref{lma:Pinsker}), we derive that
    \begin{align*}
        \PP_1^{\tau}(N^{\tau}(l_0, a_0) \ge n) + \PP_2^{\tau}(N^{\tau}(l_0, a_0) < n) \ge 1 - \sqrt{\frac{1}{2} \KL{\PP_1^\tau}{\PP_2^\tau}} \ge \frac{3}{4}
        \, .
    \end{align*}
    We note that $\PP_1^{\tau}(N^{\tau}(l_0, a_0) \ge n) = \PP_1(N^{K+1}(l_0, a_0) \ge n)$ holds by the stopping rule for $\tau$.
    $\PP_2^{\tau}(N^{\tau}(l_0, a_0) < n) = \PP_2(N^{K+1}(l_0, a_0) < n)$ holds by the same reason.
    Using that $\PP(A\cap B) \ge \PP(A) - \PP(B^{\mathsf{c}})$ for any events $A$ and $B$, we obtain that
    \begin{align*}
        & \PP_1 \left(N^{K+1}(l_0, a_0) \ge n, \tilde{V}^*(\thetab_{l_0, a_0}) < \alpha \right)
        \nonumber
        \\
        & \qquad + \PP_2 \left(N^{K+1}(l_0, a_0) < n, \tilde{V}^*(\thetab_{l_0, a_0}) > \alpha + \frac{\varepsilon H}{192 \log \tilde{D}} \right)
        \nonumber
        \\
        & \ge \PP_1 \left(N^{K+1}(l_0, a_0) \ge n\right) - \PP_1 \left(\tilde{V}^*(\thetab_{l_0, a_0}) \ge \alpha \right)
        \nonumber
        \\
        & \qquad + \PP_2 \left(N^{K+1}(l_0, a_0) < n\right) - \PP_2\left( \tilde{V}^*(\thetab_{l_0, a_0}) \le \alpha + \frac{\varepsilon H}{192 \log \tilde{D}} \right)
        \nonumber
        \\
        & \ge \PP_1 \left(N^{K+1}(l_0, a_0) \ge n \right)
        + \PP_2 \left(N^{K+1}(l_0, a_0) < n \right) - \frac{1}{2}
        \nonumber
        \\
        & \ge \frac{1}{4}
        \, ,
    \end{align*}
    where we apply \cref{lma:lower bound existence} for the second inequality.
    Rearranging the inequality, we have
    \begin{align}
        \PP_2 \left(N^{K+1}(l_0, a_0) < n, \tilde{V}^*(\thetab_{l_0, a_0}) > \alpha + \frac{\varepsilon H}{192 \log \tilde{D}} \right) 
        &\ge \frac{1}{4} - \PP_1 \left(N^{K+1}(l_0, a_0) \ge n, \tilde{V}^*(\thetab_{l_0, a_0}) < \alpha \right)
        \, .
        \label{eq:probability lower bound 1}
    \end{align}
    We modify the probability on the right-hand side as follows:
    \begin{align*}
        & \PP_1 \left(N^{K+1}(l_0, a_0) \ge n, \tilde{V}^*(\thetab_{l_0, a_0}) < \alpha \right)
        \\
        & = \PP_1 \left( N^{K+1}(l_0, a_0) \ge n \middle |  \tilde{V}^*(\thetab_{l_0, a_0}) < \alpha \right) 
        \PP_1 \left(\tilde{V}^*(\thetab_{l_0, a_0}) < \alpha  \right)
        \\
        & \le \frac{3}{4} \PP_1 \left( N^{K+1}(l_0, a_0) \ge n \middle |  \tilde{V}^*(\thetab_{l_0, a_0}) < \alpha \right) 
        \\
        & = \frac{3}{4} \PP_0 \left( N^{K+1}(l_0, a_0) \ge n \right) 
        \\
        & \le \frac{3 K}{4 LAn}
        \, ,
    \end{align*}
    where we use \cref{lma:lower bound existence} for the first inequality, and the following equality is due to that $\PP_0$ is the conditional distribution of $\PP_1$ conditioned on the event $\tilde{V}^*(\thetab_{l_0, a_0}) < \alpha$.
    Now, we choose $\varepsilon = \sqrt{\frac{ \tilde{D} SA \log \tilde{D}}{96 K}}$ so that $n = \frac{\tilde{D} \log \tilde{D}}{2 \varepsilon^2} = \frac{48 K}{SA} \ge \frac{6 K}{ LA}$, where we use that $L \ge \frac{S}{8}$ by construction.
    We note that $\varepsilon < \frac{1}{2}$ is guaranteed by $K \ge \frac{1}{24} \tilde{D} SA \log \tilde{D}$.
    With this choice of $\varepsilon$, we have $\frac{3 K}{4LAn} \le \frac{1}{8}$.
    Then, from inequality~\eqref{eq:probability lower bound 1}, we obtain that
    \begin{align*}
        \PP_2 \left(N^{K+1}(l_0, a_0) < n, \tilde{V}^*(\thetab_{l_0, a_0}) > \alpha + \frac{\varepsilon H}{192 \log \tilde{D}} \right)
        & \ge \frac{1}{8}
        \, .
    \end{align*}
    It implies that there exists an MDP instance $\Mcal^*$ such that $\tilde{V}^*(\thetab_{l_0, a_0}) > \alpha + \frac{\varepsilon H}{192 \log \tilde{D}}$ holds for $(l_0, a_0)$, and $\tilde{V}^*(\thetab_{l, a}) < \alpha $ holds for all $(l, a) \in \Sleaf \times \Acal \setminus \{l_0, a_0\}$, while $\PP_{\Mcal^*}( N^{K+1}(l_0, a_0) < n ) \ge \frac{1}{8}$ simultaneously holds.
    For this MDP, taking any leaf-action pair $(l, a)$ other than $(l_0, a_0)$ yields an instantaneous regret of at least $\frac{\varepsilon H}{192 \log \tilde{D}}$.
    Then, the cumulative regret is bounded below by $\frac{\varepsilon H}{192 \log \tilde{D}} \Expec_{\Mcal^*}[ K - N^{K+1}(l_0, a_0)]$.
    We lower bound the expected number of sub-optimal leaf-action pair selections as follows:
    \begin{align*}
        \Expec_{\Mcal^*}[ K - N^{K+1}(l_0, a_0)]
        & \ge \frac{K}{2} \PP_{\Mcal^*}\left( K - N^{K+1}(l_0, a_0) \ge \frac{K}{2}\right)
        \\
        & = \frac{K}{2} \PP_{\Mcal^*}\left(\frac{K}{2} \ge N^{K+1}(l_0, a_0) \right)
        \\
        & \ge \frac{K}{2} \PP_{\Mcal^*}\left( n \ge N^{K+1}(l_0, a_0) \right)
        \\
        & \ge \frac{K}{16}
        \, ,
    \end{align*}
    where the first inequality uses Markov's inequality, and the third line uses that $n = \frac{K}{4SA} < \frac{K}{2}$.
    Therefore, the cumulative regret for $\Mcal^*$ is lower bounded by
    \begin{align*}
        \frac{\varepsilon H}{192 \log \tilde{D}} \Expec_{\Mcal^*}[ K - N^{K+1}(l_0, a_0)]
        & \ge \frac{\varepsilon H K }{3072 \log \tilde{D}}
        \\
        & = \frac{H}{3072} \sqrt{\frac{ \tilde{D} SAK}{96 \log \tilde{D}}}
        \, .
    \end{align*}
    The proof is complete.
\end{proof}

\section{Proof of Proposition~\ref{prop:l1 estimation}}
\label{appx:proof of l1 estimation}

In this section, we focus on the $\ell_1$-estimation problem introduced in \cref{appx:codeMDP}.
\cref{prop:l1 estimation 2}, which is the formal version of \cref{prop:l1 estimation}, states that the lower bound for the sample complexity is $\Omega( \frac{d}{\varepsilon^2 \log d})$.
The main ideas presented in this section to prove \cref{prop:l1 estimation 2} also apply to the proof of \cref{lma:lower bound existence}, which is a core lemma in proving \cref{thm:regret lower bound}.

The proof of \cref{prop:l1 estimation 2} is based on the method of two fuzzy hypotheses~\citep{tsybakov2008introduction}.
The two distributions over $\thetab$ are based on the distributions introduced in \citet{cai2011testing}.

\begin{lemma}[Lemma 1 in \citet{cai2011testing}]
\label{lma:absolute approximating measure}
    For given even integer $k > 0$, there exist two probability measures $\nu_1$ and $\nu_2$ on $[-1, 1]$ that satisfy the following conditions:
    \begin{itemize}
        \item $\nu_1$ and $\nu_2$ are symmetric around 0.
        \item $\int t^l \, d\nu_1 = \int t^l d\nu_2$ for $l = 0, \ldots, k$.
        \item $\int | t | \, d \nu_2 - \int | t | \, d \nu_1 = 2 \delta_k$, where $\delta_k := \inf_{p \in \Pcal_k}\sup_{x \in [-1, 1]} | |x| - p(x) | \ge \frac{1}{4k}$.
    \end{itemize}
\end{lemma}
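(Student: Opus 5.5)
The plan is to obtain the two measures from the duality characterization of best uniform approximation, which we apply in $C[-1,1]$. Let $\Pcal_k$ denote the polynomials of degree at most $k$. By the Hahn--Banach theorem and the Riesz representation theorem, the distance from $f(x)=|x|$ to the finite-dimensional subspace $\Pcal_k$ has the dual form
\begin{align*}
    \delta_k = \sup\Big\{ \int |t| \, d\mu \;:\; \mu \text{ a signed Borel measure on } [-1,1],\ \|\mu\|_{TV}\le 1,\ \int t^l\, d\mu = 0 \text{ for } l = 0,\ldots,k \Big\}.
\end{align*}
The feasible set is weak-$*$ closed in the unit ball of $C[-1,1]^*$, so it is weak-$*$ compact by Banach--Alaoglu. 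The objective $\mu \mapsto \int|t|\,d\mu$ is weak-$*$ continuous. Hence a maximizer $\mu^*$ exists.

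Next we symmetrize and split $\mu^*$. Let $\mu^-$ be the reflection of $\mu^*$ under $t\mapsto -t$, and set $\tilde\mu := \frac12(\mu^* + \mu^-)$. Reflection preserves the annihilation of $\Pcal_k$, since $\Pcal_k$ is closed under $t\mapsto -t$. It does not increase total variation, and it leaves $\int |t|\,d\mu$ unchanged. So $\tilde\mu$ is a symmetric maximizer.

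Since $\delta_k>0$ (because $|x|$ is not a polynomial), we must have $\|\tilde\mu\|_{TV}=1$. Otherwise rescaling $\tilde\mu$ by $1/\|\tilde\mu\|_{TV}$ would strictly increase the objective.

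Take the Jordan decomposition $\tilde\mu = \tilde\mu_+ - \tilde\mu_-$. Both parts are symmetric by uniqueness of the decomposition. Annihilation of constants gives $\tilde\mu_+([-1,1]) = \tilde\mu_-([-1,1])$, and total variation $1$ then forces each mass to equal $1/2$. Define $\nu_2 := 2\tilde\mu_+$ and $\nu_1 := 2\tilde\mu_-$. These are symmetric probability measures on $[-1,1]$ whose moments agree for $l=0,\ldots,k$, and
\begin{align*}
    \int |t|\, d\nu_2 - \int|t|\,d\nu_1 = 2\int |t|\, d\tilde\mu = 2\delta_k .
\end{align*}
This gives the first three conditions. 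The evenness of $k$ is not needed for this part.

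It remains to show $\delta_k \ge \frac{1}{4k}$. For this we use the easy direction of the same duality: any signed measure annihilating $\Pcal_k$ with total variation $1$ gives a lower bound $\delta_k \ge \int |t|\,d\mu$. I would take the Chebyshev--Lobatto nodes $x_j = \cos\frac{j\pi}{k+1}$ for $j=0,\ldots,k+1$. The measure is
\begin{align*}
    \mu = c\sum_{j=0}^{k+1} (-1)^j w_j\, \delta_{x_j},
\end{align*}
with $w_j = 1$ for $j$ interior and $w_0 = w_{k+1} = \frac12$, and with $c$ normalizing the total variation to $1$, so $c = \frac{1}{2(k+1)}$. Discrete orthogonality of $T_{k+1}$, which satisfies $T_{k+1}(x_j) = (-1)^j$, against $\Pcal_k$ on Lobatto nodes shows that $\mu$ annihilates $\Pcal_k$.

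The lower bound then reduces to evaluating the alternating trigonometric sum $\frac{1}{2(k+1)}\big|\sum_j (-1)^j w_j |\cos\frac{j\pi}{k+1}|\big|$. Since $k$ is even, $k+1$ is odd, and the nodes split symmetrically with no node at $0$. The sum can then be computed in closed form by summing $\Re \sum_j (-e^{i\pi/(k+1)})^j$ over half the range, which yields an expression of the form $\frac{1}{2(k+1)}\tan\frac{\pi}{2(k+1)}$-type reciprocals, i.e.\ order $\frac{1}{k}$. Alternatively, one can quote the classical Bernstein asymptotic $\delta_k \sim \beta/k$ with $\beta\approx 0.28$, but a non-asymptotic bound is what we need.

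I expect this explicit evaluation to be the main obstacle, specifically extracting a clean constant that is at least $\frac14$ for all even $k$. If the constant falls short, the fallback is to reweight the end nodes or to compare directly with $T_{k+1}$-alternation, using the de la Vallée-Poussin theorem on the $k+2$ Lobatto points for the error $|x| - p(x)$.
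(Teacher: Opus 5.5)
Your duality argument for the first three properties is correct. Characterizing $\delta_k$ via Hahn--Banach and Riesz representation, symmetrizing, and taking the Jordan decomposition of a norm-one symmetric annihilating measure is the standard proof of the cited lemma of Cai and Low. The paper itself invokes that lemma without reproving it. You do not need Banach--Alaoglu: Hahn--Banach directly gives a norm-one functional vanishing on $\Pcal_k$ whose value at $|x|$ is $\delta_k$.

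The gap is the bound $\delta_k \ge \frac{1}{4k}$, and your test measure gives exactly zero. Since $k$ is even, $n = k+1$ is odd, so $x_{n-j} = -x_j$, $w_{n-j} = w_j$ and $(-1)^{n-j} = -(-1)^j$. Hence $\mu$ is antisymmetric under $t \mapsto -t$ (it is the alternation measure of the odd polynomial $T_{k+1}$), and $\int |t|\,d\mu = 0$. For $k=2$ it is $\mu \propto \tfrac12\delta_1 - \delta_{1/2} + \delta_{-1/2} - \tfrac12\delta_{-1}$. Separately, $\sum_j w_j = k+1$, so the normalization is $c = 1/(k+1)$, not $1/(2(k+1))$. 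Your de la Vall\'ee-Poussin fallback on these $k+2$ points fails for the same reason. The measure $\mu$ spans the annihilator of $\Pcal_k$ on that point set, so $\int |t|\,d\mu = 0$ means $|x|$ is interpolated there exactly by a polynomial of degree at most $k$. The resulting lower bound is therefore $0$.

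Fixing the parity does not rescue the approach. The Lobatto nodes $\cos\frac{j\pi}{k+2}$ of the even polynomial $T_{k+2}$ do give a symmetric annihilating measure, but a direct computation yields
\[
\Bigl| \int |t| \, d\mu \Bigr| = \frac{1}{k+2} \tan \frac{\pi}{2(k+2)} = \Theta(k^{-2}).
\]
This is already $(\sqrt{2}-1)/4 < \tfrac18$ at $k = 2$. Note that the ``$\frac{1}{k}\tan\frac{\pi}{2k}$-type'' expression you anticipate is itself of order $k^{-2}$, not $k^{-1}$.

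The failure is structural, not a matter of constants. Lobatto-type measures spread their mass uniformly in $\theta$ and essentially read off high-order Chebyshev coefficients of $|x|$, which decay like $j^{-2}$. By contrast, since $|x|$ is smooth away from the origin, some $p \in \Pcal_k$ has error $O(1/k)$ everywhere and $o(1/k)$ at distance $\gg 1/k$ from $0$. Because $\int |t|\,d\mu = \int (|t| - p)\,d\mu$, any certificate of size $\asymp 1/k$ must put a constant fraction of its total variation within $O(1/k)$ of the kink. Moreover, $\delta_2 = \tfrac18$ exactly: the error $|x| - x^2 - \tfrac18$ equioscillates at $0, \pm\tfrac12, \pm 1$. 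So the constant $\tfrac14$ has no slack at $k = 2$.

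What is missing is a genuine source for the $1/k$ rate. One option is Bernstein's theorem $k\delta_k \to \beta_* \approx 0.28$, the classical input behind this rate, supplemented by a separate treatment of small even $k$. The other is an explicit test measure on points that cluster at scale $1/k$ near the origin, as the true alternation points of $|x| - p^*$ do.
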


We denote $\mu_1 := \int|t |\, d\nu_1$ and $\mu_2 := \int | t | \, d\nu_2$.
By \cref{lma:absolute approximating measure}, we have $\mu_2 - \mu_1 \ge \frac{1}{2k}$.
Additionally, for a real number $\varepsilon > 0$, we denote the distribution of $\varepsilon Z_i$ with $Z_i \sim \rho$ by $\varepsilon \rho$ for simplicity.

With this distribution, we can prove the following KL-divergence bound on the two fuzzy hypotheses.

\begin{proposition}
\label{prop:KL bound}
    For a given even integer $k > 0$, let $\nu_1$ and $\nu_2$ be defined as in \cref{lma:absolute approximating measure}.
    For a given $\varepsilon \in (0, \frac{1}{2}]$, suppose a prior distribution $\Pcal_1$ on $\{ \theta_i\}_{i=1}^d$ is given as $\theta_i  \stackrel{\iid}{\sim} \varepsilon \nu_1$, and $\Pcal_2$ in the same way with $\nu_2$.
    Let $\Pcal_1^n$ and $\Pcal_2^n$ be the probability measures on $n$ observed samples of $\{(I_t, X_t)\}_{t=1}^n$ with given priors.
    Then,
    \begin{align*}
        \KL{\Pcal_1^n}{\Pcal_2^n} \le d \log \left( 1  + \left( \frac{4 n \varepsilon^2}{d (k+1)} \right)^{k+1} \left( 1 + \frac{2 \varepsilon^2}{d} \right)^n \right)
        \, .
    \end{align*}
    In particular, when $n \le \frac{d \log d}{2 \varepsilon^2}$ and $k \ge 4 \log d +2$, one has $\KL{\Pcal_1^n}{\Pcal_2^n} \le \frac{1}{8}$.
\end{proposition}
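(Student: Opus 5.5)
The plan is to exploit the product structure of the prior and of the observation model. For each coordinate $i\in[d]$ we separate the samples by index. Under either prior the sample is generated as follows: $(I_t)$ i.i.d. uniform on $[d]$, independent of $\thetab$, and, given the indices, the $X_t$ with $I_t=i$ are i.i.d. $\mathrm{Ber}(\frac{1+\theta_i}{2})$, independently across $i$. Since the index sequence has the same law under $\Pcal_1^n$ and $\Pcal_2^n$, the chain rule gives
\[
\KL{\Pcal_1^n}{\Pcal_2^n}=\EE_{I}\bigl[\textstyle\sum_{i=1}^d \KL{Q_{1,m_i}}{Q_{2,m_i}}\bigr],
\]
where $m_i$ is the number of times index $i$ appears and $Q_{j,m}$ is the law of $m$ coin flips with bias $\frac{1+\theta}{2}$ and $\theta\sim\varepsilon\nu_j$. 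The next step bounds KL by $\chi^2$, $\KL{Q_1}{Q_2}\le\log(1+\chi^2(Q_1\|Q_2))$. Concavity of $\log$ (Jensen over the multinomial $(m_i)$) then reduces the problem to bounding $\EE_{m\sim\mathrm{Bin}(n,1/d)}[\chi^2(Q_{1,m}\|Q_{2,m})]$. This produces the $d\log(1+\cdot)$ shape of the claim.

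For the $\chi^2$ term I will use the moment-matching structure. Write the likelihood of a sequence with $x$ ones among $m$ flips as $2^{-m}(1+\theta)^x(1-\theta)^{m-x}$ and expand it in $\theta$. A cleaner route is to use the orthogonal (Krawtchouk-type) expansion with respect to the fair-coin base measure. Each flip has density $1+\theta\xi$ relative to $\mathrm{Ber}(1/2)$, where $\xi=\pm1$, so the $m$-flip density is $\prod_{s}(1+\theta\xi_s)=\sum_{S\subseteq[m]}\theta^{|S|}\xi_S$. The functions $\xi_S$ are orthonormal under the uniform measure. Hence
\[
\int \frac{(dQ_1-dQ_2)^2}{d\mathrm{Unif}}=\sum_{l=0}^m \binom{m}{l}\bigl(\EE_{\nu_1}[(\varepsilon t)^l]-\EE_{\nu_2}[(\varepsilon t)^l]\bigr)^2 .
\]
The moments agree for $l\le k$, so only $l\ge k+1$ survive, each bounded by $\binom{m}{l}(2\varepsilon^{l})^2\le 4\binom{m}{l}\varepsilon^{2l}$. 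To convert this to $\chi^2$ against $Q_2$ rather than uniform, I need a lower bound on the density of $Q_2$. Symmetry of $\nu_2$ and $|\varepsilon t|\le 1/2$ should give $dQ_2/d\mathrm{Unif}\ge$ a quantity such as $\EE(1-\theta^2)^{m/2}$-type factors. If that fails, I will instead bound via $\KL\le\chi^2$ relative to a mixture, or absorb a constant. This is the main obstacle: getting a clean denominator and a constant that yields $(4n\varepsilon^2/(d(k+1)))^{k+1}$.

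Finally, averaging over $m\sim\mathrm{Bin}(n,1/d)$ I will use $\EE\binom{m}{l}=\binom{n}{l}d^{-l}\le (en/(dl))^l$. Summing the geometric-like tail $l\ge k+1$ gives a leading factor $(c\,n\varepsilon^2/(d(k+1)))^{k+1}$. The remaining sum over the tail, together with the factors lost in the denominator bound, is controlled by $\sum_l\binom{n}{l}(2\varepsilon^2/d)^l=(1+2\varepsilon^2/d)^n$, which gives the stated bound.

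For the ``in particular'' claim, substitute $n\le d\log d/(2\varepsilon^2)$. Then $(1+2\varepsilon^2/d)^n\le e^{\log d}=d$, and $4n\varepsilon^2/(d(k+1))\le 2\log d/(k+1)\le 1/2$ when $k\ge4\log d+2$. The product is then at most $d\cdot 2^{-(k+1)}\le d\cdot d^{-4\log 2\cdot}\!\cdots\le 1/(8d)$ for $d$ large. Using $\log(1+x)\le x$ gives $\KL{\Pcal_1^n}{\Pcal_2^n}\le 1/8$.
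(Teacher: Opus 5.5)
Your overall route is the paper's. You reduce to $d$ times a one-coordinate divergence, use $\mathrm{KL}\le\log(1+\chi^2)$, lower-bound the $Q_2$-density using symmetry of $\nu_2$, expand so that only moment differences of order $l\ge k+1$ survive, and average over $m\sim\mathrm{Bin}(n,1/d)$. Your chain rule over $(I_t)$ plus Jensen lands on the same quantity as the paper's exact decomposition into the marginals of $(N(i,0),N(i,1))$. Your Walsh expansion $\prod_s(1+\theta\xi_s)=\sum_S\theta^{|S|}\xi_S$ is a cleaner proof of the paper's generating-function orthogonality lemma.

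However, two steps are not yet arguments. First, the denominator bound you call the main obstacle is one line, and you should state it. By symmetry of $\nu_2$ and AM--GM, the $Q_{2,m}$-density (w.r.t.\ uniform) of a sequence with $x$ ones is $\EE_\theta\bigl[\tfrac12\bigl((1+\theta)^x(1-\theta)^{m-x}+(1-\theta)^x(1+\theta)^{m-x}\bigr)\bigr]\ge\EE_\theta(1-\theta^2)^{m/2}\ge(1-\varepsilon^2)^{m/2}$. Also bound the moment differences by $\varepsilon^{l}$ rather than $2\varepsilon^{l}$: odd moments vanish, and even moments of $\nu_1,\nu_2$ both lie in $[0,1]$. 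Otherwise you pick up a stray factor $4$, which the stated constant only absorbs with a sharper estimate of $\binom{n}{k+1}$.

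Second, the averaging step fails as you describe it. The denominator bound leaves an $m$-dependent weight $a^m$ with $a=(1-\varepsilon^2)^{-1/2}$, so $\EE\binom{m}{l}$ is the wrong quantity. Moreover, bounding the tail $\sum_{l\ge k+1}(en\varepsilon^2/(dl))^l$ by a constant times its first term requires $en\varepsilon^2<d(k+1)$, which the first claim (valid for all $n$) does not assume. The fix has two ingredients:
\begin{itemize}
\item the exact identity $\EE_{m\sim\mathrm{Bin}(n,p)}\bigl[a^m\binom{m}{l}\bigr]=\binom{n}{l}(ap)^l(ap+1-p)^{n-l}$;
\item the inequality $\binom{n}{l}\le\binom{n}{k+1}\binom{n-k-1}{l-k-1}$ for $l\ge k+1$.
\end{itemize}
Together they give $\chi^2\le\binom{n}{k+1}(ap\varepsilon^2)^{k+1}\bigl(ap(1+\varepsilon^2)+1-p\bigr)^{n-k-1}$; the paper reaches the same bound via a Taylor remainder, Lemma~\ref{lma:KL bound 3}. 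Now set $p=1/d$ and use $\binom{n}{k+1}\le(en/(k+1))^{k+1}$. For $\varepsilon\le\frac12$ you also have $e/\sqrt{1-\varepsilon^2}\le 4$ and $(1+\varepsilon^2)/\sqrt{1-\varepsilon^2}\le 1+2\varepsilon^2$, and this is exactly the stated bound. Your ``in particular'' computation is fine, and in fact it holds for every $d$, not just large $d$, since $4\log 2\ge 2$.
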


Proof of \cref{prop:KL bound} is deferred to \cref{sec:proof of KL bound}.

\begin{proof}[Proof of \cref{prop:l1 estimation 2}]
    We assume that $d$ is larger than some absolute constant which we specify later, and $\varepsilon \le \frac{1}{48 \log d}$.
    Let $n = \frac{d}{1152 \varepsilon^2 \log d}$.
    We show that for any estimator $\hat{L}^n$, there exists an instance $\thetab$ that satisfies $\PP_{\thetab}(| \hat{L}^n - \frac{1}{d}\|\thetab\|_1 | \ge \varepsilon ) \ge \frac{1}{8}$.
    \\
    Let $\PP_1 := \Pcal_1^n$ and $\PP_2 = \Pcal_2^n$ be probability distributions defined in \cref{prop:KL bound} with $k$, $\varepsilon'$, and $n$.
    We plug in $k = 6 \log d \ge 4 \log d + 2$, $\varepsilon' = 4 k \varepsilon$, where we assume that $d \ge 3$.
    Note that $n = \frac{d}{1152 \varepsilon^2\log d} \le \frac{d \log d}{2 (\varepsilon')^2}$, so by \cref{prop:KL bound} we have we have $\KL{\PP_1}{\PP_2} \le \frac{1}{8}$.
    \\
    Let $\psi = \ind\{ \hat{L}^n \le \frac{\varepsilon' ( \mu_1 + \mu_2)}{2 } \}$, where $\mu_1$ and $\mu_2$ are defined after \cref{lma:absolute approximating measure}.
    By Pinsker's inequality (\cref{lma:Pinsker}), we have 
    \begin{align}
        \PP_1(\psi = 0) + \PP_2(\psi = 1) \ge 1 - \sqrt{\frac{1}{2}\KL{\PP_1}{\PP_2}} \ge \frac{3}{4}
        \, .
        \label{eq:Pinsker 1}
    \end{align}
    Using Hoeffding's inequality (\cref{lma:Hoeffding}), we have
    \begin{align*}
        \PP_1\left( \frac{1}{d} \| \thetab \|_1 > \varepsilon' \left( \mu_1 + \frac{1}{48 \log d} \right) \right) 
        & = \Pcal_1 \left( \frac{1}{d} \sum_{i=1}^d ( | \theta_i | - \varepsilon' \mu_1) > \frac{\varepsilon'}{48 \log d} \right)
        \\
        & \le \exp \left( - \frac{d}{1152 (\log d)^2} \right)
        \, .
    \end{align*}
    We assume that the value of $d$ is larger than some absolute constant, so that $\PP_1 \left( \frac{1}{d} \| \thetab \|_1 > \varepsilon' \left( \mu_1 + \frac{1}{48 \log d} \right) \right) \le \frac{1}{4}$.
    We note that \cref{lma:absolute approximating measure} implies  that $\mu_2 - \mu_1 \ge \frac{1}{2k} \ge \frac{1}{12 \log d}$, so
    \begin{align*}
        \varepsilon' \left( \mu_1 + \frac{1}{48\log d}  \right)
        & \le \varepsilon' \left( \mu_1 + \frac{\mu_2 - \mu_1 }{2} - \frac{1}{48 \log d} \right)
        \\
        & = \frac{\varepsilon'(\mu_1 + \mu_2)}{2} - \varepsilon
        \, ,
    \end{align*}
    which implies that $\PP_1( \frac{1}{d}\|\thetab\|_1 > \frac{\varepsilon'(\mu_1 + \mu_2)}{2} - \varepsilon) \le \frac{1}{4}$.
    In the same way, we obtain that $\PP_2( \frac{1}{d}\|\thetab\|_1 < \frac{\varepsilon'(\mu_1 + \mu_2)}{2} + \varepsilon) \le \frac{1}{4}$.
    \\
    Using these bounds and inequality~\eqref{eq:Pinsker 1}, we have
    \begin{align}
        & \PP_1 \left(\psi = 0, \frac{1}{d} \| \thetab \|_1 \le \frac{\varepsilon'(\mu_1 + \mu_2)}{2} - \varepsilon \right) + \PP_2 \left( \psi = 1, \frac{1}{d} \| \thetab \|_1 \ge  \frac{\varepsilon'(\mu_1 + \mu_2)}{2} + \varepsilon \right) 
        \nonumber
        \\
        & \ge \PP_1(\psi = 0)  - \PP_1 \left( \frac{1}{d}\|\thetab\|_1 > \frac{\varepsilon'(\mu_1 + \mu_2)}{2} - \varepsilon \right)
        \nonumber
        \\
        & \qquad + \PP_2( \psi = 1) -  \PP_2 \left( \frac{1}{d}\|\thetab\|_1 < \frac{\varepsilon'(\mu_1 + \mu_2)}{2} + \varepsilon \right) 
        \nonumber
        \\
        & \ge \frac{1}{4}
        \label{eq:Pinsker 2}
        \, .
    \end{align}
    Note that the events in the first line are subsets of the event that the estimator $\hat{L}^n$ is wrong by a difference of at least $\varepsilon$.
    Therefore, inequality~\eqref{eq:Pinsker 2} shows that there must exists an instance $\thetab$ where $\PP_{\thetab}(| \hat{L}^n - \frac{1}{d} \|\thetab\|_1 | \ge \varepsilon) \ge \frac{1}{8}$.
\end{proof}

\subsection{Proof of Proposition~\ref{prop:KL bound}}
\label{sec:proof of KL bound}
\begin{proof}[Proof of \cref{prop:KL bound}]
    Let $\Omega$ be the sample space of all possible sequence of $\{ ( I_t, X_t)\}_{t=1}^n$.
    Let $\omega = \{I_t, X_t\}_{t=1}^n$.
    Then, we have that
    \begin{align*}
        \Pcal_1^n(\omega) & = \Expec_{\theta_i \stackrel{\iid}{\sim} \varepsilon \nu_1} \left[ \prod_{t=1}^n \left( \frac{1 + (-1)^{X_t + 1} \theta_{I_t}}{2d } \right) \right]
        \\
        & = \left( \frac{1}{2d} \right)^n \Expec_{\theta_i \stackrel{\iid}{\sim} \varepsilon \nu_1} \left[ \prod_{i=1}^d \left( 1 - \theta_i \right)^{N(i, 0)} \left( 1 + \theta_i \right)^{N(i, 1)} \right]
        \, ,
    \end{align*}
    where we define $N(i, 0) := N(i, 0, \omega) = \sum_{t=1}^n \ind\{ I_t = i, X_t = 0\}$ and $N(i, 1)$ in the same way for $X_t = 1$.
    For $x \in \RR$ and $a, b \in \NN\cup \{0 \}$, define $f(x, a, b): = (1 - x)^a (1 + x)^b$.
    Then, the probability law is expressed as
    \begin{align*}
        \Pcal_1^n(\omega) & = \left(\frac{1}{2d} \right)^n \prod_{i=1}^d \Expec_{\theta_i \stackrel{\iid}{\sim} \varepsilon \nu_1} \left[ f(\theta_i, N(i, 0), N(i, 1) )\right]
        \, ,
    \end{align*}
    where we use the fact that $\theta_i$ are mutually independent of one another.
    The expression holds for $\Pcal_2^n$ with $\nu_2$ instead of $\nu_1$.
    The KL divergence between $\Pcal_1^n$ and $\Pcal_2^n$ becomes
    \begin{align*}
        \KL{\Pcal_1^n}{\Pcal_2^n}
        & = \EE_{\Pcal_1^n} \left[ \log \frac{d\Pcal_1^n}{d\Pcal_2^n} \right]
        \\
        & = \EE_{\Pcal_1^n} \left[ \log \prod_{i=1}^d \frac{\Expec_{\theta_i' \stackrel{\iid}{\sim} \varepsilon \nu_1} [ f(\theta_i', N(i, 0), N(i, 1)) }{\Expec_{\theta_i' \stackrel{\iid}{\sim} \varepsilon \nu_2} [ f(\theta_i', N(i, 0), N(i, 1) )}\right]
        \\
        & = \EE_{\Pcal_1^n} \left[ \sum_{i=1}^d \log \frac{\Expec_{\theta_i' \stackrel{\iid}{\sim} \varepsilon \nu_1} [ f(\theta_i', N(i, 0), N(i, 1) )}{\Expec_{\theta_i' \stackrel{\iid}{\sim} \varepsilon \nu_2} [ f(\theta_i', N(i, 0), N(i, 1)) } \right]
        \\
        & = \EE_{\Pcal_1^n} \left[ \sum_{i=1}^d \log \frac{\Expec_{\theta' \sim \varepsilon \nu_1} [ f(\theta', N(i, 0), N(i, 1) )}{\Expec_{\theta' \sim \varepsilon \nu_2} [ f(\theta', N(i, 0), N(i, 1)) } \right]
        \, .
    \end{align*}
    Note that $(N(i, 0), N(i, 1))$ have the same distribution across indices $i = 1, \ldots, d$.
    We define random variables $N_0$ and $N_1$ that have the same distribution as $N(i, 0)$ and $N(i, 1)$, respectively.
    Under $\Pcal_1^n$, the distributions of $N_0$ and $N_1$ follow $B(n, \frac{1 - \theta}{2d})$ and $B(n, \frac{1 + \theta}{2d})$, respectively, where $\theta \sim \varepsilon \nu_1$.    
    Then, we can express the likelihood ratio for each index in the same form and obtain
    \begin{align*}
        \KL{\Pcal_1^n}{\Pcal_2^n}
        & = d \EE_{\Pcal_1^n} \left[ \log \frac{\Expec_{\theta' \sim \varepsilon \nu_1}[ f(\theta', N_0, N_1) \mid N_0, N_1]}{\Expec_{\theta' \sim \varepsilon \nu_2} [ f(\theta', N_0, N_1) \mid N_0, N_1 ]} \right]
        \, .
    \end{align*}
    This can be understood as that the KL divergence between $\Pcal_1^n$ and $\Pcal_2^n$ is $d$ times the KL divergence between the probability measures restricted on one index.
    Using that $\mathrm{KL}(P || Q) \le \log (1 + \chi^2(P; Q))$, where $\chi^2(P; Q)$ is the chi-squared divergence, we have
    \begin{align*}
         & \Expec_{(N_0, N_1) \sim \Pcal_1^n} \left[ \log \frac{\Expec_{\theta' \sim \varepsilon \nu_1}[ f(\theta', N_0, N_1)]}{\Expec_{\theta' \sim \varepsilon \nu_2} [ f(\theta', N_0, N_1) ]} \right]
         \\
         & \le \log \left(1 + \Expec_{(N_0, N_1) \sim \Pcal_2^n} \left[  \left( \frac{\Expec_{\theta' \sim \varepsilon \nu_1}[ f(\theta', N_0, N_1) ] -\Expec_{\theta' \sim \varepsilon \nu_2} [ f(\theta', N_0, N_1) ]}{\Expec_{\theta' \sim \varepsilon \nu_2} [ f(\theta', N_0, N_1) ]}\right)^2 \right] \right)
         \, ,
    \end{align*}
    where we omit conditioning on $N_0, N_1$ for the inner expectations for simplicity.
    For the ease of presentation, let $\Pnull$ be the probability distribution over $N_0$ and $N_1$ when $\theta = 0$ deterministically.
    Then, we have $\frac{d\Pcal_2^n}{d\Pnull} = \frac{\Expec_{\theta' \sim \varepsilon \nu_2} [ f(\theta', N_0, N_1)]}{f(0, N_0, N_1)} =  \Expec_{\theta' \sim \varepsilon \nu_2} [ f(\theta', N_0, N_1)]$.
    Then, the chi-squared divergence can be expressed as
    \begin{align*}
        \chi^2 := & \Expec_{(N_0, N_1)\sim \Pcal_2^n} \left[  \left( \frac{\Expec_{\theta' \sim \varepsilon \nu_1}[ f(\theta', N_0, N_1)] -\Expec_{\theta' \sim \varepsilon \nu_2} [ f(\theta', N_0, N_1) ]}{\Expec_{\theta' \sim \varepsilon \nu_2} [ f(\theta', N_0, N_1) ]}\right)^2 \right]
        \\
        & = \Expec_{(N_0, N_1)\sim\Pnull} \left[\frac{d\Pcal_2^n}{d\Pnull}  \left( \frac{\Expec_{\theta' \sim \varepsilon \nu_1}[ f(\theta', N_0, N_1)] -\Expec_{\theta' \sim \varepsilon \nu_2} [ f(\theta', N_0, N_1) ]}{\Expec_{\theta' \sim \varepsilon \nu_2} [ f(\theta', N_0, N_1) ]}\right)^2 \right]
        \\
        & = \Expec_{(N_0, N_1)\sim\Pnull} \left[ \frac{\left( \Expec_{\theta' \sim \varepsilon \nu_1}[ f(\theta', N_0, N_1)] -\Expec_{\theta' \sim \varepsilon \nu_2} [ f(\theta', N_0, N_1) ] \right)^2}{\Expec_{\theta' \sim \varepsilon \nu_2} [ f(\theta', N_0, N_1) ]} \right]
    \end{align*}
    We first lower bound the denominator.
    Since $\nu_2$ is symmetric, we have 
    \begin{align*}
        \Expec_{\theta' \sim \varepsilon \nu_2}[ f(\theta', N_0, N_1)] = \Expec_{\theta' \sim \varepsilon \nu_2}[ f( - \theta', N_0, N_1)] = \frac{1}{2}  \Expec_{\theta' \sim \varepsilon \nu_2}[f(\theta', N_0, N_1)+f(-\theta', N_0, N_1)]
        \, .
    \end{align*}
    Using the AM-GM inequality, we obtain that
    \begin{align*}
        \frac{1}{2} \left(f(\theta', N_0, N_1) + f( - \theta', N_0, N_1)\right) 
        & = \frac{1}{2} \left( (1 - \theta')^{N_0}(1 + \theta')^{N_1} + (1 + \theta')^{N_0}(1 - \theta')^{N_1} \right)
        \\
        & \ge \left( (1 + \theta')^{N_0 + N_1} (1 -\theta')^{N_0 + N_1} \right)^{\frac{1}{2}} 
        \\
        & = (1 - \theta'^2) ^{\frac{N_0 + N_1}{2}}
        \\
        & \ge (1 - \varepsilon^2)^{\frac{N_0+N_1}{2}}
        \, ,
    \end{align*}
    where the last inequality uses that $\theta' \in [ - \varepsilon, \varepsilon]$.
    Therefore, denoting $N := N_0 + N_1$, the chi-squared divergence is upper bounded by
    \begin{align*}
        \chi^2 \le \Expec_{(N_0, N_1) \sim \Pnull} \left[ (1 - \varepsilon^2)^{-\frac{N}{2}} \left( \Expec_{\theta' \sim \varepsilon \nu_1}[ f(\theta', N_0, N_1)] -\Expec_{\theta' \sim \varepsilon \nu_2} [ f(\theta', N_0, N_1) ] \right)^2  \right]
        \, .
    \end{align*}
    Conditioned on $N$, the conditional distribution of $N_0$ under $\Pnull$ follows $B(N, \frac{1}{2})$.
    Hence, using the law of total expectation, we can rewrite the bound as
    \begin{align}
        \chi^2 \le \Expec_{N \sim \Pnull} \left[ (1 - \varepsilon^2)^{-\frac{N}{2}} \Expec_{\substack{N_0 \sim B(N, \frac{1}{2})\\N_1 = N - N_0}} \left[\left( \Expec_{\theta' \sim \varepsilon \nu_1}[ f(\theta', N_0, N_1)] -\Expec_{\theta' \sim \varepsilon \nu_2} [ f(\theta', N_0, N_1) ] \right)^2 \middle\vert N \right]  \right]
        \label{eq:chi bound 1}
        \, .
    \end{align}
    We transform the inner expectation using the following lemma:
    \begin{lemma}
    \label{lma:KL bound 2}
        For $N \in \NN$, let $N_0 \sim B(N, \frac{1}{2})$ and $N_1 = N - N_0$.
        Recall that $f(x; a, b) := (1 - x)^a (1 + x)^b$.
        For any two distributions $\rho_1$ and $\rho_2$ independent of $N_0$, we have
        \begin{align*}
            \Expec_{N_0, N_1} \left[ \left( \Expec_{Y \sim \rho_1}[ f(Y, N_0, N_1)] - \Expec_{Y \sim \rho_2}[ f(Y, N_0, N_1)]  \right)^2 \right]
            = \sum_{l=0}^N \binom{N}{l} \left( \Expec_{Y \sim \rho_1} [ Y^l] - \Expec_{Y \sim \rho_2} [ Y^l] \right)^2
            \, .
        \end{align*}
    \end{lemma}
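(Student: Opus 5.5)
The plan is to expand $f(Y,N_0,N_1)$ in a basis that is orthonormal under the binomial law $B(N,\tfrac12)$. Because the expectation over $\rho_i$ is linear in $f$, the difference of expectations is then a linear combination whose coefficients are moment differences, and the squared norm is read off by Parseval.

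\textbf{Step 1 (product form).} Fix $N$ and write $N_0=\sum_{j=1}^N \xi_j$, where $\xi_j\in\{0,1\}$ are i.i.d.\ fair bits; this has the correct law. Set $\sigma_j := 1-2\xi_j \in\{\pm1\}$. Then
\[
f(Y,N_0,N_1)=\prod_{j=1}^N (1-Y)^{\xi_j}(1+Y)^{1-\xi_j}=\prod_{j=1}^N (1+\sigma_j Y),
\]
since $(1-Y)^{\xi}(1+Y)^{1-\xi}=1+(1-2\xi)Y$ for $\xi\in\{0,1\}$.

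\textbf{Step 2 (expansion).} Expanding the product gives
\[
f=\sum_{T\subseteq[N]} Y^{|T|}\chi_T(\sigma), \qquad \chi_T:=\prod_{j\in T}\sigma_j .
\]
Taking expectations over $Y$ (legitimate because $\rho_i$ is independent of $N_0$) yields
\[
\Expec_{\rho_1}f-\Expec_{\rho_2}f=\sum_{T} m_{|T|}\,\chi_T(\sigma), \qquad m_l:=\Expec_{\rho_1}[Y^l]-\Expec_{\rho_2}[Y^l].
\]

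\textbf{Step 3 (Parseval).} The characters $\chi_T$ are orthonormal under the uniform measure on $\{\pm1\}^N$, that is, $\Expec[\chi_T\chi_{T'}]=\ind\{T=T'\}$. Hence the second moment equals $\sum_T m_{|T|}^2$. There are $\binom{N}{l}$ subsets of size $l$, so this sum is $\sum_{l=0}^N\binom Nl m_l^2$, which is the claimed identity.

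\textbf{Anticipated difficulty.} The subtle point is that the left-hand side depends on $\sigma$ only through $N_0$, while we computed the second moment over the finer variable $\sigma$. This is harmless: the function is $\sigma$-measurable through $N_0$, and the law of $N_0$ under uniform $\sigma$ is exactly $B(N,\tfrac12)$, so the two expectations agree. Two further checks are needed. First, the moments $m_l$ must be finite, which holds here since the distributions are supported on $[-\varepsilon,\varepsilon]$. Second, Fubini must apply to exchange the expectation over $Y$ with the finite sum over $T$, which is immediate because the sum is finite.
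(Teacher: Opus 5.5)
Your proof is correct. It shares the paper's overall structure (expand in powers of $Y$, so the difference is linear in the moment gaps $m_l$, then kill the cross terms by orthogonality), but it obtains the key orthogonality by a different mechanism.

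\textbf{Paper's route.} The paper stays at the level of $N_0$. It writes $f(x,a,b)=\sum_l c(a,b,l)x^l$ and expands the square as $\sum_{l,l'}c(N_0,N_1,l)\,c(N_0,N_1,l')\,m_l m_{l'}$. It then proves a separate lemma: $\sum_{a}\binom{N}{a}c(a,N-a,l)\,c(a,N-a,l')$ equals $2^N\binom{N}{l}$ if $l=l'$ and $0$ otherwise. The proof of that lemma compares coefficients of $x^l y^{l'}$ in $\bigl((1-x)(1-y)+(1+x)(1+y)\bigr)^N=2^N(1+xy)^N$.

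\textbf{Your route.} You refine $N_0$ into $N$ independent fair signs, so $f$ factorizes as $\prod_j(1+\sigma_j Y)$ and expands directly in Walsh characters. Orthogonality then reduces to independence of the $\sigma_j$, and the weight $\binom{N}{l}$ appears simply as the number of subsets of size $l$.

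\textbf{How they connect.} Matching coefficients of $Y^l$ gives $c(N_0,N_1,l)=\sum_{|T|=l}\chi_T(\sigma)$; these coefficients are binary Krawtchouk polynomials in $N_0$. Taking the expectation of $c(N_0,N_1,l)\,c(N_0,N_1,l')$ through this character expansion reproduces the paper's orthogonality lemma in one line.

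\textbf{Trade-offs.} Your argument replaces the generating-function computation with a structural explanation of why the cross terms vanish. The paper's argument avoids an auxiliary probability space and states the orthogonality relation explicitly for the coefficients $c(a,N-a,l)$ themselves.

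You also correctly flagged and handled the two points that need checking. First, the law of $N_0$ under uniform $\sigma$ is $B(N,\tfrac12)$, so the expectation over the finer variable $\sigma$ agrees with the expectation over $N_0$. Second, moments of $\rho_1,\rho_2$ must be finite up to order $N$; both proofs tacitly need this, and it holds in the application.
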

    The proof of \cref{lma:KL bound 2} is provided in \cref{appx:proof of KL bound 2}
    By applying \cref{lma:KL bound 2}, we have
    \begin{align*}
        & \Expec_{\substack{N_0 \sim B(N, \frac{1}{2})\\N_1 = N - N_0}} \left[\left( \Expec_{\theta' \sim \varepsilon \nu_1}[ f(\theta', N_0, N_1)] -\Expec_{\theta' \sim \varepsilon \nu_2} [ f(\theta', N_0, N_1) ] \right)^2 \middle\vert N \right] 
        \\
        & = \sum_{l=0}^N \binom{N}{l} \left( \Expec_{\theta' \sim \varepsilon \nu_1}[ (\theta')^l] - \Expec_{\theta' \sim \varepsilon \nu_2}[ (\theta')^l] \right)^2
        \\
        & = \sum_{l=k+1}^N \binom{N}{l} \left( \Expec_{\theta' \sim \varepsilon \nu_1}[ (\theta')^l] - \Expec_{\theta' \sim \varepsilon \nu_2}[ (\theta')^l] \right)^2
        \\
        & \le \sum_{l=k+1}^N \binom{N}{l} \varepsilon^{2l}
        \, ,
    \end{align*}
    where we use the property $\Expec_{\theta' \sim \varepsilon \nu_1}[ (\theta')^l] = \Expec_{\theta' \sim \varepsilon \nu_2}[ (\theta')^l]$ for $l = 0, \ldots, k$ by \cref{lma:absolute approximating measure} for the second equality, and the last inequality uses that $|\theta'| \le \varepsilon$.
    Plugging this result into Eq.~\eqref{eq:chi bound 1}, we obtain that
    \begin{align*}
        \chi^2 \le \Expec_{N\sim \Pnull} \left[ (1 - \varepsilon^2)^{-\frac{N}{2}} \sum_{l=k+1}^N \binom{N}{l} \varepsilon^{2l} \right]
        \, .
    \end{align*}
    Then, we apply Taylor's theorem, whose result is encapsulated by the following lemma:
    \begin{lemma}
    For $n \in \NN$, $p \in [0, 1]$, and $a, x \in \RR$, there exists $\tau \in [0, 1]$ such that
    \label{lma:KL bound 3}
        \begin{align*}
            \Expec_{N \sim B(n, p)} \left[ a^{N} \sum_{l=k+1}^N \binom{N}{l} x^l \right]
            & = \binom{n}{k+1} (a px)^{k+1}(ap(1 + \tau x) + 1 - p)^{n - k - 1}
            \, .
        \end{align*}
    \end{lemma}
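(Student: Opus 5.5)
We prove \cref{lma:KL bound 3} by writing the inner sum as a full binomial expansion minus its first $k+1$ terms, so that the expectation against $B(n,p)$ becomes a Taylor remainder of a smooth function of $x$.

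\textbf{Step 1 (closed form of the inner expectation).} By the binomial theorem, $\sum_{l=0}^N \binom{N}{l} x^l = (1+x)^N$. Hence
\begin{align*}
\sum_{l=k+1}^N \binom{N}{l} x^l = (1+x)^N - \sum_{l=0}^{k}\binom{N}{l}x^l ,
\end{align*}
which is the Taylor remainder of order $k$ of $x \mapsto (1+x)^N$ at $0$. Define
\begin{align*}
g(x) := \Expec_{N\sim B(n,p)}\left[a^N (1+x)^N\right] = \bigl(ap(1+x) + 1-p\bigr)^n ,
\end{align*}
using the binomial moment generating function $\Expec[t^N] = (pt+1-p)^n$. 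By linearity of expectation,
\begin{align*}
\Expec\left[a^N\sum_{l=0}^{k}\binom{N}{l}x^l\right] = \sum_{l=0}^{k} \frac{x^l}{l!}\,\Expec\left[a^N N(N-1)\cdots(N-l+1)\right] = \sum_{l=0}^k \frac{g^{(l)}(0)}{l!}x^l .
\end{align*}
The last equality holds because differentiating $\Expec[a^N(1+x)^N]$ $l$ times at $x=0$ produces exactly the falling factorial moments. Consequently the target expectation equals $g(x) - \sum_{l\le k} g^{(l)}(0)x^l/l!$, the Taylor remainder of $g$ of order $k$.

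\textbf{Step 2 (Lagrange remainder).} Since $g$ is a polynomial, Taylor's theorem with Lagrange remainder gives some $\xi$ between $0$ and $x$, written $\xi = \tau x$ with $\tau\in[0,1]$, such that the remainder equals $g^{(k+1)}(\tau x)\,x^{k+1}/(k+1)!$. Direct differentiation gives
\begin{align*}
g^{(k+1)}(y) = \frac{n!}{(n-k-1)!}(ap)^{k+1}\bigl(ap(1+y)+1-p\bigr)^{n-k-1}.
\end{align*}
Dividing by $(k+1)!$ yields $\binom{n}{k+1}(apx)^{k+1}(ap(1+\tau x)+1-p)^{n-k-1}$, which is the claimed identity. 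If $n < k+1$, both sides vanish, since $N \le n$ makes the inner sum empty.

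\textbf{Main obstacle.} The only delicate point is the interchange in Step 1, namely that the falling-factorial moments match the derivatives of $g$. This is immediate because every sum involved is finite, so no analytic issue arises and the argument is a short computation.
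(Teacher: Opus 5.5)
Your proposal is correct and follows essentially the same route as the paper. Both arguments recognize the expectation as the order-$k$ Taylor remainder of $g(x) = (ap(1+x)+1-p)^n$ at $0$ and finish with the Lagrange form of the remainder; the only cosmetic difference is that the paper matches coefficients by expanding the binomial pmf and exchanging the order of summation, whereas you match them through falling-factorial moments and the derivatives of $g$ at $0$.
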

    The proof of \cref{lma:KL bound 3} is provided in \cref{appx:proof of KL bound 3}.
    Recall that $N \sim B(n, \frac{1}{d})$ under $\Pnull$.
    Plugging in $x = \varepsilon^2$, $a = \frac{1}{\sqrt{1 - \varepsilon^2}}$, and $p = \frac{1}{d}$, we obtain that
    \begin{align*}
        \chi^2 & \le \binom{n}{k+1} \left( \frac{\varepsilon^2}{d \sqrt{1 - \varepsilon^2}} \right)^{k+1} \left( \frac{1}{d \sqrt{1 - \varepsilon^2}}\left(1 + \tau \varepsilon^2 \right) + 1 - \frac{1}{d} \right)^{n - k - 1}
        \\
        & \le \left( \frac{e n \varepsilon^2}{d (k + 1) \sqrt{1 - \varepsilon^2}} \right)^{k+1} \left( \frac{1}{d \sqrt{1 - \varepsilon^2}}\left(1 + \tau \varepsilon^2 \right) + 1 - \frac{1}{d} \right)^{n - k - 1}
        \\
        & \le \left( \frac{e n \varepsilon^2}{d (k + 1) \sqrt{1 - \varepsilon^2}} \right)^{k+1} \left( 1 + \frac{1}{d} \left( \frac{1 + \varepsilon^2}{\sqrt{1 - \varepsilon^2}} - 1 \right) \right)^{n - k - 1}
        \, ,
    \end{align*}
    where we use $\binom{n}{k+1} \le (\frac{en}{k+1})^{k+1}$ for the second inequality.
    From $0 \le \varepsilon \le \frac{1}{2}$, we bound $\frac{e}{\sqrt{1 - \varepsilon^2}}$ in the first term by 4, and $\frac{1 + \varepsilon^2}{\sqrt{1 - \varepsilon^2}} = \sqrt{ \frac{1 + 2 \varepsilon^2 + \varepsilon^4}{1 - \varepsilon^2}} = \sqrt{ 1 + \frac{3 \varepsilon^2 + \varepsilon^4}{1 - \varepsilon^2}} \le \sqrt{1 + 4 \varepsilon^2 + \frac{4}{3} \varepsilon^4} \le 1 + 2 \varepsilon^2$.
    Then, we have 
    \begin{align*}
        \chi^2 \le  \left( \frac{4 n\varepsilon^2 }{d(k+1)} \right)^{k+1} \left( 1 + \frac{2\varepsilon^2}{d} \right)^n
        \, ,
    \end{align*}
    and the KL divergence is bounded by
    \begin{align*}
        \KL{\Pcal_1^n}{\Pcal_2^n} & \le d \log (1 + \chi^2)
        \\
        & \le d \log \left( 1 + \left( \frac{4 n\varepsilon^2 }{d(k+1)} \right)^{k+1} \left( 1 + \frac{2\varepsilon^2}{d} \right)^n \right)
        \, .
    \end{align*}
    The first part of the lemma is proved.
    
    Suppose $n \le \frac{d \log d}{2 \varepsilon^2}$.
    Then, we have
    \begin{align*}
        \chi^2 & \le \left(\frac{2 \log d}{k+1}\right)^{k+1} \left(1 + \frac{2\varepsilon^2}{d} \right)^{\frac{d \log d}{2\varepsilon^2}}
        \\
        & \le \left(\frac{2 \log d}{k+1}\right)^{k+1} e^{\log d}
        \\
        & = \left(\frac{2 \log d}{k+1}\right)^{k+1} d
        \, .
    \end{align*}
    By setting $k \ge 4 \log d + 2$, we have
    \begin{align*}
        \left( \frac{2 \log d}{k+1} \right)^{k+1}
        & \le \left( \frac{1}{2} \right)^{4 \log d + 3}
        \le \frac{1}{8} e^{(\log \frac{1}{2}) \cdot (4 \log d)}
        \le \frac{1}{8} e^{ - 2 \log d}
        \le \frac{1}{8d^2}
        \, .
    \end{align*}
    Therefore, we have $\chi^2  \le \frac{1}{8d}$, and $\KL{\Pcal_1^n}{\Pcal_2^n} \le \log (1 + \chi^2)^d \le \log ( 1 + \frac{1}{8d})^d \le \log e^{1/8} = \frac{1}{8}$. 
\end{proof}

\section{Proofs of Technical Lemmas in Appendices~\ref{appx:lower bound} and~\ref{appx:proof of l1 estimation}}
\subsection{Proof of Lemma~\ref{lma:lower bound existence}}
\label{appx:proof of lower bound existence}

\begin{proof}[Proof of \cref{lma:lower bound existence}]
    We set $\Pcal_1$ and $\Pcal_2$ to be distributions defined in \cref{prop:KL bound} with $d = \tilde{D}$ and $k = 6 \log \tilde{D}$.
    Then, \cref{prop:KL bound} immediately yields that $\KL{\Pcal_1^n}{\Pcal_2^n} \le \frac{1}{8}$ for $n \le \frac{\tilde{D}\log \tilde{D}}{2 \varepsilon^2}$.

    Now, we prove the second part of the lemma.
    We define $\mu_1$ and $\mu_2$ as defined after \cref{lma:absolute approximating measure}.
    Specifically, they satisfy $\varepsilon\mu_1 = \Expec_{\thetab \sim \Pcal_1} [ |\theta_i |]$ and $\varepsilon \mu_2 = \Expec_{\thetab \sim \Pcal_2} [ | \theta_i |] $ for all $i \in [\tilde{D}]$.
    In addition, \cref{lma:absolute approximating measure} states that $\mu_2 - \mu_1 \ge \frac{1}{2k} = \frac{1}{12 \log \tilde{D}}$.
    We set $\alpha := \varepsilon(\mu_1 +  \frac{1}{48 \log \tilde{D}}) \frac{2 \tilde{H} - 1 - \tilde{D}}{4}$.
    Applying Hoeffding's inequality (\cref{lma:Hoeffding}), we have
    \begin{align*}
        & \PP_{\thetab \sim \Pcal_1} \left( \tilde{V}^*(\thetab) \ge \alpha \right)
        \\
        & = 
        \PP_{\thetab \sim \Pcal_1} \left( \frac{1}{2\tilde{D}}\sum_{i=1}^{\tilde{D}} (\tilde{H} - i) | \theta_i | \ge  \varepsilon \left(\mu_1 +  \frac{1}{48 \log \tilde{D}} \right) \frac{2 \tilde{H} - 1 - \tilde{D}}{4} \right)
        \\
        & = 
        \PP_{\thetab \sim \Pcal_1} \left( \frac{1}{2\tilde{D}}\sum_{i=1}^{\tilde{D}} (\tilde{H} - i) (| \theta_i | - \varepsilon \mu_1) \ge  \frac{\varepsilon}{48 \log \tilde{D}} \cdot \frac{2 \tilde{H} - 1 - \tilde{D}}{4} \right)
        \\
        & \le \exp \left( - \left(  \frac{\varepsilon}{48 \log \tilde{D}} \cdot \frac{2 \tilde{H} - 1 - \tilde{D}}{4}\right)^2 \cdot \left( \sum_{i=1}^{\tilde{D}} \left( \frac{\varepsilon(\tilde{H} - i)}{2 \tilde{D}} \right)^2 \right)^{-1}\right)
        \\
        & \le \exp \left( - \left(  \frac{1}{48 \log \tilde{D}} \cdot \frac{\tilde{H}}{4}\right)^2 \cdot \left( \frac{\tilde{H}^2}{4 \tilde{D}} \right)^{-1}\right)
        \\
        & = \exp \left(  - \frac{\tilde{D}}{(96 \log \tilde{D})^2}\right)
        \, ,
    \end{align*}
    where we use that $\tilde{H} \ge \tilde{D} + 1$ in the fifth line.
    We note that for large enough $\tilde{D}$, we have $\exp( - \tilde{D} / (96 \log \tilde{D})^2) \le \frac{1}{4}$.
    Then we obtain that $\PP_{\thetab \sim \Pcal_1} ( \tilde{V}^*(\thetab) \ge \alpha) \le \frac{1}{4}$.
    \\
    It remains to show that $\PP_{\thetab \sim \Pcal_2}( \tilde{V}^*(\thetab) \le \alpha + \frac{\varepsilon H}{128 \log \tilde{D}} ) \le \frac{1}{4}$.
    We define $\alpha_2 := \varepsilon(\mu_2 - \frac{1}{48 \log \tilde{D}}) \frac{2 \tilde{H} - 1 - \tilde{D}}{4}$, and in the exact same way, we can prove $\PP_{\thetab\sim \Pcal_2}( \tilde{V}^*(\thetab) \le \alpha_2) \le \frac{1}{4}$.
    We have
    \begin{align*}
        \alpha_2 - \alpha
        & = \varepsilon \left( \mu_2 - \mu_1 - \frac{1}{24 \log \tilde{D}} \right) \frac{2 \tilde{H} - 1 - \tilde{D}}{4}
        \\
        & \ge \frac{\varepsilon}{24 \log \tilde{D}} \cdot  \frac{2 \tilde{H} - 1 - \tilde{D}}{4}
        \\
        & \ge \frac{\varepsilon}{24 \log \tilde{D}} \cdot  \frac{\tilde{H} }{4}
        \\
        & \ge \frac{\varepsilon H}{192 \log \tilde{D}}
        \, ,
    \end{align*}
    where we use $\mu_2 - \mu_1 \ge \frac{1}{12 \log \tilde{D}}$ for the first inequality, the $\tilde{D} + 1 \le \tilde{H}$ for the second inequality, and $\tilde{H} \ge \frac{H}{2}$ for the last inequality.
    Therefore, we deduce that $\PP_{\thetab \sim \Pcal_2}( \tilde{V}^*(\thetab) \le \alpha + \frac{\varepsilon H}{192 \log \tilde{D}} ) \le \PP_{\thetab\sim \Pcal_2}( \tilde{V}^*(\thetab) \le \alpha_2) \le \frac{1}{4} $.
\end{proof}

\subsection{Proof of Lemma~\ref{lma:kl of MDPs}}
\label{appx:proof of kl of MDPs}

\begin{proof}[Proof of \cref{lma:kl of MDPs}]
    We denote $\PP_1 := \PP_1^\tau$ and $\PP_2 := \PP_2^\tau$ for simplicity.
    
    For $\omega = \{ (s_1^k, a_1^k, \ldots, s_{H+1}^k ) \}_{k=1}^{\tau} \in \Omega$, we have
    \begin{align*}
        \PP_i (\omega) =\Expec_{\substack{\Mcal(\thetab)\\ \thetab \sim \Pcal_i}} \left[ \prod_{k=1}^\tau \prod_{h=1}^H P_{s_h^k, a_h^k}(s_{h+1}^k) \right]
    \end{align*}
    for $i=1$ and $i =2$.
    Denote the state sampled from the $j$-th selection of $(s_0, a_0)$ in the trajectory $\omega$ by $s^j(\omega)$. 
    Then, we have that
    \begin{align*}
        \frac{\PP_1(\omega)}{\PP_2(\omega)} 
        & = \frac{\Expec_{\Pcal_1}  \left[\prod_{\substack{(k, h) \in [\tau] \times [H] \\ (s_h^k, a_h^k) = (s_0, a_0)}} P_{s_0, a_0}(s_{h+1}^k) \right] }{\Expec_{\Pcal_2}  \left[ \prod_{\substack{(k, h) \in [\tau] \times [H] \\ (s_h^k, a_h^k) = (s_0, a_0)}} P_{s_0, a_0}(s_{h+1}^k) \right]}
        \\
        & = \frac{\Expec_{\Pcal_1}  \left[\prod_{j=1}^{N^{\tau+1}(s_0, a_0)} P_{s_0, a_0}(s^j(\omega)) \right] }{\Expec_{\Pcal_2}  \left[ \prod_{j=1}^{N^{\tau+1}(s_0, a_0)} P_{s_0, a_0}(s^j(\omega)) \right]}
        \\
        & = \frac{\Pcal_1^{N^{\tau+1}}(\{s^j(\omega)\}_{j=1}^{N^{\tau+1}(s_0, a_0)})}{\Pcal_2^{N^{\tau+1}}(\{s^j(\omega)\}_{j=1}^{N^{\tau+1}(s_0, a_0)})}
        \, .
    \end{align*}
    The likelihood ratio only depends on the sequence of states that the state-action pair $(s_0, a_0)$ transitioned into, and it is the ratio between $\Pcal_1^{N^{\tau+1}(s_0, a_0)}$ and $\Pcal_2^{N^{\tau+1}(s_0, a_0)}$.
    Then, the KL-divergence is expressed as the following:
    \begin{align*}
        \KL{\PP_1}{\PP_2}
        & = 
        \Expec_{\omega \sim \PP_1} \left[  \log \frac{\PP_1(\omega)}{\PP_2(\omega)} \right]
        \\
        & = \Expec_{\omega \sim \PP_1} \left[ \log  \frac{\Pcal_1^{N^{\tau+1}}(\{s^j(\omega)\}_{j=1}^{N^{\tau+1}(s_0, a_0)})}{\Pcal_2^{N^{\tau+1}}(\{s^j(\omega)\}_{j=1}^{N^{\tau+1}(s_0, a_0)})} \right]
        \\
        & = \Expec_{N^{\tau+1}(s_0, a_0) \sim \PP_1} \left[ \Expec_{\{s^j(\omega)\}_{j=1}^{N^{\tau+1}(s_0, a_0)}} \left[ \log \frac{\Pcal_1^{N^{\tau+1}}(\{s^j(\omega)\}_{j=1}^{N^{\tau+1}(s_0, a_0)})}{\Pcal_2^{N^{\tau+1}}(\{s^j(\omega)\}_{j=1}^{N^{\tau+1}(s_0, a_0)})} \middle| N^{\tau+1}(s_0, a_0) \right] \right]
        \\
        & = \Expec_{N^{\tau+1}(s_0, a_0) \sim \PP_1} \left[ \KL{\Pcal_1^{N^{\tau+1}(s_0, a_0)}(\cdot \mid N^{\tau+1}(s_0, a_0))}{\Pcal_2^{N^{\tau+1}(s_0, a_0)}(\cdot \mid N^{\tau+1}(s_0, a_0))} \right]
        \, .
    \end{align*}
    By the data processing inequality, we have
    \begin{align*}
        \KL{\Pcal_1^{N^{\tau+1}(s_0, a_0)}(\cdot \mid N^{\tau+1}(s_0, a_0))}{\Pcal_2^{N^{\tau+1}(s_0, a_0)}(\cdot \mid N^{\tau+1}(s_0, a_0))} \le  \KL{\Pcal_1^{n}}{\Pcal_2^{n}}
        \, ,
    \end{align*}
    where we use that $N^{\tau+1}(s_0, a_0) \le n$ by the stopping rule of $\tau$.
    The proof is complete.    
\end{proof}

\subsection{Proof of Lemma~\ref{lma:KL bound 2}}

\label{appx:proof of KL bound 2}

\begin{proof}[Proof of \cref{lma:KL bound 2}]
    Let $c(a, b, l)$ be the coefficients of $f(x, a, b)$ so that $f(x, a, b) = (1 - x)^a (1 + x)^b = \sum_{l=0}^{a+b} c(a, b, l) x^l$.
    Then, for given $N_0, N_1$, the squared term becomes
    \begin{align*}
        & \left( \Expec_{Y \sim \rho_1}[ f(Y, N_0, N_1)] -\Expec_{Y \sim \rho_2} [ f(Y, N_0, N_1) ] \right)^2 
        \\
        & = \left( \sum_{l=0}^N c(N_0, N_1, l) \Expec_{Y \sim \rho_1} [ Y^l] - \sum_{l=0}^N c(N_0, N_1, l) \Expec_{Y \sim \rho_2} [ Y^l]  \right)^2  
        \\
        & = \left( \sum_{l=0}^N c(N_0, N_1, l) \left(\Expec_{Y \sim \rho_1} [ Y^l] - \Expec_{Y \sim \rho_2} [ Y^l] \right)  \right)^2 
        \\
        & = \sum_{l=0}^N \sum_{l'=0}^N c(N_0, N_1, l) c(N_0, N_1, l') \left(\Expec_{Y \sim \rho_1} [ Y^l] - \Expec_{Y \sim \rho_2} [ Y^l] \right) \left(\Expec_{Y \sim \rho_1} [ Y^{l'}] - \Expec_{Y \sim \rho_2} [ Y^{l'}]\right)
        \, .
    \end{align*}
    For the ease of notation, let $m_l := \Expec_{Y \sim \rho_1} [ Y^l] - \EE_{Y\sim \rho_2} [ Y^{l}]$.
    Then, we obtain that
    \begin{align*}
        \left( \Expec_{Y \sim \rho_1}[ f(Y, N_0, N_1)] -\Expec_{Y \sim \rho_2} [ f(Y, N_0, N_1) ] \right)^2 
        & = \sum_{l=0}^N \sum_{l' = 0}^N c(N_0, N_1, l)c(N_0, N_1, l') m_l m_{l'}
        \, .
    \end{align*}
    Then, it holds that
    \begin{align*}
        & \Expec_{N_0 \sim B(N, \frac{1}{2})} \left[ \sum_{l=0}^N \sum_{l' = 0}^N c(N_0, N_1, l) c(N_0, N_1, l') m_l m_{l'} \right] 
        \\
        & = \sum_{N_0=0}^N \binom{N}{N_0} \left( \frac{1}{2} \right)^{N} \sum_{l=0}^N \sum_{l' = 0}^N c(N_0, N_1, l) c(N_0, N_1, l') m_l m_{l'} 
        \\
        & = \left( \frac{1}{2} \right)^{N} \sum_{l=0}^N \sum_{l' = 0}^N  m_l m_{l'} \sum_{N_0=0}^N \binom{N}{N_0} c(N_0, N_1, l) c(N_0, N_1, l') 
        \, .
    \end{align*}
    We need the following lemma, whose proof is provided at the end of this subsection.
    \begin{lemma}
    \label{lma:orthogonality of c}
        For nonnegative integers $l, l'$, and $N$, we have the following equality:
        \begin{align}
            \sum_{a = 0}^N \binom{N}{a} c(a, N-a, l) c(a, N-a, l') = \begin{cases}
                2^N \binom{N}{l} & (l = l')
                \\
                0 & (l \ne l')
            \end{cases}
            \, .
            \label{eq:orthgonality of c}
        \end{align}
    \end{lemma}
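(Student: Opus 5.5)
The identity is an orthogonality relation for the Krawtchouk-type coefficients $c(a,N-a,l)$, and I would prove it with a bivariate generating function. Multiply the left-hand side of \eqref{eq:orthgonality of c} by $x^l y^{l'}$ and sum over all $l,l'\ge 0$. Since $\sum_l c(a,N-a,l)x^l = f(x,a,N-a) = (1-x)^a(1+x)^{N-a}$ by definition of $c$, this gives
\begin{align*}
G(x,y) := \sum_{a=0}^N \binom{N}{a} (1-x)^a(1+x)^{N-a}(1-y)^a(1+y)^{N-a}
= \bigl( (1-x)(1-y) + (1+x)(1+y) \bigr)^N ,
\end{align*}
by the binomial theorem.

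Expanding the bracket, the cross terms cancel:
\begin{align*}
(1-x)(1-y) + (1+x)(1+y) = 2 + 2xy .
\end{align*}
Therefore
\begin{align*}
G(x,y) = 2^N (1+xy)^N = \sum_{l=0}^N 2^N \binom{N}{l} x^l y^l .
\end{align*}

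Since both sides are polynomials in $(x,y)$, comparing the coefficient of $x^l y^{l'}$ yields the claim. The coefficient is $2^N\binom{N}{l}$ when $l=l'$ and $0$ otherwise. This also covers $l>N$ or $l'>N$, where the binomial coefficient and all the $c$'s vanish.

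The only point requiring care is justifying the interchange of the finite sums, which is trivial because every sum is finite. No step is a genuine obstacle; the key idea is recognizing that the generating function factorizes.
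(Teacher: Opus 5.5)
Your proposal is correct and follows essentially the same route as the paper: both form the bivariate generating function $\bigl((1-x)(1-y)+(1+x)(1+y)\bigr)^N$, expand it once via the binomial theorem in terms of the $c$'s and once as $2^N(1+xy)^N$, and compare coefficients of $x^l y^{l'}$. Your remark that the identity also holds trivially for $l>N$ or $l'>N$ is a small completeness point the paper leaves implicit.
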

    By this lemma, we conclude that
    \begin{align*}
        \Expec_{N_0, N_1} \left[ \sum_{l=0}^N \sum_{l' = 0}^N c(N_0, N_1, l) c(N_0, N_1, l') m_l m_{l'} \right] 
        & = \sum_{l=0}^N \binom{N}{l} m_l^2 
        \, .
    \end{align*}
\end{proof}

\begin{proof}[Proof of \cref{lma:orthogonality of c}]
    We prove the lemma using generating functions.
    Consider the polynomial $((1-x)(1-y) + (1 +x)(1+y))^N$.
    First, we have
    \begin{align*}
        ((1-x)(1-y) + (1 +x)(1+y))^N
        & = \sum_{a=0}^N \binom{N}{a} \left((1 - x)(1-y)\right)^a \left((1+x)(1+y)\right)^{N-a}
        \\
        & = \sum_{a=0}^N \binom{N}{a} \left( (1 - x)^a(1 + x)^{N-a} \right) \left( (1 - y)^a (1 + y)^{N-a} \right)
        \\
        & = \sum_{a=0}^N \binom{N}{a} \left( \sum_{l=0}^N c(a, N-a, l) x^l \right) \left( \sum_{l'=0}^N c(a, N-a, l') y^{l'} \right)
        \\
        & = \sum_{l=0}^N \sum_{l'=0}^N \sum_{a=0}^N \binom{N}{a} c(a, N-a, l) c(a, N-a, l') x^l y^{l'}
        \, .
    \end{align*}
    Hence, the left-hand side of Eq.~\eqref{eq:orthgonality of c} is the coefficient of $x^l y^{l'}$ in $((1 - x)(1 - y) + (1 + x)(1 + y))^N$.
    On the other hand, noting that $(1 - x)(1 - y) + (1 + x)(1 + y) = 2 + 2xy$, we have
    \begin{align*}
        ((1-x)(1-y) + (1+x)(1+y))^N
        & = (2 (1 + xy))^N
        \\
        & = 2^N \sum_{l=0}^N \binom{N}{l} (xy)^l
        \, .
    \end{align*}
    In this expression, the coefficient of $x^l y^{l'}$ is the right-hand side of Eq.~\eqref{eq:orthgonality of c}, proving the equality between the two.
\end{proof}

\subsection{Proof of Lemma~\ref{lma:KL bound 3}}
\label{appx:proof of KL bound 3}
\begin{proof}[Proof of \cref{lma:KL bound 3}]
    Expanding the expectation using the pmf of binomial distribution, we derive that
    \begin{align}
        \Expec_{N \sim B(n, p)} \left[ a^N \sum_{l=k+1}^N \binom{N}{l} x^l \right]
        & = \sum_{N=0}^n \binom{n}{N} p^{N} (1 - p)^{n - N} a^N \sum_{l=k+1}^N \binom{N}{l} x^l
        \nonumber
        \\
        & = \sum_{l=k+1}^n \left( \sum_{N=l}^n \binom{n}{N} \binom{N}{l} (ap)^N (1 - p)^{n - N}  \right) x^l
        \label{eq:KL bound 3-1}
        \, .
    \end{align}
    Consider a polynomial $g(x) := (ap(1 + x) + 1 - p)^n$.
    We have
    \begin{align*}
        (ap(1 + x) + 1 - p)^n
        & = \sum_{N=0}^n \binom{n}{N} (ap)^N (1 + x)^N (1 - p)^{n - N}
        \\
        & = \sum_{N=0}^n \binom{n}{N} (ap)^N (1 - p)^{n - N} \sum_{l=0}^N \binom{N}{l} x^l
        \\
        & = \sum_{l=0}^n \left( \sum_{N=l}^n \binom{N}{n} \binom{N}{l} (ap)^N (1 - p)^{n - N} \right) x^l 
        \, .
    \end{align*}
    Observe that Eq.~\eqref{eq:KL bound 3-1} is a remainder of $g(x)$ when approximated by a degree $k$ polynomial.
    By Taylor's theorem, for fixed $x$, there exists $\tau \in [0, 1]$ such that
    \begin{align*}
        \sum_{l=k+1}^n \left( \sum_{N=l}^n \binom{n}{N} \binom{N}{l} (ap)^N (1 - p)^{n - N}  \right) x^l
        & = 
        g(x) - \sum_{l=0}^k \left( \sum_{N=l}^n \binom{N}{n} \binom{N}{l} (ap)^N (1 - p)^{n - N} \right) x^l
        \\
        & = \frac{x^{k+1}}{(k+1)!} g^{(x+1)}(\tau x)
        \, .
    \end{align*}
    The proof is completed by noting that $\frac{1}{(k+1)!} g^{(k+1)}(x) = \binom{n}{k+1}(ap)^{k+1}(ap(1 + x) + 1 - p)^{n - k - 1}$.
\end{proof}

\section{Auxiliary Lemmas}

\begin{lemma}[Bennett's inequality, Theorem 3 in \citet{maurer2009empirical}]
\label{lma:bernstein}
    For $n \ge 1$, let $Z_1, \ldots, Z_n$ be i.i.d. copies of a random variable $Z$ that lies in $[0, 1]$.
    Let $\bar{Z}_n := \frac{1}{n} \sum_{i=1}^n Z_i$ and $V := \EE[ (Z - \EE[Z] )^2]$.
    Then, for $\delta \in (0, 1]$, it holds that with probability at least $1 - \delta$,
    \begin{align*}
        \EE[Z] - \bar{Z}_n \le \sqrt{\frac{2 V_n \log \frac{1}{\delta}}{n}} + \frac{\log \frac{1}{\delta}}{3n}
        \, .
    \end{align*}
\end{lemma}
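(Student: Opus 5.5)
This is Bennett's inequality as stated in \citet{maurer2009empirical}; the $V_n$ on the right-hand side should read $V$, the true variance. I would prove it by the Cram\'er--Chernoff method. The one subtle point is the order of operations: the exponent must be optimized \emph{exactly} in $\lambda$ before simplifying. Simplifying to the Bernstein form $\exp(-t^2/(2(nV+t/3)))$ first and then inverting only gives the constant $\frac{2}{3}$ in front of $\frac{\log(1/\delta)}{n}$, not $\frac{1}{3}$.

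\textbf{Step 1 (mgf bound).} Let $X_i := \EE[Z] - Z_i$. These are i.i.d., have mean zero and variance $V$, and satisfy $X_i \le \EE[Z] \le 1$. The function $u \mapsto (e^u - 1 - u)/u^2$ is nondecreasing on $\RR$. Hence for $\lambda > 0$ and $x \le 1$,
\begin{align*}
e^{\lambda x} \le 1 + \lambda x + x^2 (e^\lambda - 1 - \lambda).
\end{align*}
Taking expectations and using $1+w \le e^w$ gives $\EE[e^{\lambda X}] \le \exp\bigl(V(e^\lambda - 1 - \lambda)\bigr)$. Next, expand $e^\lambda - 1 - \lambda = \sum_{j\ge 2}\lambda^j/j!$ and bound $j! \ge 2\cdot 3^{j-2}$. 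This gives
\begin{align*}
e^\lambda - 1 - \lambda \le \frac{\lambda^2}{2(1-\lambda/3)} \quad \text{for } \lambda \in [0,3).
\end{align*}

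\textbf{Step 2 (Chernoff and exact optimization).} Markov's inequality and independence give, for $\varepsilon > 0$,
\begin{align*}
\PP\bigl(\EE[Z]-\bar Z_n \ge \varepsilon\bigr) \le \exp\Bigl(-n\Bigl[\lambda\varepsilon - \frac{V\lambda^2}{2(1-\lambda/3)}\Bigr]\Bigr).
\end{align*}
Maximize the bracket over $\lambda\in[0,3)$ with $u := \varepsilon/V$. A direct computation (set the derivative to zero, i.e.\ $(1-\lambda/3)^{-2} = 1 + 2u/3$) gives the maximal value
\begin{align*}
9V\Bigl(1 + \tfrac{u}{3} - \sqrt{1 + \tfrac{2u}{3}}\Bigr).
\end{align*}
The degenerate case $V=0$ is trivial, since then $Z$ is a.s.\ constant.

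\textbf{Step 3 (exact inversion).} Substitute $s = \sqrt{1+2u/3}$, so $u = \frac{3}{2}(s^2-1)$. The exponent then becomes $\frac{9}{2} nV (s-1)^2$. Set it equal to $L := \log\frac{1}{\delta}$. This gives $s = 1 + \frac{1}{3}\sqrt{2L/(nV)}$, and therefore
\begin{align*}
u = \sqrt{\tfrac{2L}{nV}} + \tfrac{L}{3nV}, \qquad \varepsilon = Vu = \sqrt{\tfrac{2VL}{n}} + \tfrac{L}{3n}.
\end{align*}
With this $\varepsilon$ the tail probability is at most $\delta$, which is exactly the claimed bound.

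I expect this last inversion to be the main obstacle: it is the only place where the sharp constant $\frac{1}{3}$ can be lost. The change of variables $s=\sqrt{1+2u/3}$ is what makes the inversion exact. The remaining steps are routine, namely the monotonicity of $(e^u-1-u)/u^2$ and the series bound in Step 1.
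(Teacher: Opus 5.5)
Your proof is correct, and reading $V_n$ as the true variance $V$ is the right fix. The paper itself applies the lemma with the true variance, for example $\mathrm{Var}(Z)=(\Peff)_z(x)(1-(\Peff)_z(x))$ in the proof of \cref{lma:probability concentration}. With the sample variance and these constants the statement would fail already for $n=1$.

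Your computations check out. The critical point satisfies $(1-\lambda/3)^{-2}=1+2u/3$ and lies in $(0,3)$. The exponent there is $\frac{9}{2}nV(s-1)^2$, and solving $\frac{9}{2}nV(s-1)^2=L$ gives exactly $\varepsilon=\sqrt{2VL/n}+L/(3n)$. Even exact inversion of the Bernstein form gives something strictly larger, so your warning about the order of operations is well founded.

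Two small remarks. You never need this $\lambda$ to be the maximizer, since every $\lambda\in[0,3)$ is admissible in the Chernoff bound. The one fact you use without proof, monotonicity of $u\mapsto(e^u-1-u)/u^2$, follows from the integral-remainder identity $(e^u-1-u)/u^2=\int_0^1(1-r)e^{ru}\,dr$.

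There is nothing to compare against: the paper gives no argument of its own for this lemma and simply imports Theorem 3 of \citet{maurer2009empirical}. Your derivation is the standard sub-gamma Cram\'er--Chernoff proof, specialized to $v=nV$ and $c=1/3$. In that argument, a log-mgf bound $\frac{v\lambda^2}{2(1-c\lambda)}$ implies $\mathbb{P}(S\ge\sqrt{2vt}+ct)\le e^{-t}$. So you have supplied a self-contained proof of the cited result rather than a different route.
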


\begin{lemma}[Theorem 4 in \citet{maurer2009empirical}]
\label{lma:empirical bernstein}
    For $n \ge 2$, let $Z_1, \ldots, Z_n$ be i.i.d. copies of a random variable $Z$ that lies in $[0, 1]$.
    Let $\bar{Z}_n := \frac{1}{n} \sum_{i=1}^n Z_i$ and $\hat{V}_n := \frac{1}{n} \sum_{i=1}^n (Z_i - \bar{Z}_n )^2$.
    Then, for $\delta \in (0, 1]$, it holds that with probability at least $1 - \delta$,
    \begin{align*}
        \EE[Z] - \bar{Z}_n \le 2 \sqrt{\frac{ \hat{V}_n \log \frac{2}{\delta}}{n}} + \frac{14 \log \frac{2}{\delta}}{3n}
        \, .
    \end{align*}
\end{lemma}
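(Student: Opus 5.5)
This is Empirical Bernstein (Maurer--Pontil), and I would prove it by combining \cref{lma:bernstein} with a concentration bound for the sample standard deviation. The plan has three steps.

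First, apply \cref{lma:bernstein} at confidence $\delta/2$. With probability at least $1-\delta/2$,
\begin{align*}
\EE[Z]-\bar Z_n \le \sqrt{\frac{2V\log(2/\delta)}{n}}+\frac{\log(2/\delta)}{3n}.
\end{align*}

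Second, replace $\sqrt{V}$ by an empirical quantity. Let $\tilde V_n := \frac{n}{n-1}\hat V_n$ be the unbiased sample variance. It can be written as the U-statistic $\frac{1}{n(n-1)}\sum_{i\neq j}\frac{(Z_i-Z_j)^2}{2}$. This is a self-bounding function of the $Z_i\in[0,1]$, so the self-bounding concentration inequality (Maurer's lower-tail bound) gives, with probability at least $1-\delta/2$,
\begin{align*}
\sqrt{V}\le \sqrt{\tilde V_n}+\sqrt{\frac{2\log(2/\delta)}{n-1}}.
\end{align*}
Obtaining this inequality is the main obstacle. I would first try to establish the self-bounding property directly: verify that removing one sample changes $(n-1)\tilde V_n$ by an amount in $[0,1]$, and that these changes sum to at most $(n-1)\tilde V_n$. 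I would then invoke the lower-tail entropy-method bound $\PP(\EE W - W\ge t)\le \exp(-t^2/(2\EE W))$ for $W=(n-1)\tilde V_n$ with $\EE W = (n-1)V$. Solving the resulting quadratic in $\sqrt{V}$ yields the displayed bound.

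Third, combine the two events by a union bound. Substituting the second step into the first,
\begin{align*}
\EE[Z]-\bar Z_n \le \sqrt{\frac{2\tilde V_n \log(2/\delta)}{n}}+\frac{2\log(2/\delta)}{\sqrt{n(n-1)}}+\frac{\log(2/\delta)}{3n}.
\end{align*}
It remains to match the stated constants. Since $\tilde V_n=\frac{n}{n-1}\hat V_n\le 2\hat V_n$ for $n\ge2$, the first term is at most $2\sqrt{\hat V_n\log(2/\delta)/n}$. Since $\frac{1}{\sqrt{n(n-1)}}\le \frac{\sqrt2}{n}$, the remaining two terms total at most $(2\sqrt2+\tfrac13)\frac{\log(2/\delta)}{n}\le \frac{14\log(2/\delta)}{3n}$. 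This gives the claimed inequality with overall probability at least $1-\delta$.
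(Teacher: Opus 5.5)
Your architecture matches Maurer and Pontil's proof of their Theorem 4, which the paper only cites without proof: Bennett at $\delta/2$, concentration of the sample standard deviation at $\delta/2$, then a union bound. The paper's constants are a loosening of theirs, via $\sqrt{2\tilde V_n}\le 2\sqrt{\hat V_n}$ and $\frac{7}{3(n-1)}\le\frac{14}{3n}$. Your first and third steps, the quadratic solve and the constant bookkeeping are all correct. The problem is the condition you plan to verify in the second step: the one-sample changes of $W=(n-1)\tilde V_n=\sum_i (Z_i-\bar Z_n)^2$ do \emph{not} sum to at most $W$.

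Write $\bar Z^{(k)}$ for the mean of the other $n-1$ samples and $W^{(k)}$ for their sum of squared deviations. Then $d_k:=W-W^{(k)}=\frac{n-1}{n}(Z_k-\bar Z^{(k)})^2=\frac{n}{n-1}(Z_k-\bar Z_n)^2$. Hence $\sum_k d_k=\frac{n}{n-1}W>W$ whenever $W>0$.

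No other choice of the functions $f_k$ fixes this. Indeed $W^{(k)}=\min_{y\in[0,1]}W(Z_1,\dots,y,\dots,Z_n)$, attained at $y=\bar Z^{(k)}$. So any $f_k\le W$ depending only on the other coordinates satisfies $W-f_k\ge d_k$. Thus $W$ is not self-bounding with constant $1$, and the lower tail $\exp(-t^2/(2\EE W))$ cannot be invoked on the grounds you give.

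There are two easy repairs.

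The first is to use the squared (weakly self-bounding) condition, which is what Maurer--Pontil's sample-variance bound actually rests on. Since $0\le d_k\le\frac{n-1}{n}\le 1$, you get $\sum_k d_k^2\le\frac{n-1}{n}\sum_k d_k=W$. The entropy-method lower tail for $\sum_k d_k^2\le W$ with $d_k\le 1$ then gives $\PP(\EE W-W\ge t)\le\exp(-t^2/(2\EE W))$. This yields exactly your displayed bound $\sqrt V\le\sqrt{\tilde V_n}+\sqrt{2\log(2/\delta)/(n-1)}$.

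The second is to keep the linear condition but with constant $a=\frac{n}{n-1}$. Its lower tail $\exp(-t^2/(2a\EE W))$ gives $\sqrt V\le\sqrt{\tilde V_n}+\frac{\sqrt{2n\log(2/\delta)}}{n-1}$. The lower-order terms then total $\frac{2\log(2/\delta)}{n-1}+\frac{\log(2/\delta)}{3n}\le\frac{13\log(2/\delta)}{3n}$, still within the claimed $\frac{14}{3n}$.

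With either fix, the rest of your argument goes through unchanged.
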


\begin{lemma}[Hoeffding's inequality, Theorem 2 in~\citet{hoeffding1963probability}]
\label{lma:Hoeffding}
    Let $\left\{ \xi_i \right\}_{i=1}^n$ be a sequence of real-valued random variables adapted to a filtration $\left\{ \Fcal_i \right\}_{i=0}^n$.
    Suppose that there exist $a_i < b_i$ such that $\xi_i \in [a_i, b_i]$ holds almost surely for all $i \in [n]$.
    Then, for $\delta \in (0, 1]$, the following inequality holds with probability at least $1 - \delta$:
    \begin{equation*}
        \sum_{i=1}^n \left( \xi_i - \EE \left[ \xi_i \mid \Fcal_{t-1} \right] \right) \le \sqrt{ \frac{1}{2}\left(\sum_{i=1}^n (b_i - a_i)^2 \right) \log \frac{1}{\delta}}
        \, .
    \end{equation*}
\end{lemma}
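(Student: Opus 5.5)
The statement is the Azuma--Hoeffding inequality for bounded martingale differences (the $\Fcal_{t-1}$ in the display should read $\Fcal_{i-1}$), and I would prove it by the Chernoff exponential-moment method. Define the martingale differences $D_i := \xi_i - \EE[\xi_i \mid \Fcal_{i-1}]$ and the partial sums $M_j := \sum_{i=1}^j D_i$. Since $\xi_i \in [a_i, b_i]$ almost surely with deterministic $a_i < b_i$, conditionally on $\Fcal_{i-1}$ the variable $D_i$ lies in the interval $[a_i - \EE[\xi_i \mid \Fcal_{i-1}],\, b_i - \EE[\xi_i \mid \Fcal_{i-1}]]$. This interval has the fixed length $b_i - a_i$, and $\EE[D_i \mid \Fcal_{i-1}] = 0$.

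The key ingredient is Hoeffding's lemma. If $Y$ has mean zero and lies in an interval of length $c$, then $\EE[e^{\lambda Y}] \le e^{\lambda^2 c^2/8}$ for all $\lambda \in \RR$. I would prove it in three steps.
\begin{itemize}
\item By convexity of $y \mapsto e^{\lambda y}$ on $[\alpha, \beta]$, $e^{\lambda y}$ is bounded by the chord through the endpoints. Taking expectations and using $\EE[Y] = 0$ gives $\EE[e^{\lambda Y}] \le e^{L(u)}$, where $u := \lambda(\beta - \alpha)$, $p := -\alpha/(\beta - \alpha)$, and $L(u) := -pu + \log(1 - p + p e^u)$.
\item A direct computation shows $L(0) = L'(0) = 0$. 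Also $L''(u) = q(1-q)$ for some $q \in [0,1]$, so $L''(u) \le 1/4$.
\item Taylor's theorem then gives $L(u) \le u^2/8$.
\end{itemize}
This calculus step is the only real obstacle, and it is routine. I would apply the lemma conditionally, which is legitimate because the interval endpoints are $\Fcal_{i-1}$-measurable and its length is deterministic. This gives $\EE[e^{\lambda D_i} \mid \Fcal_{i-1}] \le e^{\lambda^2 (b_i - a_i)^2/8}$.

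Next I iterate with the tower property. Write $\EE[e^{\lambda M_n}] = \EE[e^{\lambda M_{n-1}} \EE[e^{\lambda D_n} \mid \Fcal_{n-1}]]$ and peel off one factor at a time. This yields $\EE[e^{\lambda M_n}] \le \exp(\lambda^2 \Sigma / 8)$ with $\Sigma := \sum_{i=1}^n (b_i - a_i)^2$. For $\lambda > 0$, Markov's inequality gives $\PP(M_n \ge t) \le \exp(-\lambda t + \lambda^2 \Sigma/8)$. Choosing $\lambda = 4t/\Sigma$ gives $\PP(M_n \ge t) \le \exp(-2t^2/\Sigma)$.

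Finally I set the right-hand side equal to $\delta$, which means taking $t = \sqrt{\tfrac12 \Sigma \log \tfrac1\delta}$. Then with probability at least $1 - \delta$ we have $M_n < t$, which is exactly the claimed bound.
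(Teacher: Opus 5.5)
Your proof is correct. The paper does not prove this lemma; it only cites Hoeffding (1963). Your argument is the standard exponential-moment proof behind that citation: Hoeffding's lemma via the chord bound and $L''(u) = q(1-q) \le 1/4$, applied conditionally on $\Fcal_{i-1}$, peeled off with the tower property, and closed by a Chernoff bound with $\lambda = 4t/\Sigma$.

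One point is worth making explicit. Hoeffding's Theorem 2 is formulated for independent summands, whereas the lemma as written concerns an adapted sequence, i.e.\ it is the Azuma--Hoeffding martingale form. Your conditional use of Hoeffding's lemma is exactly what covers that case. It is valid because the shifted interval has $\Fcal_{i-1}$-measurable endpoints and deterministic length $b_i - a_i$, and because $e^{\lambda M_{n-1}}$ is bounded and $\Fcal_{n-1}$-measurable. Your reading of $\Fcal_{t-1}$ as $\Fcal_{i-1}$ is the right correction.

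The paper's applications of the lemma involve only independent variables $|\theta_i|$, in the $\ell_1$-estimation proposition and in the lemma constructing the two priors for the lower bound. Both invoke the tail form $\PP(M_n \ge t) \le \exp(-2t^2/\Sigma)$, which you obtain as an intermediate step. The case $\delta = 1$, where $t = 0$ and $\lambda = 0$, is vacuous and needs no separate treatment.
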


\begin{lemma}[Pinsker's inequality, Lemma 2.5 in \citet{tsybakov2008introduction}]
\label{lma:Pinsker}
    For two probability measures $P$ and $Q$, we have $V(P, Q) \le \sqrt{\frac{1}{2}\KL{P}{Q}}$, where $V(P, Q)$ is the total variation of the two measures.
\end{lemma}

\end{document}